\documentclass[11pt, oneside]{article}

\usepackage{etoolbox}
\newcommand{\arxiv}[1]{\iftoggle{rlc}{}{#1}}
\newcommand{\rlc}[1]{\iftoggle{rlc}{#1}{}}
\newtoggle{rlc}
\global\togglefalse{rlc}

\usepackage{geometry}
\geometry{letterpaper}
\usepackage{graphicx}	
\usepackage{amssymb}
\usepackage{amsfonts,latexsym,amsthm,amssymb,amsmath,amscd,euscript}
\usepackage{bbm}
\usepackage{framed}
\usepackage{fullpage}
\usepackage{mathrsfs}
\usepackage{hyperref}

\hypersetup{
  colorlinks=true,
  linkcolor=blue,
  filecolor=blue,
  citecolor = black,      
  urlcolor=cyan,
}
\usepackage{accents}
\usepackage{setspace}
\hypersetup{colorlinks=true,citecolor=blue,urlcolor=black,linkbordercolor={1 0 0}}
\usepackage{tikz-cd}
\newcount\Comments  %
\Comments=0

\ifnum\Comments=1
\usepackage[inline]{showlabels}

\fi
\usepackage{turnstile}
\usepackage{mathpartir}
\usepackage{tikz}
\usepackage{xspace}

\usepackage{algorithm}
\usepackage{verbatim}
\usepackage[noend]{algpseudocode}

\newcommand{\nc}{\newcommand}

\usepackage[nameinlink,capitalize]{cleveref}
\crefformat{equation}{#2(#1)#3}
\Crefformat{equation}{#2(#1)#3}

\Crefformat{figure}{#2Figure #1#3}
\Crefname{assumption}{Assumption}{Assumptions}
\Crefformat{assumption}{#2Assumption #1#3}
\Crefname{subsubsection}{Section}{Sections}
\crefformat{subsubsection}{#2Section #1#3}
\Crefformat{subsubsection}{#2Section #1#3}

\nc{\sups}[1]{^{\scriptscriptstyle{#1}}}
\nc{\subs}[1]{_{\scriptscriptstyle{#1}}}

\input{widebar}
\newcommand{\wb}{\widebar}

\newcommand{\epfinal}{\varepsilon_{\mathsf{final}}}
\newcommand{\epapx}{\varepsilon_{\mathsf{apx}}}

\newcommand{\Pilin}{\Pi^{\mathsf{lin}}}
\newcommand{\Pilinp}[1][\sigma]{\Pi^{\mathsf{Plin}, {#1}}}
\newcommand{\Pilinpp}{\Pi^{\mathsf{Plin}}}
\newcommand{\tilPilinp}[1][\sigma]{\til{\Pi}^{\mathsf{Plin}, {#1}}}

\newcommand{\Sm}[1][\sigma]{\mathsf{S}_{#1}}
\newcommand{\nrm}[1]{\left\| {#1} \right\|_2}
\arxiv{\newcommand{\citep}{\cite}}
\arxiv{\newcommand{\citet}{\cite}}

\nc{\cphi}{{\alpha}}
\nc{\Actor}{\texttt{Actor}\xspace}
\nc{\EstFeature}{\texttt{EstFeature}\xspace}
\nc{\ExpFTPL}{\texttt{ExpFTPL}\xspace}
\nc{\dist}{\mathrm{dist}}
\nc{\Bquad}{B^{\mathsf{quad}}}
\nc{\thetar}{\theta^{\mathsf{r}}}
\nc{\sign}{\mathrm{sign}}

\makeatletter
\newtheorem*{rep@theorem}{\rep@title}
\newcommand{\newreptheorem}[2]{%
\newenvironment{rep#1}[1]{%
 \def\rep@title{#2 \ref{##1}}%
 \begin{rep@theorem}}%
 {\end{rep@theorem}}}
\makeatother

\makeatletter
\newcommand\xlabel[2][]{\phantomsection\def\@currentlabelname{#1}\label{#2}}
\makeatother

\theoremstyle{plain}
\newtheorem{theorem}{Theorem}
\newreptheorem{theorem}{Theorem}
\newtheorem{lemma}[theorem]{Lemma}
\newtheorem{corollary}[theorem]{Corollary}

\newtheorem{assumption}[theorem]{Assumption}
\theoremstyle{definition}
\newtheorem{definition}{Definition}
\newtheorem{defn}[definition]{Definition}

\newtheorem{remark}[definition]{Remark}

\numberwithin{theorem}{section}
\numberwithin{definition}{section}

\nc{\DMO}{\DeclareMathOperator}

\DeclareMathOperator*{\argmax}{arg\,max}

\DMO{\prox}{prox}
\DMO{\UCB}{UCB}
\DMO{\LCB}{LCB}
\nc{\phidiff}{\phi\sups{\Delta}}
\nc{\pexp}{q_{\mathrm{exp}}}
\nc{\nn}{\nonumber}
\nc{\rk}{\mathrm{rk}}
\nc{\brk}[3]{{\rm br}_{#1}^{#2}({#3})}
\nc{\co}{{\rm co}}
\nc{\br}[2]{{\rm br}^{#1}({#2})}
\nc{\depth}[1]{{\rm d}({#1})}
\nc{\tA}{\textsc{A}}
\nc{\child}[2]{{\rm ch}_{#1}({#2})}
\nc{\parent}[1]{{\rm pa}({#1})}
\nc{\dg}{\dagger}
\nc{\bB}{\mathbf{B}}
\nc{\be}{\mathbf{e}}
\nc{\Span}{{\rm Span}}
\nc{\unif}{\mathsf{unif}}
\nc{\indsig}[2]{\mathcal{I}_{#1}({#2})}
\nc{\total}{{\rm fin}}
\nc{\early}{{\rm pre}}
\nc{\zsink}{z_{\rm sink}}
\nc{\lowv}{{\rm low}}
\nc{\ol}{\overline}
\nc{\ul}{\underline}
\nc{\madec}[3]{\texttt{ma-dec}_{#1}({#2}, {#3})}
\nc{\madeco}[1]{\texttt{ma-dec}_{#1}}
\nc{\madecd}[3]{\texttt{ma-dec}^{\texttt{d}}_{#1}({#2}, {#3})}
\nc{\SF}{\mathscr{F}}
\nc{\SP}{\mathscr{P}}
\nc{\SPc}{\wb{\mathscr{P}}}
\nc{\SB}{\mathscr{B}}
\nc{\SC}{\mathscr{C}}
\nc{\BS}{\mathbb{S}}
\nc{\PiMarkov}{\Pi^{\mathsf{M}}}
\nc{\PiM}{\PiMarkov}
\nc{\piopt}{\pi^{\mathsf{opt}}}
\nc{\Critic}{\texttt{Critic}\xspace}
\nc{\trunc}[2]{\mathsf{trunc}_{#2}({#1})}
\nc{\sbl}{of strong Bellman type\xspace}
\nc{\inormal}[1][\Phi, u,v]{\til{N}_{{#1}}}

\nc{\gamvec}{\gamma}
\nc{\til}{\widetilde}
\nc{\td}{\tilde}
\nc{\wh}{\widehat}
\nc{\todo}[1]{\ifnum\Comments=1 {\color{red}  [TODO: #1]}\fi}
\nc{\old}[1]{\ifnum\Comments=1 {\color{brown}  [OLD: #1]}\fi}
\nc{\noah}[1]{\ifnum\Comments=1 {\color{purple} [ng: #1]}\fi}
\nc{\dhruv}[1]{\ifnum\Comments=1 {\color{magenta} [dr: #1]}\fi}
\nc{\BP}{\mathbb{P}}
\nc{\BI}{\mathbb{I}}
\nc{\midpoint}[1][\Phi,\phi_1,\phi_2]{\mu^{\star}_{{#1}}}

\nc{\fools}[3]{\MF_{#3}({#1}, {#2})}
\nc{\fool}[2]{\MF({#1},{#2})}
\nc{\clip}[2]{{\rm clip}\left[ \left. {#1} \right| {#2} \right]}
\nc{\imax}{\omega}
\DMO{\conv}{conv}
\nc{\MH}{\mathcal{H}}
\nc{\MV}{\mathcal{V}}
\nc{\MC}{\mathcal{C}}
\nc{\MI}{\mathcal{I}}
\nc{\st}{\star}
\nc{\lng}{\langle}
\nc{\rng}{\rangle}
\DMO{\OOPT}{opt}
\nc{\dopt}[2]{\ell_{\OOPT}({#1},{#2})}
\nc{\grad}{\nabla}
\nc{\MG}{\mathcal{G}}
\nc{\MP}{\mathcal{P}}
\nc{\PP}{\mathbb{P}}
\nc{\TT}{\mathbb{T}}
\nc{\TTmax}{\TT_{\max}}
\DMO{\REG}{Reg}
\DMO{\WREG}{wReg}
\nc{\reg}[2]{{\Delta}_{{#1}}({#2})}
\nc{\wreg}[2]{{\Delta}^{\rm w}_{{#1}}({#2})}
\nc{\Reg}[2]{{\REG}_{{#1}}({#2})}
\nc{\wReg}[2]{{\WREG}_{{#1}}({#2})}
\DMO{\Ham}{Ham}
\DMO{\Gap}{Gap}
\DMO{\GD}{GD}
\DMO{\GDA}{GDA}
\DMO{\EG}{EG}
\nc{\TE}{\til{\E}}
\nc{\Var}{\mathbb{V}}
\DMO{\Cov}{Cov}
\DMO{\OGDA}{OGDA}
\DMO{\Unif}{Unif}
\DMO{\Tr}{Tr}
\nc{\Qu}{\ul{Q}}
\nc{\Qo}{\ol{Q}}
\nc{\Ro}{\ol{R}}
\nc{\Vu}{\ul{V}}
\nc{\Vo}{\ol{V}}
\nc{\RanQ}{\Delta Q}
\nc{\RanV}{\Delta V}
\nc{\clipQ}{\Delta \breve{Q}}
\nc{\frzQ}{\Delta \mathring{Q}}
\nc{\clipV}{\Delta \breve{V}}
\nc{\clipdelta}{\breve{\delta}}
\nc{\cliptheta}{\breve{\theta}}
\nc{\delmin}{\Delta_{{\rm min}}}
\nc{\delmins}[1]{\Delta_{{\rm min},{#1}}}
\nc{\gapfinal}[1]{\max \left\{ \frac{\frzQ_{{#1}}^{k^\st}(x,a)}{2H}, \frac{\delmin}{4H} \right\}}
\nc{\post}[2]{R({#1}; {#2})}
\nc{\posts}[3]{R_{#3}({#1}; {#2})}

\nc{\algnst}[1]{\begin{align*}#1\end{align*}}
\nc{\algn}[1]{\begin{align}#1\end{align}}
\nc{\matx}[1]{\left(\begin{matrix}#1\end{matrix}\right)}
\renewcommand{\^}[1]{^{(#1)}}

\nc{\nuu}{\nu}

\nc{\bel}[1]{\mathbf{b}({#1})}
\nc{\nbel}[1]{\bar{\mathbf{b}}({#1})}
\nc{\sbel}[2]{\mathbf{b}'_{#1}({#2})}
\nc{\nsbel}[2]{\bar{\mathbf{b}}'_{#1}({#2})}

\nc{\bv}{\mathbf{v}}
\nc{\bone}{\mathbf{1}}
\nc{\bX}{\mathbf{X}}
\nc{\bY}{\mathbf{Y}}
\nc{\bG}{\mathbf{G}}
\nc{\bz}{\mathbf{z}}
\nc{\bw}{\mathbf{w}}
\nc{\bA}{\mathbf{A}}
\nc{\bJ}{\mathbf{J}}
\nc{\bK}{\mathbf{K}}
\nc{\bb}{\mathbf{b}}
\nc{\ba}{\mathbf{a}}
\nc{\bc}{\mathbf{c}}
\nc{\bC}{\mathbf{C}}
\nc{\BR}{\mathbb R}
\nc{\BA}{\mathbb{A}}
\nc{\BC}{\mathbb C}
\nc{\bx}{\mathbf{x}}
\nc{\bS}{\mathbf{S}}
\nc{\bM}{\mathbf{M}}
\nc{\bR}{\mathbf{R}}
\nc{\bN}{\mathbf{N}}
\nc{\NN}{\mathbb{N}}
\nc{\by}{\mathbf{y}}
\nc{\sy}{y}
\nc{\sx}{x}

\nc{\MO}{\mathcal O}
\nc{\MU}{\mathcal{U}}
\nc{\ME}{\mathcal{E}}
\nc{\MN}{\mathcal{N}}
\nc{\MK}{\mathcal{K}}
\nc{\MM}{\mathcal{M}}
\nc{\MS}{\mathcal{S}}
\nc{\MT}{\mathcal{T}}
\nc{\BF}{\mathbb F}
\nc{\BQ}{\mathbb Q}
\nc{\MX}{\mathcal{X}}
\nc{\MA}{\mathcal{A}}
\nc{\MD}{\mathcal{D}}
\nc{\MB}{\mathcal{B}}
\nc{\MZ}{\mathcal{Z}}
\nc{\MJ}{\mathcal{J}}
\nc{\MW}{\mathcal{W}}
\nc{\MR}{\mathcal{R}}
\nc{\MY}{\mathcal{Y}}
\nc{\BZ}{\mathbb Z}
\nc{\BN}{\mathbb N}
\nc{\ep}{\epsilon}
\nc{\epbe}{\varepsilon_{\mathsf{BE}}}
\nc{\epout}{\varepsilon_{\mathsf{outlier}}}
\nc{\bellc}[1][h]{\MT_{#1}^\circ}
\nc{\vep}{\varepsilon}
\nc{\gapfn}[1]{\varepsilon_{#1}}
\nc{\ggapfn}[2]{\varphi_{#1}({#2})}
\nc{\epsahk}{\gapfn{0}}
\nc{\BH}{\mathbb H}
\nc{\BG}{\mathbb{G}}
\nc{\D}{\Delta}
\nc{\MF}{\mathcal{F}}
\nc{\One}[1]{\mathbbm{1}\{{#1}\}}
\nc{\bOne}{\mathbf{1}}
\nc{\Aopt}{\mathcal{A}^{\rm opt}}
\nc{\Amul}{\mathcal{A}^{\rm mul}}

\nc{\SQ}{\mathsf Q}

\nc{\DO}{\accentset{\circ}{\D}}
\nc{\mf}{\mathfrak}
\nc{\mfp}{\mathfrak{p}}
\nc{\mfq}{\mf{q}}
\nc{\Sp}{\mbox{Spec}}
\nc{\Spm}{\mbox{Specm}}
\nc{\hookuparrow}{\mathrel{\rotatebox[origin=c]{90}{$\hookrightarrow$}}}
\nc{\hookdownarrow}{\mathrel{\rotatebox[origin=c]{-90}{$\hookrightarrow$}}}
\nc{\hra}{\hookrightarrow}
\nc{\tra}{\twoheadrightarrow}
\nc{\sgn}{{\rm sgn}}
\nc{\aut}{{\rm Aut}}
\nc{\Hom}{{\rm Hom}}
\nc{\img}{{\rm Im}}
\DMO{\id}{Id}
\DMO{\supp}{supp}
\DMO{\KL}{KL}
\nc{\kld}[2]{\KL({#1}||{#2})}
\nc{\ren}[2]{D_2({#1}||{#2})}
\nc{\chisq}[2]{\chi^2({#1}||{#2})}
\nc{\tvd}[2]{D_{\mathsf{TV}}({#1}, {#2})}
\nc{\hell}[2]{D_{\mathsf{H}}^2({#1}, {#2})}
\nc{\dbi}[3][\pi]{D_{\mathsf{bi}}^{#1}({#2} \| {#3})}
\DMO{\BSS}{BSS}
\DMO{\BES}{BES}
\DMO{\BGS}{BGS}
\DMO{\poly}{poly}
\nc{\indep}{\perp}
\DMO{\sink}{sink}
\nc{\fp}[1]{\MP_1({#1})}
\nc{\BO}{\mathbb{O}}
\nc{\BT}{\mathbb{T}}

\nc{\RR}{\mathbb{R}}
\nc{\Gradient}{\nabla}
\DMO{\diag}{diag}
\nc{\norm}[1]{\left \lVert #1 \right \rVert}
\nc{\EE}{\mathop{\mathbb{E}}}
\nc{\MQ}{\mathcal{Q}}
\nc{\ML}{\mathcal{L}}
\nc{\cPhi}{\bar \Phi}
\nc{\SA}{\mathscr{A}}

\DMO{\PR}{Pr}
\renewcommand{\Pr}{\PR}
\nc{\E}{\mathbb{E}}
\nc{\ra}{\rightarrow}
\renewcommand{\t}{\top}
\nc{\hc}{\{0,1\}^n}
\nc{\pmhc}[1]{\{-1,1\}^{#1}}
\nc{\Dbnd}{D}
\nc{\Bbnd}{B}
\nc{\brew}{b_{\mathsf{rew}}}
\nc{\binit}{b_{\mathsf{init}}}
\nc{\Cvg}{\mathscr{C}}
\nc{\del}[1]{}
\nc{\Pisoft}{\Pi^{\mathsf{soft}}}
\nc{\pisoft}{\pi^{\mathsf{soft}}}

\title{The Role of Inherent Bellman Error in Offline Reinforcement Learning with Linear Function Approximation}
\author{Noah Golowich\thanks{Email: \texttt{nzg@mit.edu}. Supported by a Fannie \& John Hertz Foundation Fellowship and an NSF Graduate Fellowship.} \\ MIT \and Ankur Moitra\thanks{Email: \texttt{moitra@mit.edu}. Supported in part by a Microsoft Trustworthy AI Grant, an ONR grant and a David and Lucile Packard Fellowship.} \\ MIT}
\date{\today}

\begin{document}
\maketitle

\begin{abstract}
  In this paper, we study the offline RL problem with linear function approximation. Our main structural assumption is that the MDP has \emph{low inherent Bellman error}, which stipulates that linear value functions have linear Bellman backups with respect to the greedy policy. This assumption is natural in that it is essentially the minimal assumption required for value iteration to succeed. We give a computationally efficient algorithm which succeeds under a \emph{single-policy coverage} condition on the dataset, namely which outputs a policy whose value is at least that of any policy which is well-covered by the dataset. Even in the setting when the inherent Bellman error is 0 (termed \emph{linear Bellman completeness}), our algorithm yields the first known guarantee under single-policy coverage. \arxiv{

    }In the setting of positive inherent Bellman error $\epbe > 0$, we show that the suboptimality error of our algorithm scales with $\sqrt{\epbe}$. Furthermore, we prove that the scaling of the suboptimality with $\sqrt{\epbe}$ cannot be improved for \emph{any} algorithm. Our lower bound stands in contrast to many other settings in reinforcement learning with misspecification, where one can typically obtain performance that degrades \emph{linearly} with the misspecification error.
\end{abstract}

\section{Introduction}

The study of \emph{reinforcement learning (RL)} focuses on the problem of sequential decision making in a stateful and stochastic environment, typically modeled as a \emph{Markov Decision Process (MDP)}. An agent aims to maximize its expected reward over a finite time horizon $H$, also known as its \emph{value}, by choosing a \emph{policy}, or a mapping from states to actions. %
In the \emph{offline} (or \emph{batch}) \emph{RL} problem, an agent's only knowledge of its environment comes from a dataset $\MD$ consisting of samples drawn from the state transition distributions and reward functions of the MDP. Given $\MD$, the learning algorithm aims to compute a policy $\hat \pi$ whose value is at least as good as that of some \emph{reference policy} $\pi^\st$. 

A key challenge in offline RL is understanding how to choose the policy $\hat \pi$ when the dataset $\MD$ exhibits incomplete coverage of the environment, meaning that the transitions from many states are not represented in $\MD$. The naive approach to this problem would proceed via some variant of \emph{value iteration}. At each step $h = H, H-1, \ldots, 1$ of the time horizon, given a policy acting at steps $h' > h$, value iteration uses $\MD$ to compute estimates of the \emph{value function} of the policy, which maps each state-action pair to the policy's expected reward starting at that state-ation pair. Value iteration then greedily chooses a policy at step $h$ which maximizes this estimated value, and proceeds to step $h-1$. Unfortunately, this approach suffers from the fact that state-action pairs which are \emph{undercovered} by the dataset $\MD$ may have, due to random fluctuations in $\MD$, overly optimistic estimates of their value. The chosen policy $\hat \pi$ will then aim to visit such state-action pairs, which could in fact be very suboptimal. 

A common approach to correcting the above issue is the principle of \emph{pessimism} (e.g., \cite{fujimoto2019offpolicy,xie2021bellman,liu2020provably,yu2020model,jin2021pessimism}), which chooses $\hat \pi$ so as to maximize an \emph{underestimate} of its value, where the underestimate is chosen according to constraints that force it to be consistent with the dataset. By ensuring that $\hat \pi$ takes actions whose value is robust to random fluctuations in the dataset, pessimistic algorithms typically ensure a guarantee of the following form: for any policy $\pi^\st$ whose state-action pairs are well-covered by $\MD$, the value of $\hat \pi$ is guaranteed to be at least that of $\pi^\st$. The assumption made on $\pi^\st$ here is typically known as a \emph{single-policy coverage} condition (formalized in \cref{def:coverage}); along with several variants, it has come to represent a gold standard for obtaining offline RL guarantees. 

\paragraph{Function approximation \& misspecification in offline RL.} As the state and action spaces encountered in practice tend to be large or infinite, much of the theoretical work on offline RL makes \emph{function approximation} assumptions, which introduce function classes $\MQ_h$ for steps $h \in [H]$, whose elements can be used to approximate the value function for a policy. %
Due to the complexity of the offline RL problem, our understanding of the optimal guarantees attainable remains limited even for the fundamental special case in which the classes $\MQ_h$ are \emph{linear}, which is the focus of this paper. Concretely, letting the state and action spaces be denoted by $\MX$ and $\MA$, respectively, we assume the following: for some known \emph{feature mappings} $\phi_h : \MX \times \MA \to \BR^d$, $\MQ_h$ is the class of functions $(x,a) \mapsto \lng \phi_h(x,a), w \rng$, where $w \in \BR^d$ belongs to some bounded set. As an added benefit of focusing on the linear setting, we will be able to obtain end-to-end computationally efficient algorithms, without reliance on a regression oracle for $\MQ_h$. 

It is unreasonable to expect that elements of the classes $\MQ_h$ coincide exactly with the actual value functions for the underlying MDP. Therefore, it is essential to understand the price paid by having \emph{misspecification error} in $\MQ_h$. To quantify this misspecification error we use the \emph{inherent Bellman error}  \citep{munos2008finite,munos2005error}, denoted $\epbe$, which describes the maximum distance between the Bellman backup of any function in $\MQ_{h+1}$ with respect to the greedy policy and a best-approximating function in $\MQ_h$. The inherent Bellman error is particularly natural since it is exactly what quantifies the degree to which value iteration succeeds: in particular, the regression problems solved by value iteration are $\epbe$-approximatly well-specified. As a result, it can be shown that, if $\MD$ covers the entire state space in an appropriate sense and the inherent Bellman error is bounded by $\epbe$, then value iteration produces a policy whose suboptimality may be bounded by $\poly(d,H) \cdot \epbe$ \citep{munos2008finite}. %
 Moreover, the linear growth of suboptimality error with respect to $\epbe$ cannot be improved in general \citep{tsitsiklis1996feature}. 
 
 The results of \citet{munos2008finite,munos2005error} discussed above serve as a useful sanity check on the reasonableness of low inherent Bellman error, but do little to address the problems encountered in typical offline RL situations as a result of their stringent assumption that $\MD$ covers the full state space. In most such settings, ranging from autonomous driving to healthcare, one should expect the offline data to be gathered in regions of the state space that result from executing reasonably good policies. (For instance, one should not expect much offline data involving states corresponding to driving a car off a cliff.) Thus, positive results under only single-policy coverage conditions are much more desirable. 
 Our main goal is to address the following question: \emph{is boundedness of the inherent Bellman error sufficient for computationally efficient offline RL under only a single-policy coverage condition?} Prior to our work, this question was open even for the special case of 0 inherent Bellman error, which is typically known as \emph{linear Bellman completeness}.

\subsection{Main results} 
Our first result is a positive answer to our main question; to state it, we need the following notation (introduced formally in \cref{sec:prelim}). An offline RL algorithm takes as input a dataset $\MD$ of size $n$, consisting of $n$ tuples $(h, x, a, r, x')$, denoting a sample of the transition at step $h \in [H]$: namely, at state $x$, action $a$ was taken, reward $r$ was received, and the next state observed was $x'$.   For each $h \in [H]$, let $\Sigma_h$ denote the unnormalized covariance matrix of feature vectors in $\MD$ at step $h$. Moreover, for a policy $\pi$, let $\Cvg_{\MD, \pi} := \sum_{h=1}^H \| \E^\pi[\phi_h(x_h, a_h)] \|_{n\Sigma_h^{-1}}$ denote the \emph{coverage parameter} for $\pi$ with respect to $\MD$, which measures to what degree vectors in $\MD$ extend in the direction of a typical feature vector drawn from $\pi$. We denote the inherent Bellman error of the MDP by $\epbe \geq 0$. 
\begin{theorem}[Informal version of \cref{thm:pacle-ftpl}]
  \label{thm:main-informal}
  There is an algorithm (namely, \cref{alg:actor}) which given the dataset $\MD$ as input, outputs a policy $\hat \pi$ at random so that for any policy $\pi^\st$, we have
  \begin{align}
\E \left[V_1^{\pi^\st}(x_1) - V_1^{\hat \pi}(x_1)\right] \leq & \Cvg_{\MD, \pi^\st} \cdot \poly(d,H) \cdot \left( \sqrt{\epbe} + \frac{1}{\sqrt{n}} \right)\nonumber.
  \end{align}
  Moreover, \cref{alg:actor} runs in time $\poly(d,H,n)$. 
\end{theorem}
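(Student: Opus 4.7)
The plan is to analyze \cref{alg:actor} as an approximate no-regret dynamic for a two-player zero-sum game between an Actor that maintains a randomized policy via an \ExpFTPL subroutine and a Critic that, for each Actor policy, solves a pessimistic linear regression producing $Q$-value estimates consistent with $\MD$ up to a slack absorbing the inherent Bellman error $\epbe$. The output $\hat \pi$ is drawn from the Actor's distribution, so that the expected suboptimality $\E[V_1^{\pi^\st}(x_1) - V_1^{\hat \pi}(x_1)]$ decomposes as the sum of the Actor's \ExpFTPL regret, the Critic's bias--variance tradeoff, and a change-of-measure factor $\Cvg_{\MD, \pi^\st}$.

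Concretely, I would first specify the Critic's feasible set at each step $h$ as the intersection of (a) a least-squares confidence ellipsoid of radius of order $\sqrt{d/n}$ in the Mahalanobis norm defined by $\Sigma_h$, and (b) an approximate Bellman-consistency constraint for the Actor's current policy, with a tunable slack $\delta$ chosen to absorb $\epbe$. The Critic outputs the minimizer of an anchored objective of the form $\lng \phi_1(x_1, \pi(x_1)), w_1 \rng$ over this set, which is a linear program over a convex region and hence tractable. Then I would establish the two standard pessimism inequalities: the Critic's value lower-bounds the Actor's true value along the Actor's trajectory, and its value at $\pi^\st$ is at least $V_1^{\pi^\st}(x_1)$ minus a sum of bonus widths weighted by $\E^{\pi^\st}[\phi_h(x_h,a_h)]$. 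Combining these with the \ExpFTPL regret bound then yields a bound on the suboptimality in terms of $\Cvg_{\MD, \pi^\st}$ times the per-step bonus, plus an $O(1/\sqrt{n})$ statistical term.

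The main obstacle is exactly the $\sqrt{\epbe}$ scaling, which the paper's lower bound shows cannot be improved. The naive $L_\infty$ propagation of Bellman error over $H$ levels, in the style of \cite{munos2008finite}, yields a linear dependence on $\epbe$, but that analysis is loose under single-policy coverage. My strategy is to calibrate the slack $\delta$ carefully: too small and the Critic's feasible set fails to contain any valid target; too large and the pessimism bonus explodes. Balancing the width of the statistical confidence ellipsoid against the amplification of the $\epbe$-misspecification when extrapolated from the training distribution to the reference policy leads to the choice $\delta \sim \sqrt{\epbe}$ and a per-step bonus of the same order. The \ExpFTPL randomization is essential here: it smooths per-step Bellman errors so that they aggregate, in expectation, as a second-moment rather than an $L_\infty$ quantity across levels, which is precisely what allows the $\sqrt{\epbe}$ rate rather than the worst-case $\epbe$ one.
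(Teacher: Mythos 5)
Your high-level architecture matches the paper's: an actor--critic scheme in which the Actor runs expected FTPL, the Critic solves a pessimistic convex program whose feasible set combines a self-normalized confidence ellipsoid with a Bellman-consistency constraint, and the final bound combines the two pessimism inequalities, the FTPL regret, and a change of measure yielding $\Cvg_{\MD,\pi^\st}$. That much is correct and is how \cref{thm:pacle-ftpl} is in fact proved.

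The gap is in where $\sqrt{\epbe}$ comes from, and it is not merely presentational: without the missing ingredient the Critic's feasibility argument does not go through. The paper's central structural result (\cref{thm:pert-linear-informal}, proved as \cref{lem:ibe-linear} and \cref{cor:ibe-linear}) is that for a \emph{perturbed linear} policy $\pi_{h+1,w,\sigma}$ the Bellman backup of a linear function is again approximately linear, with error $\zeta_\sigma = \tilde O\bigl(\epbe d^{3/2}(\sqrt{d} + 1/\sigma)\bigr)$; the proof goes through Gaussian smoothing --- the expected next-step feature vector equals $\grad_w \Sm Q(x,a,w)$, differentiation preserves linearity, and the $1/\sigma$ loss comes from bounding the gradient of a Gaussian convolution of an $\epbe$-small function. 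This is what makes $Q_h^{\pi^{(t)}}$ approximately linear (\cref{cor:qlin}) and hence the program \cref{eq:critic-program} feasible; for softmax or unperturbed linear policies the analogous closedness simply fails (\cref{lem:lbc-not-brc}). The $\sqrt{\epbe}$ rate then arises from balancing the perturbation's direct cost to the value (the $\eta\sqrt{d}$ term in the FTPL regret, roughly $\sigma$ per step) against the closedness error $\epbe/\sigma$, giving $\sigma \sim \sqrt{\epbe}$. Your two proposed mechanisms --- calibrating the Critic's slack against the statistical confidence width, and FTPL converting an $L_\infty$ accumulation of Bellman errors across levels into a second-moment one --- are not what happens: the misspecification enters the bonus linearly in $\zeta_\sigma$ independently of $n$, and the Bellman errors still add linearly over the $H$ levels (the $3\beta H \zeta_\sigma$ term). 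As written, your plan contains no argument for why the Critic's regression target is (approximately) realizable for the Actor's policies, which is exactly the point the paper's new lemma supplies.
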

A notable feature of \cref{thm:main-informal} is the fact that the suboptimality of $\hat \pi$ scales with $\sqrt{\epbe}$, which contrasts with the \emph{linear} scaling in $\epbe$ seen in classic works on offline RL \citep{munos2008finite,munos2005error} and recent works studying \emph{online RL} under low inherent Bellman error \citep{zanette2020learning,zanette2020provably}, as well as linear dependence on the misspecification error for other types of misspecification in both online and offline RL settings \citep{xie2021bellman,zanette2021provable,nguyen2023on,jin2020provably,wei2021nonstationary}.\footnote{Note that, in \cite{xie2021bellman}, the realizability error $\vep_{\MF}$ and policy completeness error $\vep_{\MF, \MF}$ appear in a square root (see Theorem 3.1 of \citet{xie2021bellman}), since the quantities $\vep_{\MF}, \vep_{\MF, \MF}$ actually represent the mean \emph{squared} errors. Moreover, the cube-root dependence on $\vep_{\MF}, \vep_{\MF, \MF}$ in \cite{xie2021bellman} is suboptimal and is improved in \citet[Theorem 2]{nguyen2023on}.} We show below that, perhaps surprisingly, the square-root dependence on $\epbe$ cannot be improved, even in a statistical sense: %
\begin{theorem}[Informal version of \cref{thm:lb-formal}]
  \label{thm:lb-informal}
  Fix $\epbe \in (0,1)$, $n \in \BN$, and set $d = H = 2$. There are feature mappings $\phi_h : \MX \times \MA \to \BR^d$, $h \in [H]$, where $|\MA| = 4$, so that for any (randomized) offline RL algorithm $\mathfrak{A}$, %
  the following holds. There is some MDP with inherent Bellman error bounded by $\epbe$, together with some policy $\pi^\st$ so that $\Cvg_{\MD, \pi^\st} = O(1)$ yet the output policy $\hat \pi$ of $\mathfrak{A}$ satisfies
  \begin{align}
\E\left[V_1^{\pi^\st}(x_1) - V_1^{\hat \pi}(x_1)\right] \geq \Omega \left( \sqrt{\epbe} + \frac{1}{\sqrt{n}} \right)\nonumber.
  \end{align}
\end{theorem}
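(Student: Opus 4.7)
The plan is to prove this by a two-point (Le Cam) lower bound: exhibit a pair of MDPs $M_0, M_1$ on the common feature maps $\phi_1, \phi_2$, each with inherent Bellman error at most $\epbe$, whose induced offline data distributions have total variation distance bounded away from $1$, and whose optimal values at $x_1$ differ by $\Omega(\sqrt{\epbe} + 1/\sqrt n)$. The standard minimax-to-Bayes reduction then forces any (possibly randomized) algorithm to incur suboptimality at least $\Omega(\sqrt{\epbe} + 1/\sqrt n)$ on at least one of the two instances, yielding the claim.

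For the construction, I would take $H=2$ with a single initial state $x_1$ and four actions; two of them serve as \emph{probe} actions whose features $\phi_1(x_1, \cdot)$ span $\BR^2$, and the remaining two as reference actions whose value is tightly pinned down by the dataset. The step-$2$ state reached from $x_1$ depends deterministically on the action, and the step-$2$ reward is designed so that its Bellman backup from $\MQ_2$ cannot be exactly represented by any element of $\MQ_1$: the residual is how inherent Bellman error is injected. The two hypotheses differ in a single scalar hidden parameter $\theta$, taking value $+c$ in one and $-c$ in the other, where $c = \Theta(\sqrt{\epbe})$ is chosen to be precisely the largest value that keeps the residual Bellman error inside $\epbe$. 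The offline dataset is arranged so that $\pi^\st$ is well covered ($\Cvg_{\MD,\pi^\st} = O(1)$), but the difference between the data-generating laws under $M_0$ and $M_1$ lives along a direction the dataset covers weakly enough that the KL between the two observational laws on $n$ samples is $O(nc^2)$; for the $1/\sqrt n$ portion one instead specializes to $\epbe = 0$ and $c = \Theta(1/\sqrt n)$.

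The crucial quantitative step, and the source of the $\sqrt{\epbe}$ scaling, is the following: the inherent Bellman error is (essentially) an $L^2$-type magnitude of the residual of the Bellman backup against $\MQ_1$, while the value gap between the two hypotheses is a pointwise \emph{linear} functional of $\theta$. Writing the residual Bellman backup as $\theta \cdot h$ for a fixed nonlinear direction $h$, the inherent Bellman error scales as $\theta^2 \|h\|^2$, while the value gap scales linearly in $\theta$. Balancing this against the budget $\epbe$ pushes $\theta$ up to order $\sqrt{\epbe}$, yielding the claimed separation. Indistinguishability in the observational law follows because, by construction, the conditional law of $(r,x')$ given $(x,a)$ depends on $\theta$ only through $\theta^2$ modulo a sign-dependent perturbation of order $c$, and that perturbation is masked by the weak coverage of the hiding direction.

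The main obstacle is the simultaneous balancing of three constraints: (a) both hypotheses must have inherent Bellman error at most $\epbe$; (b) $\pi^\st$ must satisfy $\Cvg_{\MD,\pi^\st} = O(1)$ under both hypotheses; and (c) $\pi^\st$ must beat the \emph{other} hypothesis's optimal policy by $\Omega(\sqrt{\epbe})$. The tension is that (a) forces $\theta \lesssim \sqrt{\epbe}$ while (c) forces $\theta \gtrsim \sqrt{\epbe}$, so the constants must match exactly; meanwhile (b) forces the "hidden" direction to be uncovered enough for indistinguishability yet covered enough for $\pi^\st$ to remain well-conditioned. I expect the bulk of the technical work to lie in choosing feature coordinates for the four step-$1$ actions so that these orthogonality and coverage calculations line up, and in verifying (a), which requires checking the inherent Bellman error against \emph{every} element of $\MQ_2$, not just the value functions that arise along the optimal trajectory.
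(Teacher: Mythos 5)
There is a genuine gap, and it lies in the ``crucial quantitative step'' you identify as the source of the $\sqrt{\epbe}$ scaling. In this paper the inherent Bellman error (\cref{asm:ibe}) is a \emph{sup-norm} quantity: it is the pointwise worst-case deviation $\sup_{\theta}\sup_{(x,a)}|\lng \phi_h(x,a),\MT_h\theta\rng - \E[\cdots]|$, not a mean-squared residual. So if the unrepresentable part of the Bellman backup is $\theta\cdot h(x,a)$ for a fixed direction $h$, it contributes $|\theta|\cdot\|h\|_\infty$ to $\epbe$ --- linearly in $\theta$, not as $\theta^2\|h\|^2$. Your balancing argument ($\epbe \gtrsim \theta^2$ versus value gap $\gtrsim \theta$, hence $\theta\asymp\sqrt{\epbe}$) therefore does not go through; with the sup-norm definition the same reasoning would only force $\theta\lesssim\epbe$ and yield an $\Omega(\epbe)$ gap, which is exactly the ``unsurprising'' linear rate the theorem is designed to beat. (The $L^2$-vs-pointwise distinction you are invoking is the one the paper flags in a footnote about the error parameters of \citet{xie2021bellman}; it is not how $\epbe$ is defined here.)

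A second, related problem is that a two-point Le Cam construction cannot produce an $\Omega(\sqrt{\epbe})$ separation in this setting. If only a single sign $\theta=\pm c$ is hidden and the data reveals nothing about it, the algorithm can hedge by outputting a policy that randomizes uniformly over the two candidate actions at the ambiguous states, and (as the paper's own overview points out for precisely this toy construction) such hedging loses only $O(\epbe)$ relative to $\pi^\st$. The paper's actual proof is not information-theoretic in the Le Cam sense at all for the $\sqrt{\epbe}$ term: it builds a family $\MM_{\epbe}$ of $2^{\Theta(1/\sqrt{\epbe})}$ MDPs whose induced dataset distributions are \emph{identical}, arranged along a chain of $L=1/\sqrt{\epbe}$ levels of states $\mf s_2^{\ell\epbe}, \mf t_{2,b}^{\ell\epbe}$ whose features differ by $\epbe$ between consecutive levels. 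The core of the argument (\cref{lem:bad-bb}) is a three-case analysis showing that \emph{any} fixed distribution over output policies must either be brittle in the unknown reward direction (losing $\Omega(\sqrt{\epbe})$ after an adversarial sign choice), fail to collect the second-coordinate reward on the upper half of the chain, or vary enough along the chain that shifting levels by one (the $b_{\ell,e}$ bits, each an $\epbe$-sized feature perturbation consistent with the IBE budget) extracts an $\Omega(\sqrt{\epbe})$ loss. This accumulation over $1/\sqrt{\epbe}$ levels is what actually produces the square root; neither that mechanism nor a substitute for it appears in your proposal. Your treatment of the $1/\sqrt{n}$ term via a two-point argument with $c=\Theta(1/\sqrt n)$ is fine and matches the paper's handling of that (easy) regime.
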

We emphasize that \cref{thm:lb-informal} establishes a surprising separation between offline and online RL: whereas in the online setting, as mentioned above, one can learn a policy whose suboptimality scales linearly with $\epbe$, the optimal scaling in the offline setting is linear in $\sqrt{\epbe}$. Thus, \emph{misspecification is more expensive in the offline setting}, i.e., when one is not allowed to adaptively gather data. 

\paragraph{Relation to prior work.} %
\cite{wang2021what,zanette2021exponential} showed exponential lower bounds for offline RL under the assumption of \emph{all-policy realizability}, which stipulates that the $Q$-value function of all policies is linear (i.e., belongs to $\MQ_h$). 
This lower bound is incomparable to that of \cref{thm:lb-informal}: whereas the inherent Bellman error of the instances in \cite{wang2021what,zanette2021exponential} satisfies $\epbe = \Omega(1)$ (so that lower bounds of $\sqrt{\epbe}$ and $\epbe$ are indistinguishable), the instance used to prove \cref{thm:lb-informal} does not satisfy all-policy realizability. Moreover, the lower bounds are unrelated on a technical level. 

A recent line of work has investigated a strengthening of all-policy realizability under which offline RL can be achieved, known as \emph{Bellman restricted closedness} (or often simply as \emph{(policy) completeness}). Under this condition, there are statistically efficient algorithms for offline RL with only single-policy coverage, for general function classes $\MQ_h$  \citep{zanette2021provable,xie2021bellman,cheng2022adversarially,nguyen2023on}. Moreover, given a regression oracle for the class $\MQ_h$ which implements a variant of regularized least-squares, many of these works \citep{xie2021bellman,cheng2022adversarially,nguyen2023on} show that the same offline RL guarantee can be obtained in an oracle-efficient manner. 
Since regularized least-squares is computationally efficient when $\MQ_h$ is linear, it follows that, under Bellman restricted closedness computationally efficient offline RL algorithms are known \citep{xie2021bellman,cheng2022adversarially,nguyen2023on,zanette2021provable}.

For a class $\Pi$ of policies, an MDP satisfies $\Pi$-Bellman restricted closedness if the Bellman backup of any function in $\MQ_{h+1}$, \emph{with respect to any policy in $\Pi$}, belongs to $\MQ_h$ (see \cref{def:brc}). Generally speaking, the above papers achieving the strongest bounds assume $\Pi$-Bellman restricted closedness for the class $\Pisoft$ of softmax policies (\cref{def:softmax})  \citep{zanette2021provable,nguyen2023on}. While technically speaking such an assumption is incomparable with linear Bellman completeness (i.e., 0 inherent Bellman error), the latter has the advantage of being universal in the sense that it is defined so as to allow the fundamental approach of \emph{value iteration} to succeed. In contrast, $\Pi$-Bellman restricted closedness only enjoys a similar ``universality'' property when one takes $\Pi$ to be the class of \emph{all Markov policies}, in which case $\Pi$-Bellman restricted closedness allows \emph{policy iteration} to succeed \cite[Theorem D.1]{du2020is}. 
However, as we discuss further in \cref{sec:related-work}, in this case $\Pi$-Bellman restricted closendess becomes significantly stronger than linear Bellman completeness: in fact, if each state has two distinct feature vectors, then it becomes equivalent to the linear MDP assumption \cite[Proposition 5.1]{jin2019provably}. %

\begin{remark}[Confluence of terminology] \label{rmk:confluence}
  Due to an unfortunate confluence of terminology, some prior works in the literature (e.g., \cite{uehara2022pessimistic}) refer to the setting of $\Pi$-Bellman restricted closedness when the classes $\MQ_h$ are linear as ``linear Bellman completeness''. We do not use this convention, and use ``linear Bellman completeness'' to refer to the setting when the inherent Bellman error is $0$.
\end{remark}

Finally, we remark that in the construction used to prove \cref{thm:lb-informal}, the comparator policy $\pi^\st$ is not the optimal policy in the MDP. This observation suggests the following intriguing open problem: if one further assumes that $\pi^\st$ is the optimal policy, then can one improve the $\sqrt{\epbe}$ upper bound in \cref{thm:main-informal} to be linear in $\epbe$, or does an analogue of \cref{thm:lb-informal} continue to hold?

\arxiv{\subsection{Overview of techniques}}
\arxiv{\paragraph{Proof idea of the upper bound.}}
\arxiv{A key ingredient in the proof of \cref{thm:pacle-ftpl} \arxiv{(the formal version of \cref{thm:main-informal})} is a new structural condition (\cref{thm:pert-linear-informal} below) proving that MDPs with low inherent Bellman error satisfy $\Pi$-Bellman restricted closedness for \arxiv{a certain class $\Pi$ consisting}\rlc{the class} of \emph{perturbed linear policies}. To understand this result, we first consider the special case of linear Bellman completeness, i.e., $\epbe = 0$. In this case, we show (as a special case of \cref{thm:pert-linear-informal}) that $\Pilin$-Bellman restricted closedness holds, where $\Pilin$ denotes the class of linear policies, namely those of the form $x \mapsto \argmax_{a \in \MA} \lng \phi_h(x,a), \theta \rng$, for some $\theta \in \BR^d$. Unfortunately, when $\epbe$ is positive but small,  $\Pilin$-Bellman restricted closedness may no longer hold even in an approximate sense. We correct for this deficiency by modifying the policy class $\Pilin$ to instead consist of perturbed linear policies\arxiv{: 
a perturbed linear policy at step $h$, $\pi_h : \MX \to \Delta(\MA)$, is indexed by a vector $v \in \BR^d$ and a scalar $\sigma > 0$. It chooses an action by first drawing a random vector $\theta$ from the normal distribution with mean $v$ and covariance $\sigma^2 \cdot I_d$, and then chooses $\argmax_{a' \in \MA} \lng \phi_h(x,a'), \theta \rng$. We denote the set of such policies by $\Pilinp[\sigma]$.}\rlc{ (\cref{def:plinear}).} %
\begin{theorem}[$\Pilinp$-Bellman restricted closedness; informal version of \cref{cor:ibe-linear}]
  \label{thm:pert-linear-informal}
  Suppose that $\pi \in \Pilinp[\sigma]$ and $h \in [H-1]$. Then there is a matrix $\MT_h^\pi : \BR^{d \times  d}$ so that, for all $w \in \BR^d$ and $(x,a) \in \MX \times \MA$,
  \begin{align}
\left| \lng \phi_h(x,a), \MT_h^\pi w \rng - \EE_{\substack{x' \sim P_h(x,a) \\ a' \sim \pi_{h+1}(x')}}\left[ \lng \phi_{h+1}(x', a'), w\rng \right]\right| \leq \tilde O \left( \| w \|_2 d^{3/2} \cdot \left( \sqrt{d} + \frac{1}{\sigma} \right) \right) \cdot \epbe\label{eq:epbe-ub-informal}. 
  \end{align}
\end{theorem}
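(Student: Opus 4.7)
The key conceptual idea is to relate the policy-Bellman backup under the Gaussian-perturbed greedy policy $\pi_{h+1}$ to a Gaussian integral of \emph{max-value} Bellman backups, for which IBE directly supplies a linear-in-$\phi_h$ approximation. Concretely, $\pi_{h+1}$ samples $\theta \sim \mathcal{N}(v,\sigma^2 I_d)$ and plays $\pi^\theta(x') := \argmax_{a'}\lng \phi_{h+1}(x',a'),\theta\rng$, so the target quantity $\E_{x',a' \sim \pi_{h+1}(x')}[\lng \phi_{h+1}(x',a'), w\rng]$ can be recast as a Gaussian average of $\lng \nabla_\theta g_\theta(x'), w\rng$, where $g_\theta(x') := \max_{a'}\lng \phi_{h+1}(x',a'),\theta\rng$. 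Stein's identity converts this derivative-average into an integral against the score $\theta-v$, and IBE applied pointwise in $\theta$ turns each $\E_{x'}[g_\theta(x')]$ into a linear function of $\phi_h(x,a)$; averaging the resulting approximations produces a quantity that is linear in $w$, which we name $\lng \phi_h(x,a), \MT_h^\pi w\rng$.

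\textbf{Execution.} Since $g_\theta$ is almost-everywhere differentiable in $\theta$ with $\nabla_\theta g_\theta(x') = \phi_{h+1}(x', \pi^\theta(x'))$, Stein's identity for $\theta \sim \mathcal{N}(v,\sigma^2 I_d)$ yields
\begin{align*}
\sigma^2 \cdot \E_\theta\bigl[\lng \phi_{h+1}(x',\pi^\theta(x')), w\rng\bigr] \;=\; \E_\theta\bigl[\lng w,\theta - v\rng \cdot g_\theta(x')\bigr].
\end{align*}
For each $\theta$, IBE applied to the linear $Q$-function $\lng \phi_{h+1}(\cdot,\cdot),\theta\rng$ yields, via a measurable selection, a vector $\psi(\theta) \in \BR^d$ with
\begin{align*}
\sup_{x,a}\bigl|\E_{x' \sim P_h(x,a)}[g_\theta(x')] - \lng \phi_h(x,a),\psi(\theta)\rng\bigr| \;\leq\; \tilde O(\sqrt d)\cdot \epbe \cdot \|\theta\|_2.
\end{align*}
We then set $\MT_h^\pi w := \sigma^{-2}\E_\theta[\lng w, \theta - v\rng\cdot \psi(\theta)]$, which is visibly linear in $w$. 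Taking $\E_{x'\sim P_h(x,a)}$ of Stein's identity and inserting the IBE approximation, the error in approximating $\E_{x',a'\sim\pi_{h+1}(x')}[\lng\phi_{h+1}(x',a'),w\rng]$ by $\lng \phi_h(x,a), \MT_h^\pi w\rng$ is at most $\sigma^{-2}\cdot\E_\theta[|\lng w,\theta - v\rng| \cdot \|\theta\|_2]\cdot \tilde O(\sqrt d)\cdot \epbe$. By Cauchy--Schwarz and the Gaussian moment bound $\E[\|\theta\|_2^2] = \|v\|_2^2 + \sigma^2 d$, this is $\tilde O\bigl(\epbe \|w\|_2 \cdot \sqrt d \cdot(\|v\|_2/\sigma + \sqrt d)\bigr)$; bounding $\|v\|_2 \lesssim d$ using the parameter range in $\Pilinp[\sigma]$ recovers the stated $d^{3/2}(\sqrt d + 1/\sigma)$ dependence.

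\textbf{Main obstacles.} The cleanest source of friction is the measurable choice of $\psi(\theta)$: the set of near-optimal approximants for each $\theta$ is a non-empty closed convex set, so a standard measurable-selection theorem produces $\psi$, but one must additionally control $\|\psi(\theta)\|_2$ in terms of $\|\theta\|_2$ (this is where an extra $\sqrt d$ slack enters, via the conditioning of the feature map). A second, more computational, point is justifying Stein's identity for the non-smooth $g_\theta$: this follows from a standard mollification argument using that the set of $\theta$ with non-unique argmax is Lebesgue-null and therefore Gaussian-null. The conceptual core, and the genuinely new structural claim, is the transfer from greedy IBE to perturbed-policy Bellman closure through the Stein integration-by-parts formula; once this pivot is in place, the remaining work is dimensional bookkeeping.
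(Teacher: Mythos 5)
Your proposal is correct, and it shares the central analytic identity with the paper's argument---that the expected step-$(h+1)$ feature under the perturbed linear policy equals the gradient of the Gaussian smoothing of $w \mapsto \E_{x'}[\max_{a'} \lng \phi_{h+1}(x',a'), w\rng]$, which Gaussian integration by parts (your Stein identity) turns into a score-weighted average---but you implement the linearization by a genuinely different route. The paper (\cref{lem:ibe-linear}) fixes a barycentric spanner $\{\phi_h(x_i,a_i)\}_{i\le k}$ of the step-$h$ features, defines the matrix by its action on the spanner via the smoothed gradients, and controls the extension to arbitrary $(x,a)$ through \cref{lem:bound-grad-smooth}, a tail-truncation bound on the gradient of the Gaussian convolution of a function that is only guaranteed small near the unit ball. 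You instead apply the inherent-Bellman-error guarantee \cref{eq:ibe-def-p} pointwise in the perturbation variable $\theta$ (extended to all of $\BR^d$ by homogeneity of the max, which gives error $\epbe\|\theta\|_2$; your extra $\sqrt d$ there is not actually needed), and define $\MT_h^\pi w$ directly as the score-weighted Gaussian average of the backup vectors $\psi(\theta)$, so that linearity in $w$ and in $\phi_h(x,a)$ is automatic and the error reduces to one Cauchy--Schwarz against the Gaussian second moment; note also that $\|v\|_2/\sigma' \le 1/\sigma$ follows directly from the definition of $\Pilinp[\sigma]$, so no separate bound on $\|v\|_2$ is required. Your route eliminates both the spanner (and its factor of $d+1$) and the truncation lemma, and in fact yields a bound of order $\epbe\|w\|_2\sqrt d\,(\sqrt d + 1/\sigma)$, stronger than the stated one by roughly a factor of $d$. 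The costs are (i) the measurable selection of $\theta\mapsto\psi(\theta)$, which you correctly identify and which is surmountable since the admissible set for each $\theta$ is nonempty, closed, convex, and contained in $\|\theta\|_2\MB_h$ (so, e.g., the minimal-norm selection works and is integrable), and (ii) the well-definedness claim that the matrix depends only on the policy $\pi_{h+1}$ rather than on the particular $(v,\sigma')$ parametrizing it, which the paper gets for free from the spanner construction and uses later, but which is not needed for the informal statement as written.
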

\rlc{The proof of \cref{thm:pert-linear-informal} proceeds by considering, for a pair $(x,a) \in \MX \times \MA$, the function $w \mapsto Q_h(w; x,a) := \E_{x' \sim P_h(x,a)}[\max_{a' \in \MA} \lng \phi_{h+1}(x', a'), w \rng]$. The key observation is that if $\pi_{h+1} = \pi_{h+1,w,\sigma}$ is the perturbed linear policy specified by $w \in \BR^d, \sigma > 0$ (see \cref{def:plinear}), then the second term on the left-hand side of \cref{eq:epbe-ub-informal} may be expressed as follows in terms of the gradient with respect to $w$ of the Gaussian smoothing of $Q$:
\begin{align}
\grad_w \Sm Q(x,a,w) = \E_{x' \sim P_h(x,a)}[\phi_{h+1}(x', \pi_{h+1,w,\sigma}(x'))]. 
\end{align}
The existence of the desired matrix $\MT_h^\pi$ as claimed by \cref{thm:pert-linear-informal} then follows by using the fact that $|Q_h(w; x,a) - \lng \phi_h(x,a), \bellc w \rng|\leq \epbe$ (see \cref{eq:ibe-def-p}) as well as the fact that differentiating is a linear operation. This argument must overcome a few challenges in the setting that $\epbe, \sigma > 0$ due to the fact that $\Sm Q(x,a,w)$ is different from $Q(x,a,w)$; full details are given in \cref{sec:lbc-structural}. }

\rlc{We proceed to discuss the remainder of the proof of \cref{thm:pacle-ftpl}.} Previous work \citep{zanette2021provable} shows that, under Bellman restricted closedness with respect to a \emph{softmax} policy class, an actor-critic method suffices to obtain offline RL guarantees under single-policy coverage. While Bellman restricted closedness does not hold in general for the softmax policy class under the assumption of linear Bellman completeness (see \cref{lem:lbc-not-brc}), we prove \arxiv{\cref{thm:main-informal}}{\rlc{\cref{thm:pacle-ftpl}}} by adapting the actor-critic method in \cite{zanette2021provable} to work instead with the set of perturbed linear policies. To explain the requisite modifications, we briefly overview the actor-critic method: roughly speaking, the overall goal is to solve the problem $\max_{\pi} \min_{M \in \MM_\MD(\pi)} V\sups{M, \pi}(x_1)$, where $\MM_\MD(\pi)$ indicates a set of MDPs which, under trajectories drawn from $\pi$, are statistically consistent with the dataset $\MD$. Moreover, $V\sups{M, \pi}(x_1)$ denotes the value of policy $\pi$ in MDP $M$. Minimization over $M \in \MM_\MD(\pi)$ corresponds to the pessimism principle, and standard arguments \citep{xie2021bellman,zanette2021provable} show that an exact solution to this max-min problem would solve the offline RL task.

To solve this max-min problem in a computationally efficient manner, the actor-critic method uses the ``no-regret learning vs.~best response'' approach: a sequence of policies $\pi\^t$ and MDPs $M\^t \in \MM_\MD(\pi\^t)$ is generated in the following manner. At each step $t$, a no-regret learning algorithm, called the \texttt{Actor}, generates a policy $\pi\^t$; in response, an optimization algorithm, called the \texttt{Critic}, chooses $M\^t \in \MM_\MD(\pi\^t)$ so as to minimize $V\sups{M\^t, \pi\^t}(x_1)$. If $T$ is chosen sufficiently large, then, as shown in \cite{zanette2021provable}, a policy drawn uniformly from $\{ \pi\^t \ : \ t \in [T]\}$ will have sufficiently large value in the true MDP.

\arxiv{The \texttt{Actor} algorithm in \cite{zanette2021provable} uses the exponential weights algorithm at each state to update the policies $\pi\^t$, which leads $\pi\^t$ to be softmax policies. We replace exponential weights with a variant of the \emph{expected follow-the-perturbed leader (FTPL)} algorithm \cite[Algorithm 16]{hazan2016introduction}, which has the key advantage that the policies it produces are perturbed linear policies, as opposed to softmax policies. The optimal choice of the standard deviation of the perturbation turns out to be $\sigma \approx \sqrt{\epbe}$, which together with \cref{thm:pert-linear-informal} and appropriate modifications of the analysis in \cite{zanette2021provable} suffices to prove \arxiv{\cref{thm:main-informal}}\rlc{\cref{thm:pacle-ftpl}}.}

}

\arxiv{\paragraph{Proof idea of the lower bound.}}
\arxiv{\rlc{\paragraph{Proof overview for \cref{thm:lb-formal}.}}

  To explain the idea behind \arxiv{\cref{thm:lb-informal}}\rlc{\cref{thm:lb-formal}}, we first consider the following ``toy setup'': suppose that $H=d=2$, $\MA = \{0,1\}$, that rewards at step $h=1$ are known to be 0, and rewards at step $h=2$ are known to be induced by a coefficient vector $\thetar_2 \in \{ (1,1), (-1,1) \}$. Concretely, this type of uncertainty in $\thetar_2$ may be implemented by having a dataset $\MD$ for which all feature vectors at step 2 are parallel to $(0,1)$; thus, the first coordinate of $\thetar_2$ may remain unknown. 

Consider an MDP with states $ x_2, x_2' \in \MX$ so that, for each $a \in \MA$, $(x_1, a)$ transitions to either $x_2$ or $x_2'$, but which one is unknown. %
Moreover suppose that feature vectors at $x_2, x_2'$ are given as follows: for $a \in \{0,1\}$,
\begin{align}
\phi_2(x_2, a) = (1-2a, 1-(1-2a) \cdot \epbe/2), \quad \phi_2(x_2', a) = (1-2a, 1 + (1-2a) \cdot \epbe/2)\label{eq:phi2-ex}. %
\end{align}
We may ensure that the transitions described above are consistent with the requirement that the inherent Bellman error be bounded by $\epbe$, since the feature vector $\phi_2(x_2,a)$ is within distance $\epbe$ from $ \phi_2(x_2', a)$ for each $a$.

At a high level, our lower bound results from the following consideration: suppose the policy we wish to compete with at step 2, namely $ \pi_2^\st$, takes a uniformly random action at step 2 (so that its expected feature vector at step 2 is $(0,1)$).  %
Not knowing any information about the first coordinate of $\thetar_2$, a natural choice for $\hat \pi_2$ is the ``naive'' policy that maximizes the reward in the one ``known'' direction $(0,1)$, i.e., let $\hat \pi_2^{\mathsf{naive}}$ be the linear policy which, at state $x$, chooses $\argmax_{a \in \MA} \lng \phi_h(x,a), (0,1) \rng$. However, this choice suffers from the issue that, due to the $\epbe$ perturbation between $\phi_2(x_2, a)$ and $ \phi_2(x_2', a)$, $\hat \pi_2^{\mathsf{naive}}$ will choose an action $a$ at step 2, whose feature vector is either $(1, 1+\epbe/2)$ (if $x_1$ transitions to $x_2'$) or $(-1, 1+\epbe/2)$ (if $x_1$ transitions to $x_2$). By choosing $\thetar_2 = (-1,1)$ in the former case and $\thetar_2 = (1,1)$ in the latter case, we can thus force
this naive choice $\hat \pi_2^{\mathsf{naive}}$ to have suboptimality $-\Omega(1)$ compared to $\pi_2^\st$. This issue stems from the fact that the policy $\hat \pi_2^{\mathsf{naive}}$ is extremely brittle to small perturbations of $\phi_2(x_2, a)$: a change of size $\epbe$ in each of the feature vectors leads to a $\Omega(1)$-size change in the actual feature vector chosen by $\hat \pi_2^{\mathsf{naive}}$. 

Of course, in this specific example, one can simply instead set $\hat \pi_2$ to be the policy which chooses each action at step 2 with probability $1/2$, which will lead to a policy $\hat \pi$ whose value is within $O(\epbe)$ of $\pi^\st$. %
Roughly speaking, doing so corresponds to considering a perturbed linear policy, in the vein of \cref{thm:pert-linear-informal}. We can show, however, that such a perturbation must hurt the value of $\hat \pi$, for some alternative choice of MDP. Formally, we add transitions to states $x_2\^\ell$ at step 2 with state-action feature vectors  $\phi_2(x_2\^\ell, a) = (1-2a, 1 \pm (1-2a) \cdot \ell \cdot \epbe)$, for each value of $\ell \in \{ 1, 2, \ldots, \lfloor 1/\sqrt{\epbe} \rfloor \}$. %
A suboptimality of $\Omega(\sqrt{\epbe})$, with respect to some reference policy $\pi^\st$, arises because avoiding it would require $\hat \pi_2$ to act in a way consistent with the naive policy $\hat \pi_2^{\mathsf{naive}}$ (i.e., without perturbation) at states with features $\phi_\ell$, for $\ell = \Omega(1/\sqrt{\epbe})$. (At such states, the second component of the feature vectors deviates from $1$ by enough that it cannot be ``ignored''.) But because the MDP has inherent Bellman error of $\epbe$, similar reasoning to the previous paragraph shows that the algorithm's output policy $\hat \pi$ cannot act significantly differently at states with feature vectors $\phi_2(x_2\^\ell, a), \phi_2(x_2\^{\ell-1}, a)$, for each value of $\ell$ and $a$. Since, per the previous paragraph, the algorithm must add large perturbations to $\hat \pi_2$ for $\ell = O(1)$, we will ultimately reach a contradiction. There are many details we have omitted from this high-level description, such as ensuring that $\MD$ satisfies the requisite coverage property with respect to $\pi^\st$, and the fact that we wish to allow the algorithm to output arbitrary choices of $\hat \pi$, and not just perturbed linear policies -- the full details are in \arxiv{\cref{sec:lb}}\rlc{\cref{sec:lb-proof-rlc}}.

}

\arxiv{\paragraph{Organization of the paper.} In \cref{sec:prelim}, we introduce preliminaries. In \cref{sec:lbc-structural}, we prove \cref{thm:pert-linear-informal}, and in \cref{sec:offline-alg} we use this result to prove our main upper bound, \cref{thm:main-informal}. Finally, in \cref{sec:lb} we prove our lower bound, \cref{thm:lb-informal}.
}
\rlc{
\paragraph{Organization of the paper.} In \cref{sec:prelim}, we introduce preliminaries. In \cref{sec:offline-alg} we state and discuss our upper bound, \cref{thm:pacle-ftpl} (the formal version of \cref{thm:main-informal}), and in \cref{sec:lb} we state and discuss our lower bound, \cref{thm:lb-formal} (the formal version of \cref{thm:lb-informal}). \cref{sec:related-work} contains a detailed discussion of related work. The full proof of our upper bound is provided in \cref{sec:lbc-structural,sec:ub-proof-rlc}, and the full proof of our lower bound is provided in \cref{sec:lb-proof-rlc}. Finally, \cref{sec:lemmas,sec:brc} contain additional useful lemmas. 
  }

\section{Preliminaries}
\label{sec:prelim}
We consider the standard setting of a \emph{finite-horizon Markov decision process}, which consists of a tuple $M = (H, \MX, \MA, (P_h\sups{M})_{h=1}^H, (r_h\sups{M})_{h=1}^H, x_1)$, where $H \in \BN$ denotes the \emph{horizon}, $\MX$ denotes the \emph{state set}, $\MA$ denotes the \emph{action set}, $P_h\sups{M}(\cdot \mid x,a) \in \Delta(\MX)$ denote the \emph{transition kernels} (for $h \in [H]$), $r_h\sups{M} : \MX \times \MA \to [0,1]$ denote the \emph{reward functions} (for $h \in [H]$), and $x_1 \in \MX$ denotes the initial state. We omit the superscript $M$ from these notations when its value is clear.

A \emph{Markov policy} (or simply \emph{policy}) $\pi$ is a tuple $\pi = (\pi_1, \ldots, \pi_H)$, where $\pi_h : \MX \to \Delta(\MA)$ for each $h \in [H]$. We let $\PiMarkov$ denote the set of Markov policies. A policy $\pi \in \PiMarkov$ defines a distribution over \emph{trajectories} $(x_1, a_1, r_1, \ldots, x_H, a_H, r_H) \in (\MX \times \MA \times [0,1])^H$, in the following manner: for each $h \in [H]$, given the state $x_h$, an action $a_h$ is drawn according to $a_h \sim \pi_h(x_h)$, a reward $r_h(x_h,a_h)$ is received, and the subsequent state $x_{h+1}$ is generated according to $x_h \sim P_h\sups{M}(\cdot \mid x_h, a_h)$. We use the notation $\E\sups{M, \pi}[\cdot]$ to denote expectaton under the draw of a trajectory from policy $\pi$ in the MDP $M$, and we write $\E\sups{\pi}[\cdot]$ if the value of $M$ is clear.

Fix an MDP $M$. The \emph{$Q$-value function} and \emph{$V$-value function} associated to a policy $\pi \in \PiMarkov$ in MDP $M$ are defined as follows: for $h \in [H], x \in \MX, a \in \MA$,
\begin{align}
Q_h^{\pi}(x,a) := r_h(x,a) + \E^{\pi} \left[ \sum_{g=h+1}^H r_g(x_g, a_g) \mid (x_h, a_h) = (x,a) \right], \qquad V_h^\pi(x) := Q_h^\pi(x, \pi_h(x))\nonumber,
\end{align}
where for a function $Q : \MX \times \MA \to \BR$, we write $Q(x, \pi_h(x)) := \E_{a\sim \pi_h(x)}[Q(x,a)]$. We use the convention that all rewards and value functions evaluate to 0 at step $H+1$.

Given $h \in [H]$, a function $Q_{h+1} : \MX \times \MA \to \BR$, and a policy $\pi \in \PiM$, the \emph{Bellman backup of $Q_{h+1}$ with respect to $\pi$} is the function $Q_h : \MX \times \MA \to \BR$ defined by $Q_h(x,a) := r_h(x,a) + \E_{x' \sim P_h(x,a)}[Q_{h+1}(x', \pi_{h+1}(x'))]$. It is straightforward to see that, for any $\pi \in \PiM$, $Q_h^\pi$ is the Bellman backup of $Q_{h+1}^\pi$ with respect to $\pi$, for each $h \in [H]$. 

The \emph{optimal policy} $\piopt \in \PiM$ is defined as $\piopt := \argmax_{\pi \in \PiM} V_1^\pi(x_1)$. 

\subsection{Inherent Bellman Error}
\label{sec:ibe}
MDPs encountered in practical scenarios tend to have enormous state and action spaces. To address this challenge, it is common to use \emph{function approximation} assumptions, which consider function classes $\MQ_h \subset \BR^{\MX \times \MA}$ and posit that the value functions for the optimal policy belong to $\MQ_h$, i.e., $Q_h^{\piopt} \in \MQ_h$ for $h \in [H]$. As our goal is to obtain provable \emph{end-to-end computationally efficient algorithms} for offline RL, without reliance on intractable regression oracles, we focus on the setting when the classes $\MQ_h$ are linear. In particular, for some dimension $d \in \BN$ together with \emph{known} feature mappings $\phi_h : \MX \times \MA \to \BR^d$, we take $\MQ_h := \{ (x,a) \mapsto \lng \phi_h(x,a), w \rng \ : \ w \in \BR^d \}$. For simplicity of notation, we use the convention that $\phi_{H+1}(x,a)$ is the all-zeros vector for each $x,a$. 

Generally speaking, prior work on offline RL in the linear setting \citep{zanette2021provable,xie2021bellman,cheng2022adversarially,gabbianelli2023offline,nguyen2023on} interprets elements of the classes $\MQ_h$ as approximations of the $Q$-value functions $Q_h^\pi$, for all $\pi$ belonging to some subset $\Pi \subseteq \PiM$ consisting of policies whose values the learning algorithm wishes to compete with. Accordingly, these works make the assumption of \emph{$\Pi$-realizability}, which posits that for all $\pi \in \Pi$ and $h \in [H]$, $Q_h^\pi \in \MQ_h$. This assumption is natural in that it is sufficient for correctness of the \emph{Least-Squares Policy Iteration (LSPI)} algorithm \citep{lagoudakis2003least} (which assumes knowledge of the transitions and rewards of $M$) for finding an optimal policy \citep[Theorem D.1]{du2020is}. However, as shown in \cite{wang2021what,amortila2020variant,zanette2021exponential}, $\Pi$-realizability alone is insufficient for offline RL to succeed with polynomial sample complexity under our desired single-policy coverage condition.\footnote{In fact, these results rule out offline RL even under a stronger \emph{all-policy} coverage condition.} Thus, it is typical to make the stronger assumption of \emph{$\Pi$-Bellman restricted closedness} \citep{zanette2021provable} (also known as \emph{policy completeness}), namely that for all $\pi \in \Pi$ and $Q_{h+1} \in \MQ_{h+1}$, there is some $Q_h \in \MQ_h$ so that $Q_h(x,a) = \E_{x' \sim P_h(x,a)}[r_h(x,a) + Q_{h+1}(x', \pi_{h+1}(x'))]$.

\paragraph{Inherent Bellman error.} Bellman-restricted closedness is an unwieldy assumption in that it requires quantifying over both a policy and a value function at step $h+1$. In this work, we study offline RL under the alternative assumption of \emph{low inherent Bellman error} \citep{zanette2020learning}, as defined in \cref{asm:ibe} below. For $h \in [H]$, write $\MB_h := \{ w \in \BR^d :\ |\lng \phi_h(x,a), w \rng | \leq 1 \ \forall (x,a) \in \MX \times \MA \}$. 

\begin{assumption}[Low Inherent Bellman Error; \cite{zanette2020learning}]
  \label{asm:ibe}
  We say that a MDP $M$ has \emph{inherent Bellman error $\epbe$} if for each $h \in [H]$, there is a mapping $\MT_h : \MB_{h+1} \to \MB_h$ so that
  \begin{align}
\sup_{\theta \in \MB_{h+1}} \sup_{(x,a) \in \MX \times \MA} \left| \lng \phi_h(x,a), \MT_h\theta \rng - \E_{x' \sim P_h(x,a)}\left[ r_h(x,a) + \max_{a' \in \MA} \lng \phi_{h+1}(x', a'), \theta \rng \right] \right| \leq \frac{\epbe}{2}.\label{eq:ibe-def}
  \end{align}
  If $M$ has inherent Bellman error $\epbe = 0$, then we say that $M$ is \emph{linear Bellman complete}. 
\end{assumption}
The assumption of linear Bellman completeness is sufficient for correctness of the \emph{Least-Squares Value Iteration (LSVI)} algorithm \citep{munos2008finite,munos2005error}, which assumes knowledge of the transitions and rewards of $M$. This fact has made it a popular assumption under which to study \emph{online RL}, for which algorithms typically proceed via approximate variants of LSVI \citep{zanette2020learning,zanette2020provably}. In contrast, as recent offline RL algorithms typically bear more resemblance to \emph{LSPI} \citep{zanette2021provable,xie2021bellman,cheng2022adversarially,gabbianelli2023offline,nguyen2023on}, offline RL has not previously been studied under linear Bellman completeness as opposed to Bellman restricted closedness. As discused in \arxiv{\cref{sec:lbc-structural}}\rlc{\cref{sec:offline-alg}}, one of the contributions of this work is to draw connections between these two types of assumptions. 

It is convenient to separate the components of $\MT_h$ capturing the rewards at step $h$ and the transitions at step $h$, as follows: an immediate consequence of \cref{asm:ibe} (see \citet[Proposition 2]{zanette2021provable}) is that there are mappings
  $\bellc : \MB_{h+1} \ra \MB_h$ for each $h \in [H-1]$ and a vector $\theta_h^{\mathsf{r}} \in \MB_h$ for each $h \in [H]$ so that the below inequalities hold:
  \begin{align}
    \sup_{\theta \in \MB_{h+1}} \sup_{(x,a) \in \MX \times \MA} \left| \lng \phi_h(x,a), \bellc \theta \rng - \E_{x' \sim \BP_h(x,a)}\left[ \max_{a' \in \MA} \lng \phi_{h+1}(x', a'), \theta \rng \right] \right|  \leq &  \epbe\label{eq:ibe-def-p}\\
    \sup_{(x,a) \in \MX \times \MA} \left| r_h(x,a) - \lng \phi_h(x,a), \theta_h^{\mathsf{r}} \rng \right| \leq &  \epbe\label{eq:ibe-def-r}.
  \end{align}
It was observed in \citet[Proposition 5]{zanette2020learning} that, in general, the assumptions of linear Bellman completeness and $\PiM$-realizability are incomparable, in that neither one implies the other. Moreover, it is straightforward to see that linear Bellman completeness is a strictly weaker condition than $\PiM$-Bellman restricted closedness. For an arbitrary subset of policies $\Pi \subset \PiM$, linear Bellman completeness may be incomparable to the assumption of $\Pi$-Bellman restricted closedness.

  Finally, we make the following standard boundedness assumptions.
\begin{assumption}[Boundedness]
  \label{asm:boundedness}
  We assume the following:
  \begin{enumerate}
    \item For all $h \in [H], x \in \MX, a \in \MA$, we have $\| \phi_h(x,a) \|_2 \leq 1$. 
    \item For some parameter $\Bbnd \in \BR_+$: for all $w_h \in \MB_h$, it holds that $\| w_h \|_2 \leq \Bbnd$.
    \item For all $h \in [H]$, $\| \theta_h^\mathsf{r} \|_2 \leq 1$ (and hence $\sup_{x,a,h} | r_h(x,a)| \leq 1$).  %
    \end{enumerate}
  \end{assumption}
The assumption that $\| \phi_h(x,a) \|_2 \leq 1$ together with the definition of $\MB_h$ ensures that $\MB_h$ contains a ball of radius 1 centered at the origin.

\subsection{Perturbed linear policies}
Given $w \in \BR^d, h \in [H], x \in \MX$, define $\MA_{h,w}(x) := \argmax_{a \in \MA} \lng w, \phi_h(x,a) \rng \subset \MA$, where $\argmax$ is interpreted as the set of all actions maximizing $\lng w, \phi_h(x,a) \rng$. 
\begin{defn}[Perturbed linear policies]
  \label{def:plinear}
  For $\sigma > 0$, $h \in [H]$ and $w \in \BR^d$, define $\pi_{h,w,\sigma} : \MX \ra \Delta(\MA)$ by
  \begin{align}
\pi_{h,w,\sigma}(x)(a) = \Pr_{\theta \sim \MN(w, \sigma^2 \cdot I_d)} \left( a \in \MA_{h,\theta}(x)\right)\nonumber.
  \end{align}
  In words, to draw an action $a \sim \pi_{h,w,\sigma}(x)$, we draw $\theta \sim \MN(w, \sigma^2 \cdot I_d)$ and then play $\argmax_{a' \in \MA} \lng \phi_h(x,a'), \theta \rng$. We extend to the case that $\sigma = 0$ by taking a limit, i.e., define $\pi_{h,w,0}(x)(x)(a) := \lim_{\sigma \downarrow 0} \pi_{h,w,\sigma}(x)(a)$ (it is straightforward to see that this limit is well-defined). 
  Given $\sigma \geq 0$, we denote the set of all $\pi_{h,w,\sigma'}$, where $w \in \BR^d, \sigma' \geq 0$ satisfy $\sigma' / \| w \|_2 \geq\sigma$, by $\Pilinp_h$, and $\Pilinp := \prod_{h=1}^H \Pilinp_h$. %
  Moreover, we write $\Pilinpp_h :=  \Pilinp[0]_h =\bigcup_{\sigma \geq 0} \Pilinp_h$ and $\Pilinpp := \Pilinp[0] =\bigcup_{\sigma \geq 0} \Pilinp$. 
\end{defn}Note that, for any $c > 0$, $\pi_{h,cw, \sigma} = \pi_{h, w, \sigma/{c}}$. We refer to the policies in $\Pilinpp$ as \emph{perturbed linear policies}.  %
Given a (possibly randomzied) policy $\pi_h : \MX \to \Delta(\MA)$, we use the convention that $\phi_h(x, \pi_h(x))$ refers to $\E_{a \sim \pi_h(x)}[\phi_h(x,a)]$. 

\paragraph{Gaussian smoothing.} For $\theta \in \BR^d$ and $\sigma > 0$, we write
\begin{align}
\MN_\sigma(\theta)  := \MN(0, \sigma^2 \cdot I_d)(\theta) = \frac{1}{(2\pi)^{d/2} \sigma^d} \cdot \exp \left( -\frac{1}{2\sigma^2} \| \theta \|_2^2 \right)\nonumber.
\end{align}
Furthermore, for $f : \BR^d \ra \BR$, we write $\Sm f (\theta)$ to denote the convolution of $f$ with $\MN_\sigma$, namely
\begin{align}
\Sm f(\theta) := \int_{\BR^d} f(z) \MN_\sigma(\theta - z) dz = \int_{\BR^d} f(\theta-z) \MN_\sigma(z) dz = \E_{z \sim \MN(0, \sigma^2 \cdot I_d)}[f(\theta -z)]\label{eq:convolve-def}.
\end{align}

\subsection{The offline learning problem}
\label{sec:offline-problem}
In the \emph{offline learning model}, the algorithm is not allowed to interact with the environment. Rather, it is given a dataset $\MD$ consisting of tuples $(h,x,a,r,x')$, where $r = r_h(x,a)$ and $x' \sim P_h(\cdot | x,a)$. We allow the values of $h,x,a$ in the dataset $\MD$ to be chosen in an arbitrary adaptive manner, as formalized by the following assumption: 
\begin{assumption}
\label{asm:offline-data-assumption}
We assume the dataset $\MD = \{(h_i, x_i, a_i, r_i, x_i') \}_{i=1}^n$ is drawn from a distribution satisfying the following conditions. For $i \in [n]$, let $\MF_i$ denote the sigma-algebra generated by $\{ (h_j, x_j, a_j, r_j, x_j') \}_{j=1}^{i-1} \cup \{(h_i, x_i, a_i) \}$. %
We assume that, for each $i \in [n]$, conditioned on $\MF_i$, the reward $r_i$ is equal to $r_{h_i}(x_i, a_i) = \lng \phi_{h_i}(x_i, a_i), \theta_{h_i}^{\mathsf r} \rng$, and $x_i' \sim P_{h_i}(x_i, a_i)$. 
\end{assumption}
We remark that \cref{asm:offline-data-assumption} is essentially the same as Assumption 1 of \cite{zanette2021provable}. 
Based on the dataset $\MD$, the algorithm must output a policy $\hat\pi$ whose value, $V_1^{\hat \pi}(x_1)$ is as large as possible. Of course, it may be impossible to make $V_1^{\hat \pi}(x_1)$ very large if the dataset $\MD$ does not include states $(h_i, x_i, a_i)$ which explore certain directions of the feature space $\BR^d$. A large number of conditions have been proposed in the offline RL literature which capture the degree to which $\MD$ exhibits good ``coverage'' properties of the state space. Our bounds depend on one of the weakest such conditions, namely the following variant of \emph{single-policy coverage}, which is identical to that in \cite{zanette2021provable}. 
\begin{defn}[Coverage parameter]
  \label{def:coverage}
  Given a dataset $\MD$ as in \cref{asm:offline-data-assumption}, we define $\Sigma_h := \sum_{i : h_i = h} \phi_{h_i}(x_i, a_i) \phi_{h_i}(x_i, a_i)^\t$. For a policy $\pi \in \PiM$, its \emph{coverage parameter} for the dataset $\MD$ is
  \begin{align}
\Cvg_{\MD, \pi} := \sum_{h=1}^H \left\| \E^\pi[\phi_h(x_h, a_h)] \right\|_{n\Sigma_h^{-1}}\nonumber.
  \end{align}
\end{defn}
 In words, $\Cvg_{\MD, \pi}$ measures the degree to which an average feature vector drawn from $\pi$ at each step $h$ lines up with directions spanned by feature vectors in $\MD$. %
We refer to \citet[Section 6]{gabbianelli2023offline}, \citet[Section 4]{nguyen2023on}, and \citet{jiang2023} for a detailed comparison between $\Cvg_{\MD, \pi}$ and other coverage parameters considered in prior work. As discussed there, assuming boundedness of $\Cvg_{\MD, \pi}$ is essentially the \emph{weakest} coverage assumption in the literature, as many previous works (e.g., \cite{jin2021pessimism}) require instead boundedness of $\E^\pi[ \| \phi_h(x_h, a_h) \|_{n\Sigma_h^{-1}} ]$, where the norm is \emph{inside} the expectation.

\arxiv{\section{Low Inherent Bellman Error: Structural Properties}
\label{sec:lbc-structural}

In this section, we prove \cref{thm:pert-linear-informal} (restated below formally as \cref{cor:ibe-linear}), which shows that the Bellman backup of any linear function at step $h+1$, with respect to any perturbed linear policy, is an approximately linear function at step $h$.  %
The main ingredient in the proof of \cref{cor:ibe-linear} is \cref{lem:ibe-linear} below, which shows that the expected feature vector induced by a perturbed linear policy at step $h+1$ is a linear transformation of the state-action feature vector at step $h$. \cref{lem:ibe-linear} is a generalization of Lemma 4.3 of \cite{golowich2024linear}, which treats the special case of $\epbe = 0$ and used the result to develop an efficient algorithm for the related setting of \emph{online RL} under linear Bellman completeness. 
\begin{lemma}
  \label{lem:ibe-linear}
  Suppose that the MDP $M$ has inherent bellman error bounded by $\epbe$, and fix $\sigma > 0$. Then for each $h \in [H]$ and $w \in \BR^d$, there is a linear map $L_{h,w,\sigma} : \BR^d \ra \BR^d$ so that for all $(x,a) \in \MX \times \MA$, 
  \begin{align}
    & \left\| L_{h,w,\sigma}^\t \cdot \phi_h(x,a) - \E_{x' \sim \BP_h(x,a)}[\phi_{h+1}(x', \pi_{h+1,w,\sigma}(x'))] \right\|_2 \nonumber\\
    \leq & C_{\ref{lem:ibe-linear}} \epbe d^{3/2} \cdot \left(\sqrt{d \log(d/(\epbe \sigma))} + \frac{1}{\sigma} \right) \nonumber, %
  \end{align}
  for some constant $C_{\ref{lem:ibe-linear}}$. Moreover, for any $w,w', \sigma, \sigma'$ for which $\pi_{h+1,w,\sigma}(x') = \pi_{h+1,w',\sigma'}$ for all $x' \in \MX$, we have $L_{h,w,\sigma} = L_{h,w',\sigma'}$. 
\end{lemma}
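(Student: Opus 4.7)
The plan is to define $L_{h,w,\sigma}$ via Gaussian smoothing of a homogeneous extension of $\bellc$ to all of $\BR^d$, and then bound the error using Stein's identity together with the inherent Bellman error assumption. The starting observation (already flagged in the paper's overview) is that, setting $Q_h(x,a,w) := \E_{x' \sim P_h(x,a)}[\max_{a'} \lng \phi_{h+1}(x',a'), w\rng]$, one has
\[ \grad_w \Sm Q_h(x,a,w) = \E_{x' \sim P_h(x,a)}[\phi_{h+1}(x', \pi_{h+1,w,\sigma}(x'))], \]
since interchanging gradient with Gaussian smoothing leaves us integrating the subgradient $\phi_{h+1}(x', \argmax_{a'} \lng \phi_{h+1}(x',a'), w-z\rng)$ against $z \sim \MN(0, \sigma^2 I_d)$, which by \cref{def:plinear} equals $\phi_{h+1}(x', \pi_{h+1,w,\sigma}(x'))$. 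So the task reduces to approximating $\grad_w \Sm Q_h(x,a,w)$ by a linear function of $\phi_h(x,a)$.

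To produce such a linear approximation, I extend $\bellc : \MB_{h+1} \to \MB_h$ to a $1$-positive-homogeneous map $\widetilde\bellc : \BR^d \to \BR^d$ by $\widetilde\bellc(y) := \|y\|_2 \cdot \bellc(y/\|y\|_2)$ for $y \neq 0$ (with $\widetilde\bellc(0) := 0$); this is well-defined because $\MB_{h+1}$ contains the unit ball under \cref{asm:boundedness}. Since $Q_h(x,a,\cdot)$ is itself $1$-positive-homogeneous, \cref{eq:ibe-def-p} lifts to the pointwise estimate
\[ |Q_h(x,a,y) - \lng \phi_h(x,a), \widetilde\bellc(y)\rng| \leq \epbe \cdot \|y\|_2 \qquad \text{for all } y \in \BR^d. \]
I then define $L_{h,w,\sigma}$ as the Jacobian at $w$ of the smoothed extension $w \mapsto \Sm \widetilde\bellc(w)$; equivalently, by Stein's identity, $L_{h,w,\sigma} = \sigma^{-2} \E_z[\widetilde\bellc(w+z)\, z^\t]$ for $z \sim \MN(0,\sigma^2 I_d)$.

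Writing $\widetilde{Q}_h(x,a,w) := \lng \phi_h(x,a), \widetilde\bellc(w)\rng$, linearity of Gaussian smoothing gives $L_{h,w,\sigma}^\t \phi_h(x,a) = \grad_w \Sm \widetilde{Q}_h(x,a,w)$, so the lemma's target error equals $\|\grad_w \Sm(\widetilde{Q}_h - Q_h)(x,a,w)\|_2$. Applying Stein's identity once more to $\widetilde{Q}_h - Q_h$ together with the pointwise estimate above bounds this by $\sigma^{-2}\, \E_z[\epbe\, \|w+z\|_2\, \|z\|_2]$. A direct Cauchy--Schwarz estimate here yields order $\epbe \sqrt{d}(\|w\|_2/\sigma + \sqrt{d})$; to match the $\sqrt{d\log(d/(\epbe\sigma))}$ factor stated in the lemma I would truncate to the event $\|z\|_2 \lesssim \sigma\sqrt{d\log(d/(\epbe\sigma))}$, controlling the Gaussian tail using the growth bound $\|\widetilde\bellc(y)\|_2 \leq \Bbnd \|y\|_2$ together with standard Gaussian moment estimates.

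Finally, the scale-invariance claim $L_{h,w,\sigma} = L_{h,w',\sigma'}$ whenever $(w',\sigma') = (cw,c\sigma)$ for some $c > 0$ (which is exactly when the associated perturbed linear policies coincide) follows from the substitution $z = cz'$ in the Stein-identity formula for $L$ combined with the $1$-positive-homogeneity of $\widetilde\bellc$; the factors of $c$ cancel exactly. The main obstacle I anticipate is the bookkeeping required to select an extension of $\bellc$ that simultaneously preserves scale invariance and retains the $\epbe$-error bound inside $\MB_{h+1}$, together with optimizing the truncation radius to yield the precise $d^{3/2}$-dependence stated; the Stein-identity manipulations themselves are routine.
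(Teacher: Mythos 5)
Your main inequality is established correctly, and by a genuinely different route from the paper's. The paper fixes a barycentric spanner $\{\phi_h(x_i,a_i)\}_{i \le k}$ of the step-$h$ features, defines $L_{h,w,\sigma}$ by $L^\t \phi_h(x_i,a_i) = \grad \Sm Q(x_i,a_i,w)$, and for general $(x,a)$ bounds the error by applying a gradient-of-smoothing estimate (\cref{lem:bound-grad-smooth}) to the scalar function $w \mapsto Q(x,a,w) - \sum_i \alpha_i Q(x_i,a_i,w)$, which is $O(d\epbe)$ near the unit ball; the extra powers of $d$ and the $\sqrt{\log}$ come from the spanner coefficients and the Gaussian tail outside the region where that function is small. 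Your homogeneous extension $\widetilde{\bellc}(y) = \|y\|_2\, \bellc(y/\|y\|_2)$ instead upgrades \cref{eq:ibe-def-p} to the global linear-growth bound $|f(y)| \le \epbe \|y\|_2$ for $f(y) = \lng \phi_h(x,a), \widetilde{\bellc}(y)\rng - Q(x,a,y)$, after which Stein's identity gives $\|\grad \Sm f(w)\|_2 \le \sigma^{-2}\epbe\, \E_z[\|w+z\|_2 \|z\|_2] \le \epbe(\|w\|_2 \sqrt{d}/\sigma + d)$ with no spanner, no truncation, and no tail estimate. Under the normalization $\|w\|_2 = 1$ (which the paper also adopts) this is already \emph{stronger} than the stated bound, so the truncation you propose ``to match the $\sqrt{d\log}$ factor'' is unnecessary --- you only need to fall below the stated bound, not reproduce it. One technical caveat: integrating $\widetilde{\bellc}$ against the Gaussian requires $\bellc$ to be measurable, which \cref{asm:ibe} does not literally guarantee; this is fixable (e.g.\ by a measurable selection from the closed convex set of admissible values of $\bellc \theta$), but should be said.

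The one genuine gap is in the ``Moreover'' clause. Your parenthetical that $\pi_{h+1,w,\sigma} = \pi_{h+1,w',\sigma'}$ holds \emph{exactly} when $(w',\sigma') = (cw, c\sigma)$ is false in general: if the step-$(h+1)$ features are degenerate (e.g.\ span a proper subspace, or involve only a few distinct directions), the policy is unchanged by moving $w$ in directions the features cannot detect, while your $L = \sigma^{-2}\E_z[\widetilde{\bellc}(w+z) z^\t]$ can change, since it depends on $(w,\sigma)$ through $\widetilde{\bellc}$ rather than only through the induced policy. The paper sidesteps this by defining $L$ through the identities $L^\t \phi_h(x_i,a_i) = \E_{x' \sim P_h(x_i,a_i)}[\phi_{h+1}(x', \pi_{h+1,w,\sigma}(x'))]$, whose right-hand sides are manifestly functions of the policy alone. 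The clause is actually used downstream (to make $\MT_h^{\pi_{h+1}}$ well-defined as a function of the policy), so you need either to adopt a policy-determined definition of $L$ on the span of the step-$h$ features, or to fix a canonical representative $(w,\sigma)$ of each policy before invoking your formula.
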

Given a perturbed linear policy $\pi_{h+1} \in \Pilinpp_{h+1}$, so that $\pi_{h+1} = \pi_{h+1, \theta, \sigma}$ for some $\theta \in \BR^d, \sigma > 0$, we define $\MT_h^{\pi_{h+1}} w := \theta_h^{\mathsf{r}} + L_{h,\theta,\sigma} \cdot w$, where $L_{h,\theta,\sigma}$ is the map of \cref{lem:ibe-linear} and where $\theta_h^{\mathsf{r}}$ was defined in \cref{asm:ibe}.  Note that $\MT_h^{\pi_{h+1}}$ is well-defined in the sense that it only depends on $\pi_{h+1}$ (and not on the particular choice of $\theta,\sigma$), since for any $\theta, \sigma, \theta', \sigma'$ satisfying $\pi_{h+1,\theta,\sigma} = \pi_{h+1,\theta',\sigma}$, we have by \cref{lem:ibe-linear} that $L_{h,\theta,\sigma} = L_{h,\theta', \sigma'}$. We will at times abuse notation by writing $\MT_h^\pi := \MT_h^{\pi_{h+1}}$ for a policy $\pi \in \Pilinpp$. Given \cref{lem:ibe-linear}, the proof of \cref{cor:ibe-linear}, stated below, is straightforward. %
\begin{corollary}
  \label{cor:ibe-linear}
  Suppose that $\pi \in \Pilinp[\sigma]$. Then for all $h \in [H-1]$, $w \in \BR^d$, and $(x,a) \in \MX \times \MA$,
  \begin{align}
\left| \lng \phi_h(x,a), \MT_h^\pi w \rng - \left( r_h(x,a) + \E_{x' \sim P_h(x,a)}[\lng \phi_{h+1}(x', \pi_{h+1}(x')), w \rng ]\right) \right| \leq \nrm{w} \cdot \zeta_\sigma\label{eq:phi-t-ineq},
  \end{align}
  where $\zeta_\sigma = C_{\ref{cor:ibe-linear}} \epbe d^{3/2} \cdot \left( \sqrt{d \log(d/(\epbe\sigma))} + \frac{1}{\sigma} \right)$, for some constant $C_{\ref{cor:ibe-linear}}$. Moreover, if $\zeta_\sigma \leq 1$, then $\MT_h^\pi w \in(1+ 2 \| w \|_2) \cdot \MB_h$. 
\end{corollary}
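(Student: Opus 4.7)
The plan is to reduce Corollary~\ref{cor:ibe-linear} to Lemma~\ref{lem:ibe-linear} by decomposing $\MT_h^\pi w = \theta_h^{\mathsf{r}} + L_{h,\theta,\sigma'} w$ into its reward part and transition part. Write the perturbed linear policy as $\pi_{h+1} = \pi_{h+1,\theta,\sigma'}$ for some $\theta \in \BR^d$ and $\sigma' \geq 0$ with $\sigma'/\|\theta\|_2 \geq \sigma$. Then for any $(x,a)$, I would split
\begin{align*}
\lng \phi_h(x,a), \MT_h^\pi w\rng - \bigl( r_h(x,a) + \E_{x' \sim P_h(x,a)}[\lng \phi_{h+1}(x', \pi_{h+1}(x')), w\rng]\bigr)
\end{align*}
as $\bigl(\lng \phi_h(x,a), \theta_h^{\mathsf{r}}\rng - r_h(x,a)\bigr) + \lng L_{h,\theta,\sigma'}^\t \phi_h(x,a) - \E_{x'}[\phi_{h+1}(x', \pi_{h+1}(x'))], w\rng$.

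The first term is bounded by $\epbe$ via \cref{eq:ibe-def-r}. For the second term, I would invoke Cauchy--Schwarz and Lemma~\ref{lem:ibe-linear}, which gives a bound of $\|w\|_2 \cdot C_{\ref{lem:ibe-linear}} \epbe d^{3/2}(\sqrt{d\log(d/(\epbe\sigma'))} + 1/\sigma')$. To convert the dependence on $\sigma'$ into one on $\sigma$, I would use the rescaling identity $\pi_{h+1,c\theta,\sigma'} = \pi_{h+1,\theta,\sigma'/c}$ from \cref{def:plinear}, normalizing so that $\|\theta\|_2 = 1$; then the assumption $\pi \in \Pilinp[\sigma]$ translates to $\sigma' \geq \sigma$, and Lemma~\ref{lem:ibe-linear} (whose conclusion only depends on $\pi_{h+1}$ as a distribution, per its second statement) yields the bound with $\sigma$ in place of $\sigma'$. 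Combining the two terms and choosing $C_{\ref{cor:ibe-linear}}$ large enough to absorb the stray $\epbe$ from the reward into $\|w\|_2 \cdot \zeta_\sigma$ gives the first claim of \cref{eq:phi-t-ineq}.

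For the second assertion, I would use the inequality just proven together with the boundedness hypotheses of \cref{asm:boundedness}. Namely, for any $(x,a)$,
\begin{align*}
|\lng \phi_h(x,a), \MT_h^\pi w\rng| \leq |r_h(x,a)| + |\E_{x'}[\lng \phi_{h+1}(x',\pi_{h+1}(x')), w\rng]| + \|w\|_2 \cdot \zeta_\sigma \leq 1 + \|w\|_2 + \|w\|_2 \cdot \zeta_\sigma,
\end{align*}
where the bound on the reward uses $|r_h(x,a)| \leq 1$ and the bound on the expectation uses Cauchy--Schwarz together with $\|\phi_{h+1}(x',a')\|_2 \leq 1$. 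Under the assumption $\zeta_\sigma \leq 1$, this is at most $1 + 2\|w\|_2$, which by definition of $\MB_h$ means $\MT_h^\pi w \in (1+2\|w\|_2) \cdot \MB_h$.

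The main obstacle I anticipate is purely bookkeeping around the two uses of the symbol ``$w$'' (the policy parameter of Lemma~\ref{lem:ibe-linear} versus the argument of $\MT_h^\pi$) and ensuring the normalization of $\theta$ works cleanly with the condition $\sigma'/\|\theta\|_2 \geq \sigma$ that defines $\Pilinp[\sigma]$. Beyond that, the proof is a one-line application of Lemma~\ref{lem:ibe-linear} plus Cauchy--Schwarz, so no substantive difficulty is expected.
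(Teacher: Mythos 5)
Your proof is correct and follows essentially the same route as the paper's: the paper likewise derives \cref{eq:phi-t-ineq} directly from \cref{lem:ibe-linear}, the definition $\MT_h^\pi w = \theta_h^{\mathsf{r}} + L_{h,\theta,\sigma} w$, and \cref{eq:ibe-def-r}, and proves the containment $\MT_h^\pi w \in (1+2\|w\|_2)\cdot \MB_h$ via exactly the bound $1 + \|w\|_2 + \|w\|_2\zeta_\sigma \leq 1+2\|w\|_2$. Your explicit handling of the normalization of $\theta$ (so that the condition $\sigma'/\|\theta\|_2 \geq \sigma$ yields the bound in terms of $\sigma$) and of absorbing the reward error $\epbe$ into $\|w\|_2\cdot\zeta_\sigma$ merely spells out details the paper compresses into ``follows directly.''
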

\begin{proof}
  The inequality \cref{eq:phi-t-ineq} follows directly from \cref{lem:ibe-linear} and the definition of $\MT_h^\pi$, as well as \cref{eq:ibe-def-r}. To see that $\MT_h^\pi w \in (1+2 \| w \|_2) \cdot \MB_h$, we note that, by \cref{eq:phi-t-ineq}, for all $(x,a) \in \MX \times \MA$,
  \begin{align}
| \lng \phi_h(x,a), \MT_h^\pi w \rng | \leq 1 + \| w \|_2 + \| w \|_2 \cdot \zeta_\sigma \leq 1 + 2 \| w \|_2\nonumber,
  \end{align}
  since $\zeta_\sigma \leq 1$. 
\end{proof}

As a further corollary of \cref{cor:ibe-linear}, the $Q$-function for a perturbed linear policy is linear. %
\begin{corollary}
  \label{cor:qlin}
  Suppose that $M$ is linear Bellman complete, $\sigma > 0$, and that $\pi \in \Pilinp[\sigma]$. Then for each $h \in [H]$, there is a vector $w_h^\pi \in 2H \cdot \MB_h \subset \BR^d$ so that for all $(x,a) \in \MX \times \MA$,
  \begin{align}
\left|    Q_h^\pi(x,a) - \lng w_h^\pi, \phi_h(x,a) \rng\right| \leq 3(H+1-h)H B \cdot  \zeta_\sigma,\nonumber
  \end{align}
  where $\zeta_\sigma$ is as defined in \cref{cor:ibe-linear}. 
  Moreover, if $3HB\zeta_\sigma \leq 1$, then $w_h^\pi \in 2H \cdot \MB_h$ and  $\| w_h^\pi \|_2 \leq 2HB$. 
\end{corollary}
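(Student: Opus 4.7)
The proof will proceed by backward induction on $h \in [H]$, with the inductive hypothesis asserting both the stated approximation bound $|Q_h^\pi(x,a) - \lng w_h^\pi, \phi_h(x,a) \rng| \leq 3(H+1-h)H B \zeta_\sigma$ and the norm bound $\nrm{w_h^\pi} \leq 2HB$ (assuming $3HB\zeta_\sigma \leq 1$). Given $w_{h+1}^\pi$ from the inductive hypothesis at level $h+1$, I would define $w_h^\pi := \MT_h^\pi w_{h+1}^\pi$, where $\MT_h^\pi$ is the linear-plus-reward-offset operator introduced immediately before \cref{cor:ibe-linear}. This is valid because $\pi_{h+1} \in \Pilinpp_{h+1}$ by our hypothesis $\pi \in \Pilinp[\sigma]$, so $\MT_h^\pi$ is well-defined.

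For the base case $h = H$, I would set $w_H^\pi := \theta_H^{\mathsf{r}}$. Since $Q_H^\pi(x,a) = r_H(x,a)$, \cref{eq:ibe-def-r} yields error at most $\epbe \leq \zeta_\sigma \leq 3HB\zeta_\sigma$, and \cref{asm:boundedness}(3) gives $\nrm{\theta_H^{\mathsf r}} \leq 1 \leq 2HB$. For the inductive step, I would use the Bellman recursion $Q_h^\pi(x,a) = r_h(x,a) + \E_{x' \sim P_h(x,a)}[Q_{h+1}^\pi(x', \pi_{h+1}(x'))]$ to split the approximation error into two pieces: first, the pointwise gap $\E_{x'}[Q_{h+1}^\pi(x', \pi_{h+1}(x')) - \lng w_{h+1}^\pi, \phi_{h+1}(x', \pi_{h+1}(x'))\rng]$, bounded by $3(H-h)HB\zeta_\sigma$ by the inductive hypothesis applied pointwise in $x'$; and second, the residual $r_h(x,a) + \E_{x'}[\lng w_{h+1}^\pi, \phi_{h+1}(x',\pi_{h+1}(x'))\rng] - \lng \MT_h^\pi w_{h+1}^\pi, \phi_h(x,a)\rng$, bounded by $\nrm{w_{h+1}^\pi} \cdot \zeta_\sigma \leq 2HB\zeta_\sigma$ directly from \cref{cor:ibe-linear}. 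Adding these gives $(3(H-h) + 2)HB\zeta_\sigma \leq 3(H+1-h)HB\zeta_\sigma$, closing the approximation half of the induction.

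The main obstacle I anticipate is establishing $w_h^\pi \in 2H \cdot \MB_h$: applying the ``moreover'' clause of \cref{cor:ibe-linear} directly only yields $w_h^\pi \in (1+2\nrm{w_{h+1}^\pi}) \cdot \MB_h$, a bound that grows at each step and cannot be sustained across $H$ levels. To sidestep this, I would derive the $2H$ containment \emph{a posteriori} from the trivial magnitude bound $|Q_h^\pi(x,a)| \leq H$ (which holds since rewards lie in $[0,1]$): combined with the just-established approximation bound, we get $|\lng w_h^\pi, \phi_h(x,a)\rng| \leq H + 3(H+1-h)HB\zeta_\sigma \leq H + 3H^2 B \zeta_\sigma$ for every $(x,a)$. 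Under the hypothesis $3HB\zeta_\sigma \leq 1$, the extra term is at most $H$, so $|\lng w_h^\pi, \phi_h(x,a)\rng| \leq 2H$ uniformly, i.e., $w_h^\pi/(2H) \in \MB_h$ by the definition of $\MB_h$. Then \cref{asm:boundedness}(2) delivers $\nrm{w_h^\pi} \leq 2HB$, closing the induction.
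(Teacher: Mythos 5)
Your proposal is correct and follows essentially the same route as the paper's proof: reverse induction with $w_h^\pi := \MT_h^\pi w_{h+1}^\pi$, the same two-term error decomposition via the Bellman recursion and \cref{cor:ibe-linear}, and the same a posteriori derivation of $w_h^\pi \in 2H\cdot\MB_h$ from $|Q_h^\pi| \leq H$ together with the just-proved approximation bound (rather than iterating the ``moreover'' clause of \cref{cor:ibe-linear}, which, as you correctly note, would blow up). The only cosmetic difference is that the paper bounds the one-step residual by $\epbe + \zeta_\sigma\|w_{h+1}^\pi\|_2$ while you absorb the reward error directly into the $\nrm{w_{h+1}^\pi}\cdot\zeta_\sigma$ bound of \cref{eq:phi-t-ineq}; both close the arithmetic.
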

\begin{proof}
  We use reverse induction on $h$; the base case $h=H$ is immediate, so suppose the statement holds at step $h+1$. 
  By \cref{asm:ibe} and \cref{cor:ibe-linear}, we have that for all $(x,a) \in \MX \times \MA$, 
  \begin{align}
    & \left|r_h(x,a) + \E_{x' \sim P_h(x,a)}[\lng \phi_{h+1}(x', \pi_{h+1}(x')), w_{h+1}^\pi \rng -  \lng \phi_h(x,a),  \MT_h^\pi w_{h+1}^\pi \rng   \right| \nonumber\\
    \leq & \epbe + \zeta_\sigma \cdot \| w_{h+1}^\pi \|_2\nonumber.
  \end{align}
  Let us write $w_h^\pi :=  \MT_h^\pi w_{h+1}^\pi$, 
  Since $Q_h^\pi(x,a) = r_h(x,a) + \E_{x' \sim P_h(x,a)} [Q_{h+1}^\pi(x, \pi_{h+1}(x))]$, the inductive hypothesis then gives us that
  \begin{align}
    & |Q_h^\pi(x,a) - \lng w_h^\pi, \phi_h(x,a) \rng |\nonumber\\
    \leq & \E_{x' \sim P_h(x,a)} [| Q_{h+1}^\pi(x', \pi_{h+1}(x')) - \lng w_{h+1}^\pi, \phi_{h+1}(x' ,\pi_{h+1}(x')) \rng |] \nonumber\\
    &+ \left|r_h(x,a) + \E_{x' \sim P_h(x,a)}[\lng \phi_{h+1}(x', \pi_{h+1}(x')), w_{h+1}^\pi \rng -  \lng \phi_h(x,a), w_h^\pi \rng   \right|\nonumber\\
    \leq & 3(H-h)HB \cdot \zeta_\sigma + (\epbe + \zeta_\sigma \cdot \| w_{h+1}^\pi \|_2)\leq 3(H+1-h)HB \cdot \zeta_\sigma\label{eq:qhpi-whpi-diff},
  \end{align}
  where the final inequality uses that $\epbe \leq \zeta_\sigma$. 
  To see the upper bound on $\| w_h^\pi \|_2$, note that, by definition of $Q_h^\pi$ and \cref{eq:qhpi-whpi-diff}, we have $|\lng w_h^\pi, \phi_h(x,a) \rng |\leq  H + 3(H+1-h) HB \zeta_\sigma \leq 2H$ for all $x,a,h$, where we have used that $3HB \zeta_\sigma \leq 1$. Then it follows that $\| w_h^\pi \|_2 \leq 2HB$ by \cref{asm:boundedness}. 
\end{proof}

\begin{proof}[Proof of \cref{lem:ibe-linear}]
  We may assume without loss of generality that $\| w \|_2 = 1$, by increasing $\sigma$ by a factor of $1/\| w \|_2$. 
  Fix $h \in [H]$. For $x' \in \MX$ and $w \in \BR^d$, define $V(x', w) := \max_{a' \in \MA} \lng \phi_{h+1}(x', a'), w \rng$. For $x \in \MX, a \in \MA, w \in \BR^d$, define $Q(x,a,w) = \E_{x' \sim P_h(x,a)}[V(x', w)]$. Since $\MA, \MX$ are assumed to be countable, the mapping $w \mapsto Q(x,a,w)$ is piecewise linear with countably many pieces (and is also continuous). Next, \cref{asm:ibe} together with the fact that $\{ w \in \BR^d :\ \| w \|_2 \leq 1 \} \subset \MB_{h+1}$ gives us that
  \begin{align}
\sup_{\| w \|_2 \leq 1} \sup_{(x,a) \in \MX \times \MA} \left| \lng \phi_h(x,a), \bellc w \rng - Q(x,a,w) \right| \leq \epbe\label{eq:ibe-use}.
  \end{align}

Next,  we may choose $k \leq d$ and  $(x_1, a_1), \ldots, (x_k, a_k) \in \MX \times \MA$ so that $\{ (\phi_h(x_i, a_i) \}_{i=1}^k$  forms a barycentric spanner of $\{ \phi_h(x,a) \}_{(x,a) \in \MX \times \MA}$, and so that $\phi_h(x_i, a_i)$, $1 \leq i \leq k$, are linearly independent. By linear independence of $\phi_h(x_i, a_i)$, for each $w \in \BR^d$, there is a matrix $L_{h,w,\sigma} \in \BR^{d \times d}$ so that, for all $i \in [k]$,
  \begin{align}
L_{h,w,\sigma}^\t \cdot \phi_h(x_i, a_i) = \grad \Sm Q(x_i, a_i, w)\label{eq:smooth-spanner-elts}.
  \end{align}

  Next, for each $(x,a) \in \MX \times \MA$ and $w \in \BR^d$, we have
  \begin{align}
    \grad_w \Sm Q(x,a,w)    =& \grad_w \E_{z \sim \MN(0, \sigma^2 \cdot I_d)} \E_{x' \sim P_h(x,a)}[Q(x', w)]\nonumber\\
    =&  %
    \E_{z \sim \MN(0, \sigma^2 \cdot I_d)} \E_{x' \sim P_h(x,a)} [\grad_w V(x', w+z)] \nonumber\\
    =& \E_{x' \sim P_h(x,a)} \E_{z \sim \MN(0, \sigma^2 \cdot I_d)}[ \phi_{h+1}(x', \pi_{h+1,w+z}(x'))]\nonumber\\
    =& \E_{x' \sim P_h(x,a)} [\phi_{h+1}(x', \pi_{h+1,w,\sigma}(x'))]\label{eq:smooth-xa},
  \end{align}
  where the second equality uses the dominated convergence theorem  together with the fact that for each $x' \in \MZ, z \in \BR^d$, $w \mapsto V(x', w + z)$ is piecewise linear with finitely many pieces, with bounded derivative, i.e., $\| \grad_w V(x', w+z) \| \leq \max_{a' \in \MA} \| \phi_{h+1}(x', a') \|_2 \leq 1$. (Note that $\Sm Q(x,a,w)$ is infinitely differentiable since we can write $\Sm Q (x,a,w) = \int_{\BR^d} Q(x,a,w) \MN_\sigma(w -z)dz$ and since $\MN_\sigma(w - z)$ is infinitely differentiable in $w$.) Using \cref{eq:smooth-xa} with $(x,a) = (x_i, a_i)$ (for each $i \in [k]$), we see that $L_{h,w,\sigma} = L_{h,w',\sigma'}$ for all $w,w',\sigma, \sigma'$ satisfying $\pi_{h+1,w,\sigma}(x') \pi_{h+1,w',\sigma'}(x')$ for all $x' \in \MX$.

  Fix any $(x,a) \in \MX \times \MA$. By the barycentric spanner property, there are coefficients $\alpha_1, \ldots, \alpha_k \in [-1,1]$ (depending on $x,a$) so that $\phi_h(x,a) = \sum_{i=1}^k \alpha_i \cdot \phi_h(x_i, a_i)$. It therefore follows that, for all $w \in \MB_{h+1}$, 
  \begin{align}
    & \left| Q(x,a,w) - \sum_{i=1}^k \alpha_i Q(x_i, a_i, w) \right|\nonumber\\
    \leq & | Q(x,a,w) - \lng \phi_h(x,a), \bellc w \rng | + \left| \lng \phi_h(x,a), \bellc w \rng - \sum_{i=1}^k \alpha_i \lng \phi_h(x_i, a_i), \bellc w \rng \right| \nonumber\\
    &+ \left| \sum_{i=1}^k \alpha_i \cdot (Q(x_i, a_i, w) - \lng \phi_h(x_i, a_i), \bellc w \rng )\right|\nonumber\\
    \leq & | Q(x,a,w) - \lng \phi_h(x,a), \bellc w \rng | + \sum_{i=1}^k | Q(x_i, a_i, w) - \lng \phi_h(x_i, a_i), \bellc w \rng|\nonumber\\
    \leq & (d+1) \epbe,\label{eq:q-bs-ep-bound}
  \end{align}
  where the second inequality uses that $\lng \phi_h(x,a), \bellc w \rng = \sum_{i=1}^k  \alpha_i \lng \phi_h(x_i, a_i), \bellc w \rng$ and the final equality uses \cref{eq:ibe-use} applied to each of the tuples $(x, a), (x_1, a_1), \ldots, (x_k, a_k)$. %

  Next we apply \cref{lem:bound-grad-smooth} with \begin{align}f(w) =& Q(x,a,w) - \sum_{i=1}^k \alpha_i Q(x_i, a_i, w)\nonumber\\\ep = & (1 + 100 \sigma \sqrt{d \log(2d/(\epbe\sigma))})(d+1)\epbe\nonumber, \end{align}
  $D = 2d$, $B = 1$, and $\MB = \{ w \in \BR^d : \ \| w \|_2 \leq 1\}$. Since, for all $x \in \MX, a \in \MA, w \in \BR^d$, $| Q(x,a,w) | \leq \| w\|_2$,  we have $|f(w)| \leq (d+1) \cdot \| w \|_2 \leq 2d \| w \|_2$ for $w \in \BR^d$, verifying that the choice of $D=2d$ is admissible. Moreover, the set of $z$ with $\dist(z, \MB) \leq 100 \sigma \sqrt{d \log(2d/(\ep\sigma))}$ is contained in $(1 + 100 \sigma \sqrt{d \log(2d/\epbe\sigma)}) \cdot \MB$, since $\ep > \epbe$ and $\MB$ is a unit ball. %
  Therefore, scaling \cref{eq:q-bs-ep-bound}  verifies that for all $w$ with $\dist(w,\MB) \leq 100 \sigma \sqrt{d \log(2d/(\ep\sigma))}$,  %
  \begin{align}
\left| Q(x,a,w) - \sum_{i=1}^k \alpha_i Q(x_i, a_i, w) \right| \leq & (1 + 100 \sigma \sqrt{d \log(2d/(\epbe\sigma))}) \cdot (d+1) \cdot \epbe = \ep\nonumber.
  \end{align}
  Then the guarantee of \cref{lem:bound-grad-smooth} gives that for some constant $C > 0$, for all $w$ with $\| w \|_2 \leq 1$,
  \begin{align}
\left\| \grad \Sm Q(x,a,w) - \sum_{i=1}^k \alpha_i \cdot \grad \Sm Q(x_i, a_i, w) \right\|_2 \leq & \frac{C \ep \sqrt{d}}{\sigma}\nonumber. %
  \end{align}
  By our choice of $\alpha_i$ and \cref{eq:smooth-spanner-elts}, for all $w \in \BR^d$,
  \begin{align}
\sum_{i=1}^k \alpha_i \cdot \grad \Sm Q(x_i, a_i, w) = L_{h,w,\sigma}^\t \cdot \sum_{i=1}^k \alpha_i \cdot \phi_h(x_i, a_i) = L_{h,w,\sigma}^\t \cdot \phi_h(x,a)\nonumber.
  \end{align}
  Combining the above with \cref{eq:smooth-xa} and the definition of $\ep$ gives that, for some constants $C, C'$, for $\| w \|_2 \leq 1$, 
  \begin{align}
    & \left\|  \E_{x' \sim P_h(x,a)} [\phi_{h+1}(x', \phi_{h+1,w,\sigma}(x'))] - L_{h,w,\sigma}^\t \cdot \phi_h(x,a) \right\|_2 \nonumber\\
    \leq & \epbe \cdot  \frac{C \cdot  (1 + 100 \sigma \sqrt{d \log(2d/(\epbe\sigma))})(d+1) \cdot \sqrt{d}}{\sigma} \nonumber\\
    \leq & C' \epbe d^{3/2} \cdot \left(\sqrt{d \log(d/(\epbe \sigma))} + \frac{1}{\sigma} \right)\nonumber,
  \end{align}
  as desired.
\end{proof}

\paragraph{Bounding the gradient of a Gaussian convolution.}

For a subset $\MB \subset \BR^d$ and $z \in \BR^d$, we write $\dist(z, \MB) := \inf \{ \| w - z \|_2 \ : \ w \in \MB \}$. 
\begin{lemma}
  \label{lem:bound-grad-smooth}
There is a constant $C > 0$ so that the following holds.  Let $\sigma, \ep \in (0,1/2)$ and $B, D \geq 1$ be given, and suppose that $\MB \subset \BR^d$ is a set with nonempty interior $\MB^\circ$, and for which $\max_{\theta \in \MB} \| \theta \| \leq B$.  Suppose $f : \BR^d \ra \BR^d$ satisfies $|f(z)| \leq \ep$ for all $z$ with $\dist(z, \MB) \leq 100 \sigma \sqrt{d \log (BD/(\ep\sigma))}$, as well as $|f(z)| \leq D\| z \|_2$ for all $z \in \BR^d$, for some $D > 0$. Then for all $z \in \MB^\circ$, 
  \begin{align}
\| \grad \Sm f (z) \|_2 \leq \frac{C\ep\sqrt{d}}{\sigma}\nonumber. %
  \end{align}
\end{lemma}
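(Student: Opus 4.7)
The plan is to use a ``Gaussian Stein'' identity to rewrite $\grad \Sm f(z)$ as an expectation that can be split into a near and a far regime, where ``near'' means the Gaussian perturbation lands within the region on which $f$ is known to be small, and ``far'' means a vanishing Gaussian tail that absorbs the crude linear-growth bound on $f$.

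Concretely, I would first differentiate under the integral in \cref{eq:convolve-def}, justified by dominated convergence using the bound $|f(z)| \le D \| z \|_2$ together with integrability of $\| z \|_2 \cdot \MN_\sigma$; then move the gradient onto the Gaussian density to get
\[
\grad \Sm f(z) \;=\; \int_{\BR^d} f(z - u) \cdot \grad_u\!\left[ -\MN_\sigma(u) \right] du \;=\; -\frac{1}{\sigma^2}\, \E_{u \sim \MN(0, \sigma^2 \cdot I_d)}\bigl[u \cdot f(z - u)\bigr].
\]
Setting $R := 100\sigma \sqrt{d \log(BD/(\ep\sigma))}$, I would then split the expectation as $u \in \{ \| u \|_2 \le R \}$ and $u \in \{ \| u \|_2 > R \}$. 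For the near term, since $z \in \MB$ and $\| u \|_2 \le R$, we have $\dist(z - u, \MB) \le R$, so the hypothesis gives $|f(z - u)| \le \ep$; hence the near term is bounded in norm by $(\ep/\sigma^2) \cdot \E \| u \|_2 \le \ep \sqrt{d}/\sigma$, using $\E \| u \|_2 \le \sigma \sqrt{d}$. This is the target bound up to a constant.

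For the far term, I would use $|f(z-u)| \le D \| z - u \|_2 \le D(B + \| u \|_2)$ to upper bound its norm by
\[
\frac{D}{\sigma^2}\, \E\bigl[(B + \| u \|_2) \| u \|_2 \cdot \bOne\{ \| u \|_2 > R \}\bigr],
\]
then apply Cauchy--Schwarz together with Gaussian norm concentration: for $u \sim \MN(0, \sigma^2 \cdot I_d)$, $\Pr(\| u \|_2 > \sigma \sqrt{d} + t) \le e^{-t^2/(2\sigma^2)}$, while $\E[(B + \| u \|_2)^2 \| u \|_2^2] = \poly(B, \sigma, d)$. Because $R$ is chosen logarithmically in $BD/(\ep\sigma)$ with a large constant, $\sqrt{\Pr(\| u \|_2 > R)}$ decays as a large polynomial in $\ep\sigma/(BD)$, which easily dominates the $\poly(B, \sigma, d)/\sigma^2$ prefactor and makes the far term at most $\ep \sqrt{d}/\sigma$.

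The main obstacle I expect is simply bookkeeping the constants in the Gaussian tail bound so that the logarithmic radius $R$ in the hypothesis ($100\sigma \sqrt{d \log(BD/(\ep\sigma))}$) is exactly large enough to make the $D(B + \| u \|_2)$ factor in the far term a lower-order term; this will dictate how large the $100$ constant must be taken, but is otherwise routine. No other macros or results beyond basic Gaussian concentration and the definition of $\Sm$ in \cref{eq:convolve-def} are needed.
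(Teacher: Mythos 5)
Your proposal is correct and follows essentially the same route as the paper's proof: differentiate under the integral so the gradient falls on the Gaussian density (giving the $-u/\sigma^2$ factor), split into a near regime where the hypothesis $|f|\le\ep$ yields the main term $\ep\sqrt{d}/\sigma$ via $\E\|u\|_2\le\sigma\sqrt d$, and a far regime where the linear-growth bound combined with the Gaussian tail at radius $100\sigma\sqrt{d\log(BD/(\ep\sigma))}$ is negligible. The only cosmetic differences are that the paper splits the integral by $\dist(z,\MB)\le\Delta$ rather than by $\|u\|_2\le R$ (equivalent for $z\in\MB$) and controls the tail by a layer-cake computation rather than Cauchy--Schwarz; both are routine and interchangeable.
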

\begin{proof}
  Let us write $\Delta := 100 \sigma \sqrt{d \log (BD/(\ep\sigma))}$. 
  Let $\MB_\Delta := \{ z \in \BR^d \ : \ \dist(z, \MB) \leq \Delta \}$, so that, by assumption, $|f(z)| \leq \ep$ for all $z \in \MB_\Delta$. Then, for any $\theta \in \MB$, 
  \begin{align}
    \| \grad \Sm f(\theta)\|_2 = &\left\| \grad \int_{\BR^d} f(z) \MN_\sigma(\theta -z) dz\right\|_2\nonumber\\
    =& \left\|\int_{\BR^d} f(z) \cdot \grad \MN_\sigma(\theta-z) dz \right\|_2\nonumber\\
    \leq &  \int_{\MB_\Delta} \left\|f(z) \cdot \grad \MN_\sigma(\theta-z)\right\|_2 dz  + \int_{\BR^d \backslash \MB_\Delta}  \left\| f(z) \cdot \grad \MN_\sigma(\theta-z) \right\|_2 dz \label{eq:smooth-f-decompose}.
  \end{align}
  Note that $\grad \MN_\sigma(\theta) = -\frac{\theta}{\sigma^2} \cdot \MN_\sigma(\theta)$. Then we have
  \begin{align}
    \int_{\BR^d \backslash \MB_\Delta}  \left\| f(z) \cdot \grad \MN_\sigma(\theta-z) \right\|_2 dz \leq & \int_{\BR^d \backslash \MB_\Delta} \frac{D\| z \|_2 \cdot \| \theta-z\|_2 }{\sigma^2} \cdot \MN_\sigma(\theta-z) dz\nonumber\\
    \leq & D \int_{\BR^d \backslash \MB_\Delta} \frac{ \| \theta - z \|_2^2 + B \| \theta -z \|_2}{\sigma^2} \cdot \MN_\sigma(\theta-z) dz\label{eq:f-gradgauss-convolve},
  \end{align}
  where the second inequality uses that $\| z \| \leq \| \theta - z \|_2 + B$ since $\theta \in \MB$. 
  
Using the tail bound $\Pr_{z \sim \MN(0, \sigma^2 I_d)} ( \| z \|_2^2 > 2td\sigma^2 ) \leq e^{-td/10}$ for $t \geq 1$ \cite[Lemma 1]{laurent2000adaptive}\footnote{See also \url{https://math.stackexchange.com/questions/2864188/chi-squared-distribution-tail-bound}.}
  it holds that %
  \begin{align}
\int_{\| z \|_2 \geq \Delta} \Delta \| z \|_2 \cdot \MN_\sigma(z) dz \leq     \int_{\| z \|_2 \geq \Delta} \| z \|_2^2 \cdot \MN_\sigma(z) dz \leq &\Delta^2 \cdot  e^{-\Delta^2/(20\sigma^2)} + \int_{\Delta^2}^\infty e^{-y/(20\sigma^2)}d\sigma \nonumber\\
    =& \Delta^2 \cdot e^{-\Delta^2/(20\sigma^2)} + \frac{1}{20\sigma^2} \cdot e^{-\Delta^2/(20\sigma^2)}\nonumber\\
    \leq & \left(\Delta^2 + \frac{1}{\sigma^2}\right) \cdot e^{-5 d \log(BD/(\ep\sigma))}\nonumber\\
    \leq & \left(10^4 d \log BD/(\ep\sigma) + \frac{1}{\sigma^2} \right) \cdot (\ep\sigma/(BD))^{5d}\nonumber\\
    \leq & \frac{10^4 \ep\sigma^3 \ep}{BD} + \frac{\ep\sigma^3}{BD} = \frac{10001\ep\sigma^3}{BD}\label{eq:layer-cake},
  \end{align}
  where the second inequality uses the layer cake formula and the fact that $\Delta^2/\sigma^2 \geq {2d}$,
  and the final inequality uses that $(\ep\sigma/(BD))^{5d} \cdot d \log (BD/\ep\sigma) \leq (\ep\sigma/BD)^{4d} \cdot d \leq \ep \sigma^3/(BD)$ (since $\ep, \sigma \leq 1/2$) and that $(\ep\sigma/(BD))^{5d}/\sigma^2 \leq \ep\sigma^3/(BD)$.

  Note that for all $z \in \BR^d \backslash \MB_\Delta$, we have that $\| \theta - z \| \geq \Delta$ since $\theta \in \MB$.  Thus, combining \cref{eq:f-gradgauss-convolve,eq:layer-cake}, we have, for some constant $C$, 
  \begin{align}
    \int_{\BR^d \backslash \MB_\Delta} \| f(z) \cdot \grad \MN_\sigma(\theta-z) \|_2 dz \leq & \frac{D}{\sigma^2} \int_{\BR^d\backslash \MB_\Delta} \| \theta-z \|_2^2 \cdot \MN_\sigma(\theta-z)dz + \frac{BD}{\sigma^2} \int_{\BR^d\backslash \MB_\Delta} \| \theta-z \|_2 \cdot \MN_\sigma(\theta-z) dz\nonumber\\
    \leq & \frac{D}{\sigma^2} \cdot \frac{C\ep\sigma^3}{BD} + \frac{BD}{\sigma^2} \cdot \frac{C\ep\sigma^3}{BD\Delta} \leq 2C\ep\label{eq:outside-b-bound},
  \end{align}
  where we have used in the second-to-last inequality that $B \geq 1$ and $\sigma/\Delta \leq 1$.

  Next, we compute
  \begin{align}
    \int_{\MB_\Delta} \| f(z) \cdot \grad \MN_\sigma(\theta-z) \|_2 dz \leq & \int_{\MB_\Delta} |f(z)| \cdot \frac{\| \theta-z \|_2}{\sigma^2} \MN_\sigma(\theta-z) dz \nonumber\\
    \leq & \frac{\ep}{\sigma^2} \int_{\MB_\Delta} \| \theta - z \|_2 \MN_\sigma(\theta-z) dz\nonumber\\
    \leq & \frac{\ep}{\sigma^2} \int_{\BR^d} \| \theta - z \|_2 \MN_\sigma(\theta-z) dz \leq \frac{\ep}{\sigma^2} \cdot\sigma\sqrt{d} = \frac{\ep \sqrt{d}}{\sigma}\label{eq:bdelta-bound},
  \end{align}
  where the final inequality uses the fact that $\E_{Z \sim \MN(0, \sigma^2 \cdot I_d)}[\| Z \|_2] = \sigma \sqrt{d}$. %

  Combining \cref{eq:smooth-f-decompose}, \cref{eq:outside-b-bound}, and \cref{eq:bdelta-bound}, we obtain that, for some constant $C' > 0$,
  \begin{align}
\| \grad \Sm f(\theta) \|_2 \leq 2 C \ep + \ep\sqrt{d}/\sigma \leq \frac{C' \ep \sqrt{d}}{\sigma},\nonumber%
  \end{align}
  which is the desired bound.
\end{proof}

}
\section{The offline actor-critic algorithm}
\label{sec:offline-alg}
In this section, we \arxiv{prove}\rlc{discuss} our main upper bound, \cref{thm:pacle-ftpl}, which shows a performance guarantee for the output policy $\hat \pi$ of the \Actor algorithm (\cref{alg:actor}).
\begin{theorem}
  \label{thm:pacle-ftpl}
  Consider any dataset $\MD$ with $n$ examples drawn according to \cref{asm:offline-data-assumption}, as well as parameters $\epfinal, \delta \in (0,1)$. Suppose $\epbe \leq c_0 (BH)^{-2} d^{-3}$ for some sufficiently small constant $c_0$. Then, for $\eta$ set as prescribed in \cref{def:params-offline}, $\Actor(\MD, \epfinal, \delta,\eta)$ (\cref{alg:actor}) returns a policy $\hat \pi$ so that, with probability $1-\delta$ the following holds: for any $\pi^\st \in \Pi$, 
  \begin{align}
    & V_1^{\pi^\st}(x_1) - V_1^{\hat \pi}(x_1) \nonumber\\
             \leq & O \left(d^{3/2}BH \cdot \epbe^{1/2}\log(1/\epbe) +  \frac{B H d \log^{1/2}(dnBH/(\epfinal\delta))}{\sqrt{n}} \right) \cdot  \left( {H} + \Cvg_{\MD, \pi^\st}\right) + \epfinal
             \nonumber.
  \end{align}
  The overall computational cost of \cref{alg:actor} is bounded above by $\poly(d, H,n, \log(B/\delta), 1/\epfinal)$.
\end{theorem}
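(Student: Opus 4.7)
My plan is to instantiate the pessimistic actor-critic framework of \citet{zanette2021provable}, replacing its softmax actor with the FTPL actor of \cref{alg:actor} so that the policy iterates lie in $\Pilinp[\sigma]$ and \cref{cor:ibe-linear,cor:qlin} apply. First, I would define a data-dependent feasibility set $\MM_\MD(\pi)$ of parameter tuples $(\{w_h\}_h, \{\theta_h^{\mathsf r}\}_h)$ whose empirical squared Bellman residuals along rollouts of $\pi$ are bounded by $\tilde O(B^2 H^2 d / n)$ relative to $\Sigma_h$. Standard self-normalized concentration, together with the fact (\cref{cor:ibe-linear}) that $\MT_h^\pi$ maps $\MB_{h+1}$ into $(1+2B)\cdot \MB_h$ with pointwise error $\zeta_\sigma$, shows that w.h.p., uniformly over $\pi \in \Pilinp[\sigma]$ produced by the Actor, the true MDP's parameters lie in $\MM_\MD(\pi)$ up to an additional $\zeta_\sigma$-slack. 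Hence the pessimistic value $\ul V(\pi) := \min_{M \in \MM_\MD(\pi)} V_1^{M,\pi}(x_1)$ is a valid lower bound on $V_1^\pi(x_1)$, while a standard performance-difference-plus-elliptical-potential argument (cf.\ \citet[Section~4]{xie2021bellman}, \citet{zanette2021provable}) yields $\ul V(\pi^\st) \geq V_1^{\pi^\st}(x_1) - (H + \Cvg_{\MD, \pi^\st}) \cdot \tilde O(BHd/\sqrt{n} + \zeta_\sigma)$ for every Markov comparator $\pi^\st$.

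Second, I would approximately solve the max-min $\max_\pi \ul V(\pi)$ via $T$ rounds of the no-regret-vs.-best-response meta-algorithm. The Critic reduces to a constrained least-squares problem over linear value functions with $w_h \in (1+2B)\cdot \MB_h$ (admissibility of this linear class uses \cref{cor:qlin}), which is a convex quadratic program solvable in time $\poly(d,H,n)$. For the Actor, FTPL with Gaussian perturbation of scale $\sigma$ generates $\pi^{(t)} \in \Pilinp[\sigma]$, and, applied to the linear loss $\theta \mapsto -\E^{\pi^{(t)}}\bigl[\lng \phi_h(x_h, \pi_{h,\theta,\sigma}(x_h)), w_{h+1}^{(t)}\rng\bigr]$ whose gradient is bounded by $O(BH)$, enjoys per-layer regret of order $BHd/(\sigma\sqrt{T})$ via the standard FTPL stability analysis \cite[Chapter~5]{hazan2016introduction}. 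Summing over $H$ layers, the uniform mixture $\hat\pi$ of $\pi^{(1)},\ldots,\pi^{(T)}$ satisfies $\E[V_1^{\hat\pi}(x_1)] \geq \max_\pi \ul V(\pi) - O(BH^2 d/(\sigma\sqrt{T}))$, and choosing $T = \poly(1/\epfinal, 1/\sigma, B, H, d)$ absorbs the optimization error into the $\epfinal$ slack.

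The main technical obstacle, and the source of the surprising $\sqrt{\epbe}$ rate, is the tension in the choice of $\sigma$: \cref{cor:ibe-linear} gives $\zeta_\sigma = \tilde O(\epbe \, d^{3/2}(\sqrt{d} + 1/\sigma))$, which is minimized by large $\sigma$, whereas both the FTPL regret above and the amplification of the Bellman-residual slack in the critic's confidence set scale as $1/\sigma$. Setting $\sigma \asymp \sqrt{\epbe}$ equates these two $1/\sigma$-dependent error sources and yields the claimed $d^{3/2} BH \cdot \sqrt{\epbe}\log(1/\epbe)$ contribution, where the logarithm comes from the $\log(d/(\epbe\sigma)) = \log(d/\epbe^{3/2})$ factor inside $\zeta_\sigma$. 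A secondary subtlety is that $Q_h^\pi$ is only $O((H-h)HB\zeta_\sigma)$-close to linear (\cref{cor:qlin}), so the Critic's linear search space is misspecified; one must check that this misspecification enters the final bound only through the single additive $(H + \Cvg_{\MD,\pi^\st})$ factor, rather than multiplicatively. The hypothesis $\epbe \leq c_0 (BH)^{-2} d^{-3}$ is precisely what guarantees $3HB\zeta_\sigma \leq 1$, so that \cref{cor:ibe-linear,cor:qlin} are in their valid regimes throughout for the calibrated choice $\sigma = \Theta(\sqrt{\epbe})$.
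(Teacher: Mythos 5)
Your proposal follows essentially the same route as the paper: a pessimism-via-convex-feasibility critic whose admissibility rests on \cref{cor:ibe-linear,cor:qlin} plus self-normalized concentration (the paper's \cref{lem:mprime-m-diff,lem:good-event-prob}), an expected-FTPL actor producing iterates in $\Pilinp[\sigma]$, and the no-regret-vs.-best-response assembly via the performance difference lemma. The architecture, the role of the hypothesis $\epbe \leq c_0(BH)^{-2}d^{-3}$, and the calibration $\sigma \asymp \sqrt{\epbe}$ all match.

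One point in your account of the key trade-off does not hold together as written. You state that $\zeta_\sigma$ is minimized by large $\sigma$ \emph{and} that the FTPL regret scales as $1/\sigma$ -- but if both error sources decreased in $\sigma$ there would be no tension, and nothing would stop you from taking $\sigma$ arbitrarily large. The term that actually penalizes large $\sigma$ is the perturbation-magnitude part of the FTPL regret, $J D/\omega$ with $J = \eta\sqrt{d}$ in \cref{thm:ftpl}: per round this contributes $\eta\sqrt{d}/T = \beta\sigma\sqrt{d}$, which \emph{grows} with $\sigma$. The correct balance is $\epbe d^{3/2}/\sigma$ (from $\zeta_\sigma$) against $\sigma\sqrt{d}$ (from the FTPL perturbation cost), which is what pins down $\sigma \asymp \sqrt{\epbe}$; the paper implements this by taking $\eta = \beta\max\{T^{1/2}d^{-1/4}, T\epbe^{1/2}\}$. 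Since you cite the standard two-term FTPL bound, carrying the computation through would recover this, but as stated your regret bound of $BHd/(\sigma\sqrt{T})$ omits the term that makes the optimization nontrivial.
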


\rlc{\subsection{High-level proof overview}}

\begin{algorithm}
  \caption{$\Actor(\MD, \epfinal, \delta, \eta)$}
  \label{alg:actor}
  \begin{algorithmic}[1]\onehalfspacing
    \Require Dataset $\MD = \{ (h_i, x_i, a_i, r_i, x_i') \}_{i=1}^n$; parameters $\epfinal, \delta \in (0,1)$ and $\eta > 0$.
    \State Define $T,  \epapx, \alpha, \beta$ as a function of $n, \epfinal, \delta$ per \cref{def:params-offline}. \label{line:actor-params}
    \For{$1 \leq t \leq T+1$}
    \For{$1 \leq h \leq H$}
    \State Set $\theta_h\^t = \sum_{s=1}^{t-1} w_h\^s$.
    \State Define $\pi_h\^t := \pi_{h, \theta_h\^t, \eta}$. \Comment{\emph{(See \cref{def:plinear}) }}
    \EndFor
  \State \label{line:call-critic} Set $(w_1\^t, \ldots, w_H\^t) = \texttt{Critic}(\MD, \pi\^t,\epapx, \alpha, \beta,\delta/(2T))$, where $\pi\^t = (\pi_1\^t, \ldots, \pi_H\^t)$. \Comment{\emph{\cref{alg:critic}}}
  \EndFor
\State\Return a policy $\hat \pi$ drawn as $\hat \pi \sim \Unif(\{ \pi\^1, \ldots, \pi\^T \})$. %
  \end{algorithmic}
\end{algorithm}

\begin{algorithm}[!h]
  \caption{$\texttt{Critic}(\MD, \pi, \epapx, \alpha,\beta,\delta)$}
  \label{alg:critic}
  \begin{algorithmic}[1]\onehalfspacing
    \Require Dataset $\MD = \{ (h_i, x_i, a_i, r_i, x_i') \}_{i=1}^n$; policy $\pi \in \Pilinpp$; parameters $\epapx, \alpha, \beta > 0$.
    \State For each $h \in [H]$, let $\MI_h \subset [n]$ denote the set of $i$ so that $h_i = h$.
    \State For $h \in [H]$, define $\Sigma_h = \del{\lambda} I_d + \sum_{i \in \MI_h} \phi_h(x_i, a_i) \phi_h(x_i, a_i)^\t$. \label{line:define-sigmah}
    \State \label{line:estimate-phi} For $i \in [n]$, set $\hat \phi_i^\pi \gets \EstFeature(x_i', \pi, h_i+1,\epapx, \delta/n)$. \Comment{\emph{\cref{alg:estfeature}}} %
\State Solve the following convex program with variables $w, \xi \in \BR^{dH}$:
  \begin{subequations}
    \label{eq:critic-program}
  \begin{align}
    & \min \lng w_1, \phi_1(x_1, \pi_1(x_1)) \rng \label{eq:critic-a}\\
    \mbox{ s.t. } & w_h = \xi_h + \Sigma_h^{-1} \sum_{i \in \MI_h} \phi_h(x_i, a_i) \cdot \left( r_i + \lng \hat \phi_i^\pi, w_{h+1} \rng \right) \quad \forall h \in [H] \label{eq:critic-b}\\
    &  \| \xi_h \|_{\Sigma_h}^2 \leq \alpha^2 \quad \forall h \in [H] \label{eq:critic-c}\\
    & \| w_h \|_{2}^2 \leq \beta^2 \quad \forall h \in [H]\label{eq:critic-d}.
  \end{align}
  \end{subequations}
\State \Return the solution $w = (w_1, \ldots, w_H) \in \BR^{dH}$ of the convex program. 
  \end{algorithmic}
\end{algorithm}

\begin{algorithm}[!h]
  \caption{$\EstFeature(x, \pi, h, \epapx, \delta)$}
  \label{alg:estfeature}
  \begin{algorithmic}[1]\onehalfspacing
    \Require State $x \in \MX$, policy $\pi \in\Pilinpp$, step $h \in [H]$, error $\epapx \in (0,1)$, failure probability $\delta \in (0,1)$.
    \State Choose $w \in \BR^d, \sigma > 0$ so that $\pi_h = \pi_{h,w,\sigma}$ \Comment{\emph{(This is possible by definition of $\Pilinpp$)}}.
    \State Choose $N \gets 2\epapx^{-2} \log(2d/\delta)$.
    \State Draw $N$ samples $\theta_1, \ldots, \theta_N \sim \MN(w, \sigma^2 \cdot I_d)$.
  \item \Return $\hat \phi := \frac 1N \sum_{i=1}^N \phi_h(x, \pi_{h, \theta_i}(x))$. 
  \end{algorithmic}
\end{algorithm}

\subsection{Algorithm description}
In our setting, the \Actor algorithm and its associated \Critic (\cref{alg:critic}) function similarly to the \Actor and \Critic algorithms in \cite{zanette2021provable}. For an appropriate choice of the number of iterations $T$, \Actor repeats the following steps $T$ times: at each iteration $t \in [T]$, \Actor lets $\theta_h\^t$ be the sum of vectors $w_h\^s$ for iterations $s < t$, and lets $\pi_h\^t$ be a perturbed linear policy with mean vector $\theta_h\^t$. The vector $\theta_h\^t$ represents an aggregation of the pessimistic estimates of the value function produced by the \Critic at previous iterations. In turn, at time step $t$, the \Critic (\cref{alg:critic}), given $\pi\^t = (\pi_1\^t, \ldots, \pi_H\^t)$ as input, then produces vectors $w_h\^t \in \BR^d$ for $h \in [H]$, which constitute a solution to the optimization problem \cref{eq:critic-program}. The program \cref{eq:critic-program} has as its objective to minimize the value at the initial state (namely, \cref{eq:critic-a}), subject to constraints that force $w_h\^t$ to have Bellman backups with respect to $\pi\^t$ which are consistent with $\MD$ (\cref{eq:critic-b,eq:critic-c,eq:critic-d}).

The main difference between \cref{alg:actor,alg:critic} and the approach in \cite{zanette2021provable} is the choice of the policies $\pi_h\^t$ in the \Actor (\cref{alg:actor}). While we take $\pi_h\^t$ to be a perturbed linear policy corresponding to $\theta_h\^t$ and an appropriate choice of the noise parameter $\eta$, in \cite{zanette2021provable}, $\pi_h\^t$ was taken to be a softmax policy corresponding to $\theta_h\^t$, with an appropriate choice of temperature. As we discuss further in \cref{sec:actor-analysis}, our choice of $\pi_h\^t$ implicitly leads $\pi_h\^t(x)$ to implement the no-regret \emph{follow-the-perturbed-leader} algorithm at each state $x$. In contrast, the softmax policy from \cite{zanette2021provable} corresponds to the exponential weights algorithm. 
This difference is crucial to allow us to use \arxiv{\cref{cor:ibe-linear}}\rlc{\cref{thm:pert-linear-informal}} to ensure that the program \cref{eq:critic-program} is feasible and outputs a pessimistic estimate of the value function $V_1^{\pi\^t}(x_1)$, for each $t \in [T]$. The settings of the parameters in our algorithms are given in \cref{def:params-offline} \arxiv{below}\rlc{in the appendix}. \rlc{We remark here that the optimal choice of the size $\eta$ of the perturbation turns out to scale proportionally to $ \sqrt{\epbe}$, which leads to the scaling of the error with $\sqrt{\epbe}$ in \cref{thm:pacle-ftpl}.} \rlc{The full details of the proof of \cref{thm:pacle-ftpl} may be found in \cref{sec:ub-proof-rlc}.}

\begin{remark}[Relation to prior work: Bellman restricted closedness]
We point out that many works on offline RL consider similar actor-critic methods to ours in the setting of general function approximation \citep{zanette2021provable,xie2021bellman,cheng2022adversarially,nguyen2023on}, generally under the assumption of $\Pi$-Bellman restricted closedness for various policy classes $\Pi$. While some of these results are sufficiently general to be instantiated in our setting (of low inherent Bellman error) using \cref{thm:pert-linear-informal}, which establishes (approximate) $\Pilinp$-Bellman restricted closedness when $\epbe$ is small, none of the resulting implications will be as strong as \cref{thm:pacle-ftpl}, either suffering from suboptimal statistical rates or computational intractability. 
We defer detailed discussion to \cref{sec:related-work}.
\end{remark}

\arxiv{\rlc{\section{Proof of \cref{thm:pacle-ftpl}}\label{sec:ub-proof-rlc}}
\rlc{In this section we prove \cref{thm:pacle-ftpl}. We begin by defining the values for the parameters used in \cref{alg:actor,alg:critic}.}
\begin{defn}[Parameter settings  for the algorithms]
  \label{def:params-offline}
  Given $d, H, \Bbnd > 0$ specifying the dimension, horizon, and boundedness parameters for the unknown MDP $M$, as well as dataset size $n$ and error parameters $\epfinal, \delta \in (0,1)$, we define the following parameters to be used in \cref{alg:actor,alg:critic} and its analysis:
  \begin{itemize}
  \item $\beta := 2BH$.
    \item $T := \frac{16\beta^2 d^{1/2}}{\epfinal^2}$.
    \item $\epapx := 1/\sqrt{n}$.
    \item $\eta:=\beta \cdot  \max\left\{ T^{1/2} d^{-1/4},T\epbe^{1/2} \right\}$.
    \item $\sigma := \frac{\eta}{T\beta}$.
    \item $\zeta = C_{\ref{cor:ibe-linear}} \epbe d^{3/2} \cdot \left(\sqrt{ d \log(d/(\epbe\sigma))} + \frac{1}{\sigma} \right)$, where $C_{\ref{cor:ibe-linear}}$ is the constant from \cref{cor:ibe-linear}. (Note that $\zeta = \zeta_\sigma$, where $\zeta_\sigma$ was defined in \cref{cor:ibe-linear}.) 
    \item $\alpha := 4\beta \zeta \sqrt{n} +  C_{\ref{lem:good-event-prob}}\beta d \log^{1/2}(dn\beta/(\sigma\delta))$, %
      where $C_{\ref{lem:good-event-prob}}$ is a constant chosen sufficiently large so as to ensure \cref{lem:good-event-prob} holds.       
  \end{itemize}
\end{defn}

\subsection{Critic analysis}
Consider a tuple $f = (f_1, \ldots, f_H)$, where $f_h : \MX \times \MA \ra \BR$, and a policy $\pi$. We define an MDP $M^{f,\pi}$ (called the \emph{induced MDP}, per \cite{zanette2021provable}), whose transitions are identical to those of $M$, but whose rewards are given as follows: for $h \in [H]$ and $(x,a) \in \MX \times \MA$, 
\begin{align}
r_h\sups{M^{f,\pi}}(x,a) =    f_h(x,a) - \E_{x' \sim P_h(x,a)} [f_{h+1}(x', \pi_{h+1}(x'))]\label{eq:induced-reward}.
\end{align}
\begin{lemma}[Lemma 1 of \cite{zanette2021provable}]
  \label{lem:imag-mdp}
  Fix any $f=(f_1, \ldots, f_H)$ with $f_h : \MX \times \MA \ra \BR$, any $\pi \in \Pi$, and write $M' := M^{f,\pi}$. Then the $Q$-value function of $M'$ for the policy $\pi$, denoted $Q\sups{M', \pi}$, satisfies the following:
  \begin{align}
\forall h \in [H],\ (x,a) \in \MX \times \MA, \qquad Q_h\sups{M', \pi}(x,a) = f_h(x,a)\nonumber,
  \end{align}
  which implies in particular that $V_h\sups{M', \pi}(x) = f_h(x, \pi_h(x))$. 
\end{lemma}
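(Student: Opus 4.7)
The plan is to prove this by reverse induction on $h \in [H]$, using the recursive (Bellman) definition of $Q\sups{M',\pi}$ together with the telescoping structure built into the rewards of the induced MDP $M^{f,\pi}$.

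For the base case $h=H$, I would use the convention (stated in \cref{sec:prelim}) that $f_{H+1}$ and all value functions vanish at step $H+1$. Then by definition of $r_H\sups{M'}$ in \cref{eq:induced-reward}, we have $r_H\sups{M'}(x,a) = f_H(x,a) - 0 = f_H(x,a)$, and since $Q_H\sups{M',\pi}(x,a)$ equals the reward at step $H$ (there are no later rewards), we get $Q_H\sups{M',\pi}(x,a) = f_H(x,a)$.

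For the inductive step, suppose the identity $Q_{h+1}\sups{M',\pi}(x,a) = f_{h+1}(x,a)$ holds for all $(x,a)$. Then by definition of the $V$-function, $V_{h+1}\sups{M',\pi}(x) = \E_{a \sim \pi_{h+1}(x)}[Q_{h+1}\sups{M',\pi}(x,a)] = f_{h+1}(x,\pi_{h+1}(x))$. Now, crucially, the transitions of $M'$ agree with those of $M$, so that by the Bellman recursion,
\begin{align*}
Q_h\sups{M',\pi}(x,a) &= r_h\sups{M'}(x,a) + \E_{x' \sim P_h(x,a)}\left[ V_{h+1}\sups{M',\pi}(x')\right] \\
&= \left( f_h(x,a) - \E_{x' \sim P_h(x,a)}[f_{h+1}(x', \pi_{h+1}(x'))] \right) + \E_{x' \sim P_h(x,a)}[f_{h+1}(x', \pi_{h+1}(x'))] \\
&= f_h(x,a),
\end{align*}
where the first equality in the middle line substitutes the definition \cref{eq:induced-reward} of $r_h\sups{M'}$ and applies the inductive hypothesis to $V_{h+1}\sups{M',\pi}$. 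The two expectations over $f_{h+1}(x',\pi_{h+1}(x'))$ then cancel exactly, giving $Q_h\sups{M',\pi}(x,a)=f_h(x,a)$. The claim about $V_h\sups{M',\pi}$ follows immediately by taking expectation over $a \sim \pi_h(x)$.

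There is essentially no obstacle here: the whole proof is just the cancellation engineered into the definition of the induced reward, combined with the fact that $M'$ shares transitions with $M$. The only mild subtlety is bookkeeping at the boundary $h=H$, which is handled by the zero-value convention at step $H+1$.
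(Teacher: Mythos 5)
Your proof is correct and matches the paper's approach: the paper does not spell out the argument but states that the lemma ``follows a simple telescoping argument'' (deferring to \cite{zanette2021provable}), and your reverse induction with the engineered cancellation in \cref{eq:induced-reward} is exactly that argument. The boundary handling at $h=H$ via the zero-value convention at step $H+1$ is also the right bookkeeping.
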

The proof of \cref{lem:imag-mdp} follows a simple telescoping argument and is provided in \cite{zanette2021provable}. Next, we establish the following straightforward guarantee for \EstFeature:
\begin{lemma}
  \label{lem:estfeature-guarantee}
For any $x \in \MX$, $\pi \in \Pilinpp$, $h \in [H]$, $\epapx \in (0,1)$ and $\delta \in (0,1)$, \EstFeature (\cref{alg:estfeature}) runs in time $O(d\epapx^{-2} \log(d/\delta))$ and returns a vector $\hat \phi$ so that, with probability $1-\delta$, $\| \hat \phi - \phi_h(x, \pi_{h,w}(x))\|_2 \leq \epapx$.
\end{lemma}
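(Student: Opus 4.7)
The plan is to recognize $\hat{\phi}$ as an empirical mean of $N$ i.i.d.\ bounded random vectors whose expectation is exactly $\phi_h(x, \pi_{h,w,\sigma}(x)) = \phi_h(x, \pi_h(x))$, and then apply a standard vector concentration inequality. First, I would observe that \cref{asm:boundedness} ensures $\|\phi_h(x, a)\|_2 \leq 1$ for every $a \in \MA$, so in particular each sample $\phi_h(x, \pi_{h,\theta_i}(x))$ has $\ell_2$ norm at most $1$. Moreover, by \cref{def:plinear} together with the paper's convention that $\phi_h(x, \pi_h(x)) = \E_{a \sim \pi_h(x)}[\phi_h(x, a)]$, for $\theta \sim \MN(w, \sigma^2 I_d)$ we have $\E[\phi_h(x, \pi_{h,\theta}(x))] = \phi_h(x, \pi_{h,w,\sigma}(x))$. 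Thus the $N$ drawn samples are i.i.d.\ bounded random vectors with the correct common mean.

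Second, I would apply a Hilbert-space Hoeffding (Pinelis) inequality to the centered samples, each of $\ell_2$-norm at most $2$. This yields, for some universal constant $c > 0$,
\begin{align*}
\Pr\left(\left\|\hat{\phi} - \phi_h(x, \pi_{h,w,\sigma}(x))\right\|_2 \geq \epapx\right) \leq 2\exp\left(-c N \epapx^2\right),
\end{align*}
and substituting the algorithm's choice $N = 2\epapx^{-2}\log(2d/\delta)$ makes the right-hand side at most $\delta$, which proves the probabilistic guarantee. (Equivalently, a coordinatewise Hoeffding bound combined with a union bound over the $d$ coordinates delivers the same conclusion at the given sample complexity, since each coordinate of $\phi_h(x, \pi_{h,\theta_i}(x))$ is supported in $[-1,1]$.)

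Third, for the runtime, each of the $N = O(\epapx^{-2}\log(d/\delta))$ iterations draws a $d$-dimensional Gaussian sample (cost $O(d)$) and evaluates $\pi_{h,\theta_i}(x)$ via an action-space linear maximization that we treat as $O(d)$ in the paper's computational model (consistent with the implicit action argmax used in \cref{alg:actor,alg:critic}); summing gives total cost $O(N d) = O(d\epapx^{-2}\log(d/\delta))$.

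The argument is essentially routine concentration and no step should present a serious obstacle; the only minor judgement call is selecting a concentration inequality strong enough to yield the stated $\ell_2$ guarantee at the algorithm's sample size, but Pinelis' inequality (or, alternatively, a coordinatewise Hoeffding bound with union bound) handles this cleanly.
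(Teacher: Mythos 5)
Your proposal is correct and matches the paper's argument in all essentials: the paper likewise observes that $\E[\phi_h(x,\pi_{h,\theta_i}(x))] = \phi_h(x,\pi_h(x))$ for $\theta_i \sim \MN(w,\sigma^2 I_d)$ and then applies exactly the coordinatewise Hoeffding bound plus union bound over coordinates (with the $\sqrt{d}\,\|\cdot\|_\infty \geq \|\cdot\|_2$ conversion) that you offer as your parenthetical alternative. Your primary route via Pinelis' Hilbert-space Hoeffding inequality is a minor, dimension-free variant of the same concentration argument and is, if anything, slightly cleaner at the stated sample size $N$.
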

\begin{proof}
  By the definition of $\pi_{h,w,\sigma}$, it holds that for each $\theta_i \sim \MN(w, \sigma^2 \cdot I_d)$ in \cref{alg:estfeature}, we have $\E[\phi_h(x, \pi_{h, \theta_i}(x))] = \phi_h(x, \pi_h(x))$. Then by Hoeffding's inequality and a union bound, we have that, with probability $1-\delta$,
  \begin{align}
\left\| \frac 1N \sum_{i=1}^N \phi_h(x, \pi_{h,\theta_i}(x)) - \phi_h(x, \pi_h(x)) \right\|_2 \leq \sqrt{d} \cdot \left\| \frac 1N \sum_{i=1}^N \phi_h(x, \pi_{h,\theta_i}(x)) - \phi_h(x, \pi_h(x)) \right\|_\infty \leq \epapx\nonumber.
  \end{align}
\end{proof}

Recall that \texttt{Critic} (\cref{alg:critic}) computes a vector $w = (w_1, \ldots, w_H)\in \BR^{dH}$. Given such $w$, we define a function $f^w = (f^w_1, \ldots, f^w_H)$, where $f^w_h(x,a) := \lng \phi_h(x,a), w_h\rng$. 
We introduce the following notation, in the context of \Critic (\cref{alg:critic}). Given a dataset $\MD = \{ (h_i, x_i, a_i, r_i, x_i')\}_{i=1}^n$, at step $h \in [H]$, a policy $\pi \in \Pilinpp$, a collection of feature vectors $\hat \phi = (\hat \phi_1, \ldots, \hat \phi_n)$ (as produced in \cref{line:estimate-phi}), and a vector $w \in \BR^d$, we define
\begin{align}
\hat \MT_{h, \MD, \hat \phi}^\pi w := \Sigma_h^{-1} \sum_{i \in \MI_h} \phi_h(x_i, a_i) \cdot (r_i + \lng \hat \phi_i, w \rng )\label{eq:define-emp-bellman},
\end{align}
where $\Sigma_h = \del{\lambda} I_d + \sum_{i \in \MI_h} \phi_h(x_i, a_i) \phi_h(x_i, a_i)^\t$ and $\MI_h = \{ i : \ h_i = h \}$ are defined as in \cref{alg:critic}. %
Given parameters $\alpha, \sigma > 0$, we  define the following good event $\ME_{\alpha,\beta,\sigma,\epapx}$:
\begin{align}
\ME_{\alpha,\beta,\sigma,\epapx} = \left\{ \sup_{\| w_{h+1} \|_2 \leq \beta}  \sup_{\pi_{h+1} \in \Pilinp[\sigma]_{h+1}} \sup_{\substack{ (\hat \phi_i)_{i \in [n]} \in (\BR^d)^n :\ \forall i\in[n], \\ \| \hat \phi_i - \phi_{h+1}(x_i', \pi_{h+1}(x_i')) \|_2 \leq \epapx}}  \left\| \MT_h^{\pi_{h+1}} w_{h+1} - \hat \MT_{h, \MD,\hat\phi}^{\pi_{h+1}} w_{h+1} \right\|_{\Sigma_h} \leq \alpha \right\}\label{eq:offline-good-perturbed}.
\end{align}
\cref{lem:mprime-m-diff} below establishes that under the good event $\ME_{\alpha,\beta,\sigma,\epapx}$, upon given a perturbed linear policy $\pi$ as input, \Critic produces a vector $w^\st \in \BR^{dH}$ so that the value of $\pi$ in the induced MDP $M^{f^{w^\st}, \pi}$ lower bounds the value of $\pi$ in the true MDP (\cref{it:vmf}). Moreover, for \emph{any} policy $\pi' \in \Pi$, the value of $\pi'$ in $M^{f^{w^\st}, \pi}$ is close to the value of $\pi'$ in $M$, as controlled by a concentrability coefficient depending on $\pi'$ (\cref{it:all-piprime}).
Recall from \cref{cor:ibe-linear} that we have defined $\zeta_\sigma = C_{\ref{cor:ibe-linear}} \epbe d^{3/2} \cdot \left( \sqrt{d \log(d/(\epbe\sigma))} + \frac{1}{\sigma} \right)$. 
\begin{lemma}
  \label{lem:mprime-m-diff}
  Consider any $\sigma,\alpha,\epapx, \delta > 0$ for which $3BH\zeta_\sigma \leq 1$, suppose $\beta = 2BH$, and let $\pi \in \Pilinp$ be given. Consider the execution of $\Critic(\MD, \pi,\epapx, \alpha,\beta,\delta)$: there is some event $\ME_\pi$ depending only on the randomness in the call to \EstFeature in \cref{line:estimate-phi}, with $\Pr(\ME_\pi) > 1-\delta$, so that the following holds. Under the event $\ME_{\alpha,\beta,\sigma,\epapx} \cap \ME_\pi$, %
  the output of $\Critic(\MD, \pi,\epapx, \alpha,\beta,\delta)$ is a pair of vectors $w^\st, \xi^\st \in \BR^{dH}$ satisfying the following conditions, for $f := f^{w^\st}$:
  \begin{enumerate}
  \item\label{it:vmf} $V_1\sups{M^{f, \pi}}(x_1) \leq V_1\sups{M, \pi}(x_1)$.
  \item \label{it:all-piprime} For any $\pi' \in \Pi$,
    \begin{align}
\left| V_1\sups{M^{f,\pi}, \pi'}(x_1) - V_1\sups{M, \pi'}(x_1) \right| \leq 3\beta H \zeta_\sigma + 2 \alpha \sum_{h=1}^H \| \E\sups{M, \pi'}[\phi_h(x_h, a_h)] \|_{\Sigma_h^{-1}}\nonumber.
    \end{align}
    Moreover, given an oracle which can solve the convex program \cref{eq:critic-program}, the computational cost of \Critic is $\poly( d, n,H,  \log(1/\delta)/\epapx^2)$. 
  \end{enumerate}
\end{lemma}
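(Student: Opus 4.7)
The plan has three main parts: (i) define the event $\ME_\pi$ and union-bound over the $n$ calls to \EstFeature, (ii) exhibit an explicit feasible point for the convex program \cref{eq:critic-program}, and (iii) combine optimality of $w^\st$ with a performance-difference-style calculation to obtain items 1 and 2 simultaneously.

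First, I would define $\ME_\pi$ to be the event that every call to \EstFeature on \cref{line:estimate-phi} returns a $\hat\phi_i^\pi$ with $\|\hat\phi_i^\pi - \phi_{h_i+1}(x_i',\pi_{h_i+1}(x_i'))\|_2 \leq \epapx$. Invoking \cref{lem:estfeature-guarantee} at failure probability $\delta/n$ and a union bound gives $\Pr(\ME_\pi) \geq 1-\delta$. For the rest of the proof I condition on $\ME_{\alpha,\beta,\sigma,\epapx} \cap \ME_\pi$.

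Second, I verify feasibility of \cref{eq:critic-program} by exhibiting the candidate $w_h^\pi$ from \cref{cor:qlin}, defined by the backwards recursion $w_h^\pi := \MT_h^\pi w_{h+1}^\pi$ (with $w_{H+1}^\pi := 0$). By \cref{cor:qlin}, $\|w_h^\pi\|_2 \leq 2HB = \beta$, giving \cref{eq:critic-d}. Setting $\xi_h^\pi := w_h^\pi - \hat\MT_{h,\MD,\hat\phi}^\pi w_{h+1}^\pi = \MT_h^\pi w_{h+1}^\pi - \hat\MT_{h,\MD,\hat\phi}^\pi w_{h+1}^\pi$, the defining property \cref{eq:offline-good-perturbed} of $\ME_{\alpha,\beta,\sigma,\epapx}$ (applied with the perturbed linear policy $\pi \in \Pilinp[\sigma]$ and with $\hat\phi_i^\pi$'s that are $\epapx$-close to the true expected features thanks to $\ME_\pi$) yields $\|\xi_h^\pi\|_{\Sigma_h} \leq \alpha$, which is \cref{eq:critic-c}; \cref{eq:critic-b} is automatic.

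Third, let $(w^\st,\xi^\st)$ be the program's minimizer and $f := f^{w^\st}$. For item 1, \cref{lem:imag-mdp} gives $V_1^{M^{f,\pi}}(x_1) = \lng\phi_1(x_1,\pi_1(x_1)), w_1^\st\rng$; optimality together with the feasibility of $w^\pi$ yields $\lng\phi_1,w_1^\st\rng \leq \lng\phi_1, w_1^\pi\rng$, and then a reverse induction using \cref{cor:ibe-linear} and \cref{eq:ibe-def-r} relates $\lng\phi_1,w_1^\pi\rng$ to $V_1^{M,\pi}(x_1)$ up to an $O(H\beta\zeta_\sigma)$ slack which is absorbed into the overall error budget (in particular, it is dominated by the $3\beta H\zeta_\sigma$ term of item 2). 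For item 2, fix any $\pi'$ and apply the performance-difference identity for MDPs with identical transitions:
\begin{align*}
V_1^{M^{f,\pi},\pi'}(x_1) - V_1^{M,\pi'}(x_1) = \sum_{h=1}^H \E\sups{M,\pi'}\!\!\left[ f_h(x_h,a_h) - \E_{x'\sim P_h(x_h,a_h)}\![f_{h+1}(x',\pi_{h+1}(x'))] - r_h(x_h,a_h)\right].
\end{align*}
Substituting $w_h^\st = \xi_h^\st + \hat\MT_{h,\MD,\hat\phi}^\pi w_{h+1}^\st$ from \cref{eq:critic-b}, I decompose the summand into three pieces: (a) $\lng\phi_h,\xi_h^\st\rng$, (b) $\lng\phi_h, \hat\MT_{h,\MD,\hat\phi}^\pi w_{h+1}^\st - \MT_h^\pi w_{h+1}^\st\rng$, and (c) $\lng\phi_h,\MT_h^\pi w_{h+1}^\st\rng - r_h - \E_{x'}[\lng\phi_{h+1}(x',\pi_{h+1}(x')), w_{h+1}^\st\rng]$. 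Pulling the expectation inside and applying Cauchy--Schwarz in the $\Sigma_h$/$\Sigma_h^{-1}$ norm bounds (a) and (b) by $\alpha \|\E\sups{M,\pi'}[\phi_h]\|_{\Sigma_h^{-1}}$ each (using $\|\xi_h^\st\|_{\Sigma_h}\leq\alpha$ and the good event $\ME_{\alpha,\beta,\sigma,\epapx}$ with $\|w_{h+1}^\st\|_2 \leq \beta$), while \cref{cor:ibe-linear} with $\|w_{h+1}^\st\|_2 \leq \beta$ bounds (c) pointwise by $\beta\zeta_\sigma + \epbe \leq 2\beta\zeta_\sigma$. Summing over $h$ gives exactly the claimed bound $3\beta H\zeta_\sigma + 2\alpha \sum_h \|\E\sups{M,\pi'}[\phi_h]\|_{\Sigma_h^{-1}}$.

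The main obstacle I anticipate is ensuring the parameter $\alpha$ from \cref{def:params-offline} is large enough to simultaneously (i) absorb the $\epbe$-level approximation error in $w_h^\pi$ used for feasibility and (ii) yield a sharp-enough bound on (b), while keeping $\beta$-norm control on $w^\st$; this is precisely the role of the $4\beta\zeta\sqrt n$ term in $\alpha$. Finally, the computational claim is immediate: each \EstFeature runs in $\poly(d, \epapx^{-2}, \log(n/\delta))$ by \cref{lem:estfeature-guarantee}, and the rest of \Critic consists of forming $\Sigma_h$ and calling the convex-program oracle once, all in $\poly(d,n,H)$ time.
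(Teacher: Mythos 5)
Your proposal is correct and follows essentially the same route as the paper's proof: the same event $\ME_\pi$ via a union bound over the \EstFeature calls, the same feasible point $(w_h^\pi,\xi_h^\pi)$ built from \cref{cor:qlin} and the good event $\ME_{\alpha,\beta,\sigma,\epapx}$, and the same three-term decomposition of the induced-reward difference for item 2, bounded by Cauchy--Schwarz in the $\Sigma_h$/$\Sigma_h^{-1}$ norms. If anything you are slightly more careful than the paper on item 1, where you flag the $O(H\beta\zeta_\sigma)$ gap between $\lng w_1^\pi,\phi_1(x_1,\pi_1(x_1))\rng$ and $V_1^{M,\pi}(x_1)$ coming from \cref{cor:qlin}; the paper asserts these are equal, which is exact only when $\epbe=0$, and otherwise the slack must indeed be absorbed into the $3\beta H\zeta_\sigma$ term as you describe.
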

For simplicity, we assume in the statement and proof of \cref{lem:mprime-m-diff} that the convex program \cref{eq:critic-program} may be efficiently solved exactly (i.e., that an oracle provides the answer). While strictly speaking this may not be the case, it is known that for any $\ep > 0$, an $\ep$-approximate solution to \cref{eq:critic-program} may be found in time $\poly(\alpha, \beta, n, d, H, \log(1/\ep))$. %
The guarantee of \cref{lem:mprime-m-diff} as well as those of subsequent lemmas hold with only minor modifications if we can only solve the program \cref{eq:critic-program} $\ep$-approximately in this manner. In particular, our main guarantee (\cref{thm:pacle-ftpl}) may be achieved computationally efficiently, without assumption of any oracle; the necessary modifications to the proof are described in \cref{rmk:e2e-comp}. 
\begin{proof}[Proof of \cref{lem:mprime-m-diff}] By \cref{lem:estfeature-guarantee} and a union bound over $i \in \MI_h$, there is some event $\ME_\pi$ with $\Pr(\ME_\pi) \geq 1-\delta$ so that, under $\ME_\pi$, the estimates $\hat \phi_i^\pi$ produced on \cref{line:estimate-phi} of \cref{alg:critic} satisfy $\max_{i \in [n]} \| \hat \phi_i^\pi - \phi_{h+1}(x_i', \pi_{h+1}(x_i')) \|_2 \leq \epapx$ for all $i \in \MI_h$. 
  We prove the two statements of the lemma in turn:

  \paragraph{Proof of \cref{it:vmf}.}
  Let $w_h^\pi \in 2H \cdot \MB_h$ be defined per \cref{cor:qlin}: in particular, $w_h^\pi= \MT_h^\pi w_{h+1}^\pi$. 
  
  For each $h \in [H]$, define $\xi_h^\pi := \MT_h^\pi w_{h+1}^\pi - \hat \MT_{h, \MD,\hat\phi}^\pi w_{h+1}^\pi$. We claim that the vectors $(w_h^\pi)_{h \in [H]}, (\xi_h^\pi)_{h \in [H]}$ constitute a feasible solution to the program \cref{eq:critic-program}. By the definition of $\hat \MT_{h,\MD,\hat \phi}^\pi$ in \cref{eq:define-emp-bellman}, note that \cref{eq:critic-b} requires that $w_h^\pi = \xi_h^\pi + \hat \MT_{h, \MD,\hat\phi}^\pi w_{h+1}^\pi$.  This equality holds by definition of $\xi_h^\pi$ and since $w_h^\pi = \MT_h^\pi w_{h+1}^\pi$. Next, the fact that $\ME_{\alpha,\beta,\sigma,\epapx} \cap \ME_\pi$ holds, $\pi_{h+1} \in \Pilinp_{h+1}$, and $w_h^\pi \in 2H \cdot \MB_h$ (and hence $\| w_h^\pi \|_2 \leq 2BH$; here we use \cref{cor:qlin} together with the fact that $3BH\zeta_\sigma \leq 1$) gives that $\| \xi_h^\pi \|_{\Sigma_h} = \| \MT_h^\pi w_{h+1}^\pi - \hat \MT_{h, \MD,\hat\phi}^\pi w_{h+1}^\pi \|_{\Sigma_h} \leq \alpha$; this verifies \cref{eq:critic-c}. Finally, since $\beta =2BH$ we have that \cref{eq:critic-d} holds. %
  It follows that \cref{eq:critic-program} is feasible under the event $\ME_{\alpha,\beta,\sigma,\epapx} \cap \ME_\pi$. 

  Let us denote the solution to \cref{eq:critic-program} by $w^\st, \xi^\st$. Since $w_h^\pi, \xi_h^\pi$ are feasible to \cref{eq:critic-program}, we must have that $\lng w_1^\st, \phi_1(x_1, \pi_1(x_1)) \rng \leq \lng w_1^\pi, \phi_1(x_1, \pi_1(x_1)) \rng = V_1^\pi(x_1)$. But \cref{lem:imag-mdp} together with our choice of $f = f^{w^\st}$ (so that $f_h(x,a) = \lng \phi_h(x,a), w_h^\st \rng$)  guarantees that $\lng w_1^\st, \phi_1(x_1, \pi_1(x_1)) \rng =V_1^{M^f, \pi}(x_1)$, which yields $V_1^{M^f, \pi}(x_1) \leq V_1^\pi(x_1)$.

  \paragraph{  Proof of \cref{it:all-piprime}.} For each $h \in [H]$ and $(x,a) \in \MX \times \MA$, we have, by the definition \cref{eq:induced-reward}, %
  \begin{align}
    r_h\sups{M^{f, \pi}}(x,a) - r_h(x,a) =& \lng w_h^\st, \phi_h(x,a) \rng -r_h(x,a) -  \E_{x' \sim P_h(x,a)} [ \lng w_{h+1}^\st, \phi_{h+1}(x', \pi_{h+1}(x')) \rng ]\nonumber\\
    =& \vep_h(x,a) +  \lng w_h^\st - \MT_h^\pi w_{h+1}^\st, \phi_h(x,a) \rng \nonumber\\
    =& \vep_h(x,a) + \lng \xi_h^\st, \phi_h(x,a) \rng + \lng \hat\MT_{h, \MD,\hat\phi}^\pi w_{h+1}^\st - \MT_h^\pi w_{h+1}^\st, \phi_h(x,a) \rng \label{eq:rdiff-expand},
  \end{align}
where $\vep_h(x,a)$ satisfies $|\vep_h(x,a)| \leq (1 + 2 \| w_{h+1}^\st \|_2) \cdot \zeta_\sigma \leq 3\beta \zeta_\sigma$ (for which such a choice is possible by \cref{cor:ibe-linear}).

  Since the MDP $M^{f,\pi}$ has transitions identical to those of $M$, for any $\pi' \in \Pi$,
  \begin{align}
    \left| V_1\sups{M^{f,\pi}, \pi'}(x_1) - V_1\sups{M, \pi'}(x_1) \right| =& \left| \sum_{h=1}^H \E\sups{M, \pi'} \left[ r_h\sups{M^{f,\pi}}(x_h, a_h) - r_h(x_h, a_h) \right] \right|\nonumber\\
    \leq & 3\beta H \zeta_\sigma+  \sum_{h=1}^H \left|\E\sups{M, \pi'} [ \lng \xi_h^\st, \phi_h(x_h,a_h) \rng ]\right| + \left| \E\sups{M, \pi'} \left[ \lng \hat\MT_{h, \MD,\hat\phi}^\pi w_{h+1}^\st - \MT_h^\pi w_{h+1}^\st, \phi_h(x_h,a_h) \rng \right] \right|\nonumber\\
    \leq &3 \beta H \zeta_\sigma +  \sum_{h=1}^H \left\| \E\sups{M, \pi'}[\phi_h(x_h,a_h)] \right\|_{\Sigma_h^{-1}} \cdot \left( \| \xi_h^\st \|_{\Sigma_h} + \| \hat\MT_{h, \MD,\hat\phi}^\pi w_{h+1}^\st - \MT_h^\pi w_{h+1}^\st \|_{\Sigma_h} \right)\nonumber\\
    \leq &3\beta H \zeta_\sigma+  2\alpha \sum_{h=1}^H \left\| \E\sups{M, \pi'}[\phi_h(x_h, a_h)] \right\|_{\Sigma_h^{-1}}\label{eq:vmf-vmpi-establish},
  \end{align}
  where the first inequality uses \cref{eq:rdiff-expand} and the triangle inequality, and the third inequality uses the fact that $(w_h^\st, \xi_h^\st)$ is feasible for \cref{eq:critic-program} (in particular, \cref{eq:critic-c}) as well as the fact that $\ME_{\alpha,\beta,\sigma,\epapx} \cap \ME_\pi$ holds and $\| w_{h+1}^\st \|_2 \leq \beta = 2BH$  (using \cref{eq:critic-d}).

Finally, we analyze the computational cost of \cref{alg:critic}. The call to $\EstFeature$ in \cref{line:estimate-phi} takes time $\poly(N, d) \leq \poly(d, \log(1/\delta)/\epapx^2)$. The remaining steps take time $\poly(d, H)$. 
\end{proof}

\begin{lemma}
  \label{lem:good-event-prob}
  There is a constant $C_{\ref{lem:good-event-prob}}$ so that the following holds. 
  Suppose the dataset $\MD$ is drawn according to \cref{asm:offline-data-assumption}, and  $\alpha,\beta, \sigma, \epapx > 0$ are given so that $\epapx \leq 1/\sqrt{n}$ and $\alpha \geq 4\beta \zeta_\sigma \sqrt{n} +  C_{\ref{lem:good-event-prob}}\beta d \log^{1/2}(dn\beta/(\sigma\delta))$. Then, %
  $\Pr(\ME_{\alpha,\beta,\sigma,\epapx}) \geq 1-\delta/2$, where the probability is over the draw of $\MD$. 
\end{lemma}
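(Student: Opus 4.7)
The plan is to fix an arbitrary $h \in [H]$ and decompose the quantity inside the supremum defining $\ME_{\alpha,\beta,\sigma,\epapx}$ into three terms that can be bounded separately. Given $\pi \in \Pilinp[\sigma]_{h+1}$, $w_{h+1}$ with $\|w_{h+1}\|_2 \leq \beta$, and $\hat\phi = (\hat\phi_i)_{i \in \MI_h}$ satisfying the closeness condition, let $\bar R_i := r_{h_i}(x_i,a_i) + \E_{x_i' \sim P_{h_i}(x_i,a_i)}[\lng \phi_{h+1}(x_i', \pi(x_i')), w_{h+1}\rng]$ denote the conditional expectation of the regression target given $\MF_i$ (using \cref{asm:offline-data-assumption}, which guarantees $r_i = r_{h_i}(x_i,a_i)$ exactly). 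Then I would write $\hat\MT_{h,\MD,\hat\phi}^\pi w_{h+1} - \MT_h^\pi w_{h+1}$ as the sum of (A) $\Sigma_h^{-1}\sum_i \phi_h(x_i,a_i)\lng \hat\phi_i - \phi_{h+1}(x_i',\pi(x_i')), w_{h+1}\rng$, (B) $\Sigma_h^{-1}\sum_i \phi_h(x_i,a_i)\,\eta_i$ with $\eta_i := \lng \phi_{h+1}(x_i',\pi(x_i')), w_{h+1}\rng - \E_{x_i'}[\cdot]$, and (C) $\Sigma_h^{-1}\sum_i \phi_h(x_i,a_i)(\bar R_i - \lng \phi_h(x_i,a_i), \MT_h^\pi w_{h+1}\rng)$.

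Terms (A) and (C) can be bounded deterministically using the elementary inequality $\|\sum_i \phi_h(x_i,a_i) v_i\|_{\Sigma_h^{-1}} \leq \sqrt{\sum_i v_i^2}$, which follows by duality together with $\sum_i \lng \phi_h(x_i,a_i), u\rng^2 = \|u\|_{\Sigma_h}^2$. For (A), the summand satisfies $|v_i| \leq \beta\epapx$ by Cauchy--Schwarz and the closeness hypothesis, giving $\|A\|_{\Sigma_h} \leq \beta\epapx\sqrt{n} \leq \beta$ since $\epapx \leq 1/\sqrt{n}$. For (C), \cref{cor:ibe-linear} applied to $\pi \in \Pilinp[\sigma]$ yields $|\bar R_i - \lng \phi_h(x_i,a_i), \MT_h^\pi w_{h+1}\rng| \leq \beta\zeta_\sigma$, so $\|C\|_{\Sigma_h} \leq \beta\zeta_\sigma\sqrt{n}$. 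Together these contribute the $4\beta\zeta_\sigma\sqrt{n}$ portion of $\alpha$ (with room for constants).

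For the martingale term (B), I would first observe that for fixed $\pi$ and $w_{h+1}$, the $\eta_i$ are $\MF_i$-measurable mean-zero increments bounded by $2\beta$, so standard self-normalized vector-valued concentration (e.g., the Abbasi-Yadkori--Pál--Szepesvári bound or its bounded-difference analogue) gives $\|B\|_{\Sigma_h} \leq O(\beta \sqrt{d\log(1/\delta')})$ with probability $1-\delta'$. To make this uniform over all $(\pi, w_{h+1}, \hat\phi)$ as required by \cref{eq:offline-good-perturbed}, I would cover each quantifier by a finite net. The ball $\{w_{h+1} : \|w_{h+1}\|_2 \leq \beta\}$ admits a standard $\ep$-net of size $(3\beta/\ep)^d$; and by the scaling identity $\pi_{h+1, cv, c\sigma'} = \pi_{h+1, v, \sigma'}$, the class $\Pilinp[\sigma]_{h+1}$ is parameterized by $v \in \BR^d$ with $\|v\|_2 \leq 1/\sigma$ (setting $\sigma' = 1$), which similarly admits an $\ep$-net of size $(3/(\ep\sigma))^d$. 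A union bound over both nets, together with the trivial reduction that $\hat\phi$ matters only through $\lng \hat\phi_i - \phi_{h+1}(x_i',\pi(x_i')), w_{h+1}\rng$ which is already controlled by the deterministic (A) bound, produces the $C_{\ref{lem:good-event-prob}}\beta d\log^{1/2}(dn\beta/(\sigma\delta))$ contribution. A final union bound over $h \in [H]$ and rescaling $\delta$ by a constant factor completes the argument.

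The main obstacle is the Lipschitz-continuity statement needed to pass from the $\ep$-net to the full class $\Pilinp[\sigma]_{h+1}$. Concretely, I need that $\phi_{h+1}(x, \pi_{h+1,v,1}(x))$ is Lipschitz in $v$ (uniformly in $x$) with a controlled constant, so that swapping $\pi$ for its nearest cover element perturbs the quantity inside $\|\cdot\|_{\Sigma_h}$ by only a polynomial factor. Fortunately, \cref{eq:smooth-xa} identifies $\phi_{h+1}(x, \pi_{h+1,v,1}(x))$ as the gradient at $v$ of the Gaussian smoothing (variance $1$) of the $1$-Lipschitz convex function $v \mapsto \max_{a'} \lng \phi_{h+1}(x,a'), v\rng$; an argument in the spirit of \cref{lem:bound-grad-smooth} then shows this gradient is $O(1)$-Lipschitz in $v$, which, after accounting for the $1/\sigma$ scaling in the parameterization, yields an $O(1/\sigma)$ Lipschitz constant and hence only logarithmic cost inside $\log^{1/2}(dn\beta/(\sigma\delta))$.
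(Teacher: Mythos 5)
Your proposal is correct and follows essentially the same route as the paper's proof: the same three-term decomposition (estimation error in $\hat\phi$, martingale transition noise, and inherent-Bellman-error bias via \cref{cor:ibe-linear}), the same deterministic bounds on the first and third terms via the projection inequality, and the same covering-plus-self-normalized-concentration treatment of the martingale term. The only minor divergence is in establishing Lipschitz continuity of $v \mapsto \phi_{h+1}(x,\pi_{h+1,v,\sigma}(x))$ for the net argument: you propose smoothness of the Gaussian-convolved max function, while the paper gets the same $O(1/\sigma)$ control more directly via Pinsker's inequality and data processing applied to the perturbation distributions (\cref{lem:covering}); both work.
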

We remark that the only source of randomness in \Critic is over the randomness of \EstFeature in \cref{line:estimate-phi}. 
\begin{proof}[Proof of \cref{lem:good-event-prob}]
  Given $h \in [H]$, $w_{h+1} \in \BR^d$, $\pi_{h+1} \in \Pilinp$, and $\hat \phi = (\hat \phi_i)_{i \in [n]} \in (\BR^d)^n$, let us write, for $i \in \MI_h$,
  \begin{align}
    \vep_i(w_{h+1}, \pi_{h+1}) := & r_i + \lng \phi_{h+1}(x_i', \pi_{h+1}(x_i')), w_{h+1} \rng - (r_h(x_i, a_i) + \E_{x' \sim P_h(x_i, a_i)}[ \lng \phi_{h+1}(x' ,\pi_{h+1}(x')), w_{h+1} \rng ]) \nonumber\\ %
    \xi_i(w_{h+1}, \pi_{h+1}) := & r_h(x_i, a_i) + \E_{x' \sim P_h(x_i, a_i)}[ \lng \phi_{h+1}(x' ,\pi_{h+1}(x')), w_{h+1} \rng ] - \lng \phi_h(x_i, a_i), \MT_h^{\pi_{h+1}} w_{h+1} \rng\nonumber\\
    \eta_i(w_{h+1}, \pi_{h+1}, \hat \phi_i) :=&  \lng \hat \phi_i - \phi_{h+1}(x_i', \pi_{h+1}(x_i')), w_{h+1} \rng\nonumber. 
  \end{align}
  
\paragraph{Error decomposition.}  For any $w_{h+1} \in \BR^d$, $\pi_{h+1} \in \Pilinp_{h+1}$, and $\hat \phi = (\hat \phi_i)_{i \in [n]}$, we can decompose
  \begin{align}
    & \hat \MT_{h, \MD, \hat \phi}^{\pi_{h+1}} w_{h+1}\nonumber\\
    =& \Sigma_h^{-1} \sum_{i \in \MI_h} \phi_h(x_i, a_i) \cdot ( r_i + \lng \hat \phi_i, w_{h+1} \rng )\nonumber\\
    = & \Sigma_h^{-1} \sum_{i \in \MI_h} \phi_h(x_i, a_i) \cdot (r_i + \lng \phi_{h+1}(x_i', \pi_{h+1}(x_i')), w_{h+1} \rng ) + \Sigma_h^{-1} \sum_{i \in \MI_h} \phi_h(x_i, a_i) \cdot \eta_i( w_{h+1}, \pi_{h+1}, \hat \phi_i) \nonumber\\
    =& \Sigma_h^{-1} \sum_{i \in \MI_h} \phi_h(x_i, a_i) \cdot \lng \phi_h(x_i, a_i), \MT_h^{\pi_{h+1}} w_{h+1} \rng \nonumber\\
     & + \Sigma_h^{-1} \sum_{i \in \MI_h} \phi_h(x_i, a_i) \cdot \left(\vep_i(w_{h+1}, \pi_{h+1}) + \xi_i(w_{h+1}, \pi_{h+1})+  \eta_i( w_{h+1}, \pi_{h+1}, \hat \phi_i)\right)\nonumber\\
    =& -\Sigma_h^{-1} \cdot \del{\lambda I_d} \cdot \MT_h^\pi w_{h+1} + \Sigma_h^{-1} \left( \del{\lambda I_d} + \sum_{i \in \MI_h} \phi_h(x_i, a_i) \phi_h(x_i, a_i)^\t \right) \cdot \MT_h^{\pi_{h+1}} w_{h+1}\nonumber\\
    & \quad +  \Sigma_h^{-1} \sum_{i \in \MI_h} \phi_h(x_i, a_i) \cdot (\vep_i(w_{h+1}, \pi_{h+1}) + \xi_i(w_{h+1}, \pi_{h+1}) +  \eta_i( w_{h+1}, \pi_{h+1}, \hat \phi_i))\nonumber\\
    =& \MT_h^{\pi_{h+1}} w_{h+1} - \del{\lambda} \Sigma_h^{-1} \cdot \MT_h^{\pi_{h+1}} w_{h+1} + \nonumber\\
    & + \Sigma_h^{-1} \sum_{i \in \MI_h} \phi_h(x_i, a_i) \cdot (\eta_i(w_{h+1}, \pi_{h+1}, \hat \phi_i) + \xi_i(w_{h+1}, \pi_{h+1})+  \vep_i(w_{h+1}, \pi_{h+1})) \label{eq:offline-decompose},
  \end{align}
  where the final equality uses the definition of $\Sigma_h$ (in \cref{line:define-sigmah} of \cref{alg:critic}).

 \paragraph{Bounding the error terms.} First, we note that for any $w_{h+1}$ satisfying $\| w_{h+1} \|_2 \leq \beta$ and any $\pi_{h+1} \in \Pilinp_{h+1}$,
  \begin{align}
\| \del{\lambda} \Sigma_h^{-1} \cdot \MT_h^{\pi_{h+1}} w_{h+1} \|_{\Sigma_h} = \del{\lambda} \| \MT_h^{\pi_{h+1}} w_{h+1} \|_{\Sigma_h^{-1}} \leq \del{\sqrt{\lambda} \cdot} \| \MT_h^{\pi_{h+1}} w_{h+1} \|_2 \leq 3\del{\sqrt{\lambda}} \cdot \beta B\label{eq:lambda-term-bound},
  \end{align}
  where the first inequality uses that $\Sigma_h \succeq \del{\lambda} I_d$ and the second inequality uses that, since $\| w_{h+1} \|_2 \leq \beta$, we have $\MT_h^{\pi_{h+1}} w_{h+1} \in 3\beta \cdot \MB_h$ (\cref{cor:ibe-linear}), and hence $\| \MT_h^{\pi_{h+1}} w_{h+1} \|_2 \leq 3\beta B$ (\cref{asm:boundedness}).

  Next, as long as $\| w_{h+1} \|_2 \leq \beta$ and $\| \hat \phi_i - \phi_{h+1}(x_i', \pi_{h+1}(x_i')) \|_2 \leq \epapx$ for all $i \in [n]$, we have
  \begin{align}
| \eta_i(w_{h+1}, \pi_{h+1}, \hat \phi_i) | \leq \| \hat \phi_i - \phi_{h+1}(x_i', \pi_{h+1}(x_i')) \|_2 \cdot \| w_{h+1} \|_2 \leq \beta \epapx\nonumber
  \end{align}
  for all $i \in [n]$ and thus 
  \begin{align}
\left\| \Sigma_h^{-1} \sum_{i \in \MI_h} \phi_h(x_i, a_i) \cdot \eta_i(w_{h+1}, \pi_{h+1},\hat\phi_i)\right\|_{\Sigma_h} = \left\| \sum_{i \in \MI_h} \phi_h(x_i,a_i) \cdot \eta_i(w_{h+1}, \pi_{h+1},\hat\phi_i) \right\|_{\Sigma_h^{-1}} \leq \beta\epapx\sqrt{n}\label{eq:eta-term-bound},
  \end{align}
  where the inequality uses \cref{lem:projection-bound}.

  Next, by \cref{cor:ibe-linear}, for any $w_{h+1}$ satisfying $\| w_{h+1} \|_2 \leq \beta$ and any $\pi_{h+1} \in \Pilinpp_{h+1}$, for each $i \in \MI_h$ we have
  \begin{align}
\left| r_h(x_i, a_i) + \E_{x' \sim P_h(x_i, a_i)} [ \lng \phi_{h+1}(x' ,\pi_{h+1}(x')), w_{h+1} \rng] - \lng \phi_h(x_i, a_i), \MT_h^{\pi_{h+1}} w_{h+1} \rng \right| \leq & \beta \cdot \zeta_\sigma\nonumber.
  \end{align}
  Thus, by \cref{lem:projection-bound},
    \begin{align}
\left\| \Sigma_h^{-1} \sum_{i \in \MI_h} \phi_h(x_i, a_i) \cdot \xi_i(w_{h+1}, \pi_{h+1})\right\|_{\Sigma_h} = \left\| \sum_{i \in \MI_h} \phi_h(x_i,a_i) \cdot \xi_i(w_{h+1}, \pi_{h+1}) \right\|_{\Sigma_h^{-1}} \leq \beta\zeta_\sigma\sqrt{n}\label{eq:xi-term-bound}.
  \end{align}
  
  It remains to bound the final term in \cref{eq:offline-decompose}, namely the one involving $\vep_i(w_{h+1}, \pi_{h+1})$. To do so, we use a covering based argument. We define the following metric on $\Pilinp_h$: for $\pi_h, \pi_h' \in \Pilinp_h$, define
  \begin{align}
\| \pi_h - \pi_h' \|_{\infty, 1} := \sup_{x \in \MX} \sum_{a \in \MA} | \pi_h(a|x) - \pi_h'(a|x)|\label{eq:inf-1-norm}.
  \end{align}
  For $\ep > 0$,  we let $\MN_{\infty, 1}(\Pilinp_h, \ep)$ denote the minimum size of an $\ep$-cover of $\Pilinp_h$ with respect to $\| \cdot \|_{\infty, 1}$. Similarly, let $\MG := \{ w \in \BR^d :\ \| w \|_2 \leq \beta \}$, and let $\MN_2(\MG, \ep)$ denote the miniimum size of an $\ep$-cover of $\MG$ with respect to $\| \cdot \|_2$. We use the following lemma:
  \begin{lemma}[Covering number bounds]
    \label{lem:covering}
    There is a constant $C_{\ref{lem:covering}}$ so that for all $h \in [H]$, 
    \begin{align}
      \log \MN_{\infty, 1}(\Pilinp_h, \ep) \leq & d \log (C_{\ref{lem:covering}}/(\ep\sigma))\nonumber\\
      \log \MN_2(\MG, \ep) \leq & d \log(C_{\ref{lem:covering}} \beta/\ep)\nonumber.
    \end{align}
  \end{lemma}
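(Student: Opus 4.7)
The plan is to handle the two claimed bounds separately. The bound $\log \MN_2(\MG, \ep) \leq d \log(C_{\ref{lem:covering}} \beta/\ep)$ is a standard volume-comparison argument: since $\MG$ is a Euclidean ball of radius $\beta$ in $\BR^d$, it admits an $\ep$-net of size at most $(3\beta/\ep)^d$, whose logarithm gives the claim.

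For $\Pilinp_h$, I would first normalize the parameterization using the scale invariance $\pi_{h, cw, \sigma} = \pi_{h, w, \sigma/c}$ noted after \cref{def:plinear}. Given any $\pi = \pi_{h, w_0, \sigma_0} \in \Pilinp_h$ with $\sigma_0/\|w_0\|_2 \geq \sigma$, taking $c = \sigma_0$ yields $\pi = \pi_{h, w, 1}$ with $w := w_0/\sigma_0$, and the constraint becomes $\|w\|_2 \leq 1/\sigma$. (The degenerate case $w_0 = 0$ collapses to a single policy, corresponding to $w = 0$ in the normalized parameter space.) Thus it suffices to produce an $\ep$-cover of $\{ \pi_{h, w, 1} : \| w \|_2 \leq 1/\sigma \}$ with respect to $\| \cdot \|_{\infty, 1}$.

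The key observation is that for each fixed state $x$, the distribution $\pi_{h, w, 1}(x) \in \Delta(\MA)$ is the pushforward of $\MN(w, I_d)$ through the state-dependent but $w$-independent map $\theta \mapsto \argmax_{a' \in \MA} \lng \phi_h(x, a'), \theta \rng$. Since pushforwards do not increase total-variation distance,
\begin{align}
\tvd{\pi_{h, w, 1}(x)}{\pi_{h, w', 1}(x)} \leq \tvd{\MN(w, I_d)}{\MN(w', I_d)} \leq \tfrac{1}{2}\| w - w' \|_2, \nonumber
\end{align}
where the second inequality follows from Pinsker's inequality and the closed-form identity $\kld{\MN(w, I_d)}{\MN(w', I_d)} = \| w - w' \|_2^2/2$. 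Since this bound is uniform in $x$, and $\| \pi_h - \pi_h' \|_{\infty, 1} = 2\sup_x \tvd{\pi_h(x)}{\pi_h'(x)}$ by \cref{eq:inf-1-norm}, I get $\| \pi_{h, w, 1} - \pi_{h, w', 1} \|_{\infty, 1} \leq \| w - w' \|_2$.

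Combining the above with a standard Euclidean volume cover of $\{ w \in \BR^d : \| w \|_2 \leq 1/\sigma\}$ at resolution $\ep$ (whose size is at most $(3/(\ep\sigma))^d$) yields an $\ep$-cover of $\Pilinp_h$ of the claimed log-cardinality $d \log(C_{\ref{lem:covering}}/(\ep\sigma))$ for a sufficiently large absolute constant $C_{\ref{lem:covering}}$. There is no real obstacle here; the only point requiring care is verifying that the scale-invariance normalization represents every element of $\Pilinp_h$, which is straightforward from the definition.
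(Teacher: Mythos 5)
Your proposal is correct and follows essentially the same route as the paper's proof: a standard volume bound for the Euclidean ball $\MG$, and for $\Pilinp_h$ a reduction via the scale invariance $\pi_{h,cw,\sigma}=\pi_{h,w,\sigma/c}$ to covering a single Euclidean parameter ball, combined with data processing, Pinsker's inequality, and the Gaussian KL formula. The only cosmetic difference is the choice of normalization (you fix the noise level to $1$ and cover a ball of radius $1/\sigma$ at resolution $\ep$, while the paper fixes it to $\sigma$ and covers the unit ball at resolution $\ep\sigma$), which yields the identical bound.
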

  Let $\til \MG \subset \MG$ be an $\ep$-cover of $\MG$ with respect to $\| \cdot \|_2$ with size bounded per \cref{lem:covering}, and $\tilPilinp_{h+1} \subset \Pilinp_{h+1}$ be an $\ep$-cover of $\Pilinp_{h+1}$ with respect to $\| \cdot \|_{\infty, 1}$ with size bounded per \cref{lem:covering}.

  For fixed $\til w_{h+1} \in \til \MG$ and $\til \pi_{h+1} \in \tilPilinp$, for each $i \in \MI_h$, we have from the definition of $\vep_i(\til w_{h+1}, \til \pi_{h+1})$ that 
  \begin{align}
| \vep_i(\til w_{h+1}, \til \pi_{h+1}) | \leq 2 + 2 \| \til w_{h+1} \|_2 \leq 2 + 2 \beta \leq 4\beta \label{eq:vep-bounded},
  \end{align}
  where we have used \cref{asm:boundedness} and the fact that $\beta \geq 1$. Moreover, by \cref{asm:offline-data-assumption}, if we let $\MF_i$ denote the sigma-algebra generated by all tuples $(h_j, x_j, a_j, r_j, x_j')$ for $j \leq i$ and by $(h_{i+1}, x_{i+1}, a_{i+1})$, we have $\E[\vep_i(\til w_{h+1}, \til \pi_{h+1}) | \MF_{i-1}] = 0$ and $\E[e^{\lambda \vep_i(\til w_{h+1}, \til \pi_{h+1})} | \MF_{i-1}] \leq e^{\lambda^2 \cdot (4\beta)^2/2}$, where we have used \cref{eq:vep-bounded}. Moreover, $\phi_h(x_i, a_i)$ is measurable with respect to $\MF_{i-1}$. Let us write $N := |\til \MG| \cdot |\tilPilinp|$, so that $\log N \leq C d \log(\beta/\ep\sigma)$, for some constant $C$. By \cref{lem:conc-sn-martingale}, there is some event $\ME_{\til w, \til \pi_{h+1}}$ which occurs with probability at least $1-\delta /N$, so that under $\ME_{\til w, \til \pi_{h+1}}$, we have
  \begin{align}
\left\| \sum_{i \in \MI_h} \phi_h(x_i, a_i) \cdot \vep_i(\til w_{h+1}, \til \pi_{h+1}) \right\|_{\Sigma_h^{-1}} \leq 32\beta^2 \log^{1/2}  \left( \frac{\det(\Sigma_h)^{1/2} \del{\cdot \lambda^{-d/2}}}{\delta/N} \right) \leq  8\beta d^{1/2} \log^{1/2} \left(N\cdot \frac{\del{\lambda} d + n}{\del{\lambda} d \delta} \right)\nonumber,
  \end{align}
  where the final inequality uses
  \begin{align}
\det (\Sigma_h) \leq \left( \frac 1d \Tr \Sigma_h \right)^d \leq \left( \frac 1d \cdot \left( \del{\lambda} d + \sum_{i \in \MI_h} \| \phi_h(x_i, a_i) \|_2^2 \right) \right)^d \leq \left( \frac{\del{\lambda} d + n}{d} \right)^d.\nonumber
  \end{align}
  By a union bound, it follows that under some event $\ME$ that occurs with probability at least $1-\delta$, we have
  \begin{align}
 \sup_{\substack{\til w \in \til \MG \\ \til \pi_{h+1} \in \tilPilinp_{h+1}}}\left\| \sum_{i \in \MI_h} \phi_h(x_i, a_i) \cdot \vep_i(\til w_{h+1}, \til \pi_{h+1}) \right\|_{\Sigma_h^{-1}} \leq 8\beta d^{1/2} \log^{1/2} \left(N \cdot \frac{\del{\lambda} d + n}{\del{\lambda} d \delta} \right).\label{eq:cover-highprob}
  \end{align}
  Now consider any $w_{h+1} \in \MG$ and $\pi_{h+1} \in \Pilinp_{h+1}$. Choose $\til w_{h+1} \in \til \MG$ and $\til \pi_{h+1} \in \tilPilinp_{h+1}$ so that $\| w_{h+1} - \til w_{h+1} \|_2 \leq \ep$ and $\| \pi_{h+1} - \til \pi_{h+1} \|_{\infty, 1} \leq \ep$. We have
  \begin{align}
    \sup_{\pi_{h+1} \in \Pilinp_{h+1}} | \vep_i(w_{h+1}, \pi_{h+1}) - \vep_i(\til w_{h+1}, \pi_{h+1}) | \leq & 2 \cdot \| w_{h+1} - \til w_{h+1} \|_2 \leq 2\ep\label{eq:wcover-allpi}\\
    \sup_{w_{h+1} \in \MG} \left| \vep_i(w_{h+1}, \pi_{h+1}) - \vep_i(w_{h+1}, \til \pi_{h+1}) \right| \leq & 2\beta \cdot \sup_{x' \in \MX} \| \phi_{h+1}(x', \pi_{h+1}(x')) - \phi_{h+1}(x', \til \pi_{h+1}(x')) \|_2 \leq 2\beta\ep\label{eq:picover-allw},
  \end{align}
  where \cref{eq:picover-allw} uses the definition of $\| \cdot \|_{\infty, 1}$ in \cref{eq:inf-1-norm}. 
  Then, under the event $\ME$, 
  \begin{align}
    & \left\| \sum_{i \in \MI_h} \phi_h(x_i, a_i) \cdot \vep_i(w_{h+1}, \pi_{h+1}) \right\|_{\Sigma_h^{-1}} \nonumber\\
    \leq &  \left\| \sum_{i \in \MI_h} \phi_h(x_i, a_i) \cdot \vep_i(\til w_{h+1}, \til \pi_{h+1}) \right\|_{\Sigma_h^{-1}} +  \left\| \sum_{i \in \MI_h} \phi_h(x_i, a_i) \cdot \left( \vep_i(\til w_{h+1}, \pi_{h+1}) - \vep_i(w_{h+1}, \pi_{h+1}) \right) \right\|_{\Sigma_h^{-1}}\nonumber\\
    &+  \left\| \sum_{i \in \MI_h} \phi_h(x_i, a_i) \cdot\left( \vep_i(\til w_{h+1}, \til \pi_{h+1}) -  \vep_i(\til w_{h+1}, \pi_{h+1}) \right) \right\|_{\Sigma_h^{-1}}\nonumber\\
    \leq &  \left\| \sum_{i \in \MI_h} \phi_h(x_i, a_i) \cdot \vep_i(\til w_{h+1}, \til \pi_{h+1}) \right\|_{\Sigma_h^{-1}} + 4\beta\ep \sqrt{n}\nonumber\\
    \leq &  8\beta d^{1/2} \log^{1/2} \left(N \cdot \frac{\del{\lambda} d + n}{\del{\lambda} d \delta} \right) + 4\beta\ep \sqrt{n}\label{eq:vep-term-bound}, 
  \end{align}
  where the first inequality uses the triangle inequality, the second inequality uses \cref{lem:projection-bound} together with \cref{eq:wcover-allpi,eq:picover-allw}, and the final inequality uses \cref{eq:cover-highprob} together with the fact that $\ME$ holds.

  Combining \cref{eq:offline-decompose,eq:lambda-term-bound,eq:eta-term-bound,eq:vep-term-bound,eq:xi-term-bound}, we conclude that under the event $\ME$, for any $w_{h+1} \in \MG$, $\pi_{h+1} \in \Pilinp_{h+1}$, and $\hat \phi \in (\BR^d)^n$ so that $\max_{i \in [n]} \| \hat \phi_i - \phi_{h+1}(x_i', \pi_{h+1}(x_i')) \|_2 \leq \epapx$, 
  \begin{align}
    \| \hat \MT_{h, \MD, \hat \phi}^{\pi_{h+1}} w_{h+1} - \MT_h^{\pi_{h+1}} w_{h+1} \|_{\Sigma_h} \leq &3 \del{\sqrt{\lambda}} \beta B + \beta\epapx \sqrt{n}+ \beta \zeta_\sigma \sqrt{n} + 8\beta  d^{1/2} \log^{1/2} \left( N \cdot \frac{\del{\lambda} d + n}{\del{\lambda} d \delta} \right) + 4\beta\ep \sqrt{n}\nonumber\\
    \leq &  11C \beta d \log^{1/2} \left( \frac{dn\beta}{\ep \delta\sigma} \right) + 4\beta(\ep + \epapx+\zeta_\sigma) \sqrt{n}\nonumber,
  \end{align}
  where the second inequality uses the bound $\log N \leq Cd \log(\beta/\ep\sigma)$. %
  Choosing $\ep = 1/\sqrt{n}$, and as long as $\epapx \leq 1/\sqrt{n}$, we see that $\ME \subset \ME_{\alpha, \beta,\sigma, \epapx}$, since we have chosen $\alpha \geq 4\beta \zeta_\sigma \sqrt{n} +  C\beta d \log^{1/2}(dn\beta/(\sigma\delta))$ for a sufficiently large constant $C$.  Thus $\Pr(\ME_{\alpha,\beta,\sigma,\epapx}) \geq \Pr(\ME) \geq 1-\delta$. Rescaling $\delta$ to $\delta/2$ yields the result. 
\end{proof}

\begin{proof}[Proof of \cref{lem:covering}]
  First, we note that $\log \MN_2(\MG, \ep) \leq d \cdot \log(3\beta/(\ep))$ by \citet[Example 5.8]{wainwright2019}. To bound the covering number of $\Pilinp_h$, let $\MC \subset \{ w \in \BR^d :\ \| w \|_2 \leq  1 \}$ denote a $\ep\sigma$-cover of the unit ball, so that \citet[Example 5.8]{wainwright2019} ensures that we can choose $\MC$ with $|\MC| \leq d \log(3/(\ep\sigma))$. For any $w \in \BR^d$ with $\| w \|_2 \leq 1$, there is some $w' \in \MC$ with $\| w-w'\|_2 \leq \ep\sigma$. Then we have
  \begin{align}
\tvd{\MN(w, \sigma^2 \cdot I_d)}{\MN(w', \sigma^2 \cdot I_d)} \leq \sqrt{ \frac 12 \cdot \kld{\MN(w, \sigma^2 \cdot I_d)}{\MN(w', \sigma^2 \cdot I_d)}} = \frac{1}{2\sigma} \cdot \| w-w'\|_2 \leq \frac{\ep}{2}\nonumber,
  \end{align}
  where we have used Pinsker's inequality and the formula for the KL divergence between two Gaussians. Moreover, for any $x \in \MX$, we have
  \begin{align}
\sum_{a \in \MA} |\pi_{h,w,\sigma}(a|x) - \pi_{h,w',\sigma}(a|x)| \leq & \tvd{\MN(w, \sigma^2 \cdot I_d)}{\MN(w', \sigma^2 \cdot I_d)}\nonumber
  \end{align}
  by the data processing inequality for total variation distance (in particular, the deterministic function $W \mapsto \argmax_{a \in \MA} \lng \phi_h(x,a), W \rng$ maps a random variable $W \sim \MN(w, \sigma^2 \cdot I_d)$ to an action $A \sim \pi_{h, w,\sigma}(\cdot | x)$). Combining the above displays gives that $\| \pi_{h,w,\sigma} - \pi_{h,w',\sigma} \|_{\infty, 1} \leq \ep$, as desired. 
\end{proof}

\subsection{Actor analysis}
\label{sec:actor-analysis}
Notice that the \Actor algorithm (\cref{alg:actor}) is a special case of the Follow-the-Perturbed-Leader (FTPL) algorithm. This observation is central to our proof. Below we first review some basic facts pertaining to the FTPL algorithm.

\paragraph{Review of FTPL.}
For $J, L > 0$, a distribution $\mu$ on $\BR^d$ is defined to be \emph{$(J, L)$-stable} with respect to the Euclidean norm $\| \cdot \|_2$ if the following two conditions hold:
\begin{align}
  \E_{\rho \sim \mu} [ \| \rho \|_2] \leq  & J \nonumber\\
  \forall v \in \BR^d, \quad \int_{\rho \in \BR^d} | \mu(\rho) - \mu(\rho - v) | d\rho \leq & L \cdot \| v\|_2\nonumber.
\end{align}
We consider the setting of \emph{online linear optimization}: the algorithm is given a set of actions $\Phi \subset \BR^d$, and operates over some number $T \in \BN$ of rounds. At each round $t \in [T]$, an adversary chooses a \emph{reward vector} $w\^t$, which may be random and depend arbitrarily on past choices of the algorithm (i.e., we consider the case of an \emph{adaptive adversary}). Simultaneously, the algorithm chooses a vector $\phi\^t \in \Phi$, and then observes $w\^t$ and receives a reward $\lng w\^t, \phi\^t \rng$. The algorithm's goal is to minimize its regret with respect to the best-in-hindsight fixed choice of action in $\Phi$. The \emph{expected FTPL} algorithm, presented in \cref{alg:e-ftpl}, solves the online linear optimization problem. Its choice of action at each round is given by the best action for the previous rounds, perturbed by a distribution which satisfies $(J, L)$-stability.
\begin{algorithm}
  \caption{Expected Follow-the-Perturbed-Leader: $\ExpFTPL(\Phi, \mu, T, \eta)$}
  \label{alg:e-ftpl}
  \begin{algorithmic}[1]\onehalfspacing
\Require Action set $\Phi \subset \BR^d$, distribution $\mu \in \Delta(\BR^d)$, parameters $\omega > 0, T \in \BN$.
\For{ $1 \leq t \leq T$}
  \State Choose
    \begin{align}
\phi\^t := \E_{\rho\^t \sim \mu} \left[\argmax_{\phi \in \MA} \left\{ \omega \cdot \sum_{s=1}^{t-1} \lng w\^s, \phi \rng + \lng  \rho\^t,\phi \rng \right\}\right]\nonumber.
    \end{align}
  \State Receive reward vector $w\^t\in\BR^d$, earn reward $\lng \phi\^t, w\^t \rng$. 
\EndFor
\end{algorithmic}
\end{algorithm}

\begin{theorem}[\cite{hazan2016introduction}, Theorem 5.8]
  \label{thm:ftpl}
  Suppose that $\mu$ is $(J, L)$-stable, and write $D := \max_{\phi \in \Phi} \| \phi \|_2$. Then for any adaptive adversary choosing $w\^1, \ldots, w\^T$ satisfying $\| w\^t \|_2 \leq G$ for all $t$, the iterates $\phi\^t$ produced by $\ExpFTPL(\Phi, \mu, T, \omega)$ (\cref{alg:e-ftpl}) satisfy
  \begin{align}
\max_{\phi^\st \in \Phi} \left\{ \sum_{t=1}^T \lng \phi^\st, w\^t \rng - \sum_{t=1}^T \lng \phi\^t, w\^t \rng \right\} \leq \omega L D G^2 T + \frac{J D}{\omega}\nonumber. 
  \end{align}
\end{theorem}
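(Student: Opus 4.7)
\emph{Proof proposal.} The plan is to follow the classical ``be-the-leader plus stability'' analysis of FTPL. I would introduce an auxiliary sequence of \emph{be-the-perturbed-leader} iterates
\begin{align*}
\til\Phi\^{t}(\rho) := \argmax_{\phi \in \Phi}\left\{\omega \sum_{s=1}^{t} \lng w\^{s}, \phi\rng + \lng \rho, \phi\rng\right\}, \qquad \til\phi\^{t} := \E_{\rho \sim \mu}\left[\til\Phi\^{t}(\rho)\right],
\end{align*}
which differ from the algorithm's iterates $\phi\^{t}$ in that they ``cheat'' by one step and incorporate the current reward vector $w\^{t}$. I would then decompose the regret against any fixed $\phi^\st \in \Phi$ as
\begin{align*}
\sum_{t=1}^T \lng \phi^\st - \phi\^{t}, w\^{t}\rng \;=\; \sum_{t=1}^T \lng \phi^\st - \til\phi\^{t}, w\^{t}\rng \;+\; \sum_{t=1}^T \lng \til\phi\^{t} - \phi\^{t}, w\^{t}\rng,
\end{align*}
and bound the two terms separately.

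For the first term, I would apply the standard be-the-leader lemma pointwise in $\rho$ to the running objectives $F_t(\phi) := \lng \rho, \phi\rng + \omega \sum_{s=1}^t \lng w\^{s}, \phi\rng$, whose maximizers are exactly $\til\Phi\^{t}(\rho)$. An elementary induction on $t$ yields
\begin{align*}
\omega \sum_{t=1}^T \lng w\^{t}, \phi^\st - \til\Phi\^{t}(\rho)\rng \;\leq\; \lng \rho, \til\Phi\^{0}(\rho) - \phi^\st\rng.
\end{align*}
Taking expectation over $\rho \sim \mu$ and applying Cauchy--Schwarz together with $\E_{\rho \sim \mu}[\nrm{\rho}] \leq J$ (the first half of $(J,L)$-stability) and $\nrm{\phi} \leq D$ for $\phi \in \Phi$ bounds the first term by $O(JD/\omega)$.

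For the second term, the $L$-stability property enters. The key observation is the identity $\til\Phi\^{t}(\rho) = \Phi\^{t}(\rho + \omega w\^{t})$, obtained by absorbing the new reward vector into the perturbation. Performing a change of variables in the integral defining $\til\phi\^{t}$ then yields
\begin{align*}
\til\phi\^{t} - \phi\^{t} \;=\; \int_{\BR^d} \left(\mu(\rho - \omega w\^{t}) - \mu(\rho)\right) \Phi\^{t}(\rho)\, d\rho.
\end{align*}
Taking the inner product with $w\^{t}$ and using the pointwise bound $|\lng w\^{t}, \Phi\^{t}(\rho)\rng| \leq GD$, the $L$-stability of $\mu$ applied to the shift $\omega w\^{t}$ (of Euclidean norm at most $\omega G$) controls the resulting integral by $\omega L G$. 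Hence each summand is at most $\omega L D G^2$, and summing over $t \in [T]$ contributes $\omega L D G^2 T$. Combining the two bounds gives the claim.

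The main subtlety, as I see it, is the change-of-variables step, which implicitly requires $\mu$ to have a density (consistent with how $(J,L)$-stability is stated as a total-variation integral of $\mu$); absent this, one would need to reformulate the argument measure-theoretically. Adaptivity of the adversary is \emph{not} an obstacle: the iterates $\phi\^{t}$ are deterministic functions of the past reward vectors $w\^{1}, \ldots, w\^{t-1}$, so the entire argument above may be carried out pointwise for any realized sequence $(w\^{1}, \ldots, w\^{T})$ chosen by the adversary.
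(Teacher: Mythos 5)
The paper does not prove this theorem itself — it is imported verbatim from \cite{hazan2016introduction} (Theorem 5.8) — and your argument is a correct reconstruction of the standard proof given there: the be-the-leader induction (run pointwise in a coupled perturbation $\rho$) handles the first term, and the change of variables $\til\Phi^{(t)}(\rho)=\Phi^{(t)}(\rho+\omega w^{(t)})$ combined with $(J,L)$-stability handles the second, with your remarks about the density assumption and the irrelevance of adaptivity both accurate (the paper itself notes the latter). The only nit is a constant: with the paper's convention $D:=\max_{\phi\in\Phi}\|\phi\|_2$ (rather than the diameter of $\Phi$), the Cauchy--Schwarz step in the first term yields $2JD/\omega$ rather than $JD/\omega$, which your $O(\cdot)$ correctly hedges and which is immaterial to how the theorem is used.
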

We remark that since the iterates $\phi\^t$ of \cref{alg:e-ftpl} are deterministic, in the setting of no-regret learning (i.e., that of \cref{thm:ftpl}), adaptive and oblivious adversaries are equivalent. 

\paragraph{Analysis of \Actor.} Recall that for $w = (w_1, \ldots, w_H) \in \BR^{dH}$, we have defined $f^w := (f_1^w, \ldots, f_H^w)$, where $f_h^w(x,a) := \lng \phi_h(x,a), w_h \rng$. 
To simplify notation, we write $f\^t := f^{w\^t}$, where $w\^t = (w_1\^t, \ldots, w_H\^t)$ is the vector returned by \texttt{Critic} on \cref{line:call-critic} of \Actor (\cref{alg:actor}).
\begin{lemma}
  \label{lem:ftpl-actor}
For any $\MD, \epfinal,\delta$, the algorithm $\Actor(\MD, \epfinal,\delta, \eta)$ (\cref{alg:actor}) satisfies the following: for any $\pi^\st \in \Pi$, $h \in [H]$, and $x \in \MX$, we have
  \begin{align}
\sum_{t=1}^T f_h\^t(x, \pi_h^\st(x)) - \sum_{t=1}^T f_h\^t(x, \pi_h\^t(x)) \leq  \eta^{-1} (2BH)^2 T + {\eta \sqrt{d}}\nonumber. %
  \end{align}
\end{lemma}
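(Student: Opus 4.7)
The plan is to recognize that for each fixed state $x \in \MX$ and step $h \in [H]$, the \Actor algorithm is literally running \ExpFTPL (\cref{alg:e-ftpl}) over the action set $\Phi_h(x) := \{\phi_h(x,a) : a \in \MA\}$ with perturbation distribution $\mu = \MN(0, \eta^2 \cdot I_d)$, learning rate $\omega = 1$, and reward vectors $w_h\^t \in \BR^d$. Indeed, unrolling the definitions of $\pi_h\^t = \pi_{h, \theta_h\^t, \eta}$ and $\theta_h\^t = \sum_{s<t} w_h\^s$ and using \cref{def:plinear}, one gets
\[
\phi_h(x, \pi_h\^t(x)) = \E_{\rho \sim \MN(0, \eta^2 I_d)}\Bigl[\argmax_{\phi \in \Phi_h(x)} \lng \phi, \tfrac{}{}\theta_h\^t + \rho\rng\Bigr],
\]
which is exactly the \ExpFTPL iterate. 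Note that $w_h\^t$ is $\MF\^{t-1}$-measurable (it depends on past rounds through the Critic's use of $\pi\^t$) so we are in the adaptive adversary setting, which \cref{thm:ftpl} handles.

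Next I would verify that $\mu = \MN(0, \eta^2 \cdot I_d)$ is $(J,L)$-stable with $J = \eta \sqrt{d}$ and $L = 1/\eta$. The bound $\E_{\rho \sim \mu}[\|\rho\|_2] \leq \sqrt{\E \|\rho\|_2^2} = \eta\sqrt{d}$ gives $J$. For $L$, Pinsker's inequality combined with the closed-form KL divergence $\kld{\MN(0, \eta^2 I_d)}{\MN(v, \eta^2 I_d)} = \|v\|_2^2/(2\eta^2)$ gives
\[
\int_{\BR^d} |\mu(\rho) - \mu(\rho - v)|\, d\rho = 2\,\tvd{\MN(0, \eta^2 I_d)}{\MN(v, \eta^2 I_d)} \leq \frac{\|v\|_2}{\eta},
\]
as needed (this parallels the argument already used in the proof of \cref{lem:covering}).

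I would then apply \cref{thm:ftpl} with $D := \max_{\phi \in \Phi_h(x)} \|\phi\|_2 \leq 1$ (by \cref{asm:boundedness}(1)) and $G := 2BH$ (since the Critic's convex program enforces $\|w_h\^t\|_2 \leq \beta = 2BH$ via \cref{eq:critic-d} and \cref{def:params-offline}). Plugging in gives, for any $\phi^\st \in \Phi_h(x)$,
\[
\sum_{t=1}^T \lng \phi^\st, w_h\^t \rng - \sum_{t=1}^T \lng \phi_h(x, \pi_h\^t(x)), w_h\^t\rng \leq \omega LDG^2 T + \frac{JD}{\omega} = \eta^{-1}(2BH)^2 T + \eta \sqrt{d},
\]
which recovers the right-hand side of the lemma. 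To extend to arbitrary $\pi_h^\st \in \Pi$ (which may be stochastic), note that $\phi_h(x, \pi_h^\st(x)) = \E_{a \sim \pi_h^\st(x)}[\phi_h(x,a)] \in \conv(\Phi_h(x))$; by linearity of $\phi \mapsto \sum_{t} \lng \phi, w_h\^t\rng$, the regret against any point in $\conv(\Phi_h(x))$ is at most the max regret against a vertex, so the same bound applies. Recalling $f_h\^t(x,a) = \lng \phi_h(x,a), w_h\^t\rng$ translates the display into the claimed bound on $\sum_t f_h\^t(x, \pi_h^\st(x)) - \sum_t f_h\^t(x, \pi_h\^t(x))$.

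The only potential obstacle is bookkeeping: making sure we can cite \cref{thm:ftpl} in the adaptive-adversary regime (its iterates are deterministic given history, so this is automatic), and checking that the norm bound on $w_h\^t$ indeed comes from constraint \cref{eq:critic-d} of the Critic's program — both are essentially immediate given the algorithm's specification and the parameter choices in \cref{def:params-offline}.
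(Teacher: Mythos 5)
Your proposal is correct and matches the paper's own proof essentially step for step: both identify the \Actor iterates at each fixed $(h,x)$ as \ExpFTPL with $\mu = \MN(0,\eta^2 I_d)$ and $\omega=1$, verify $(\eta\sqrt d,\eta^{-1})$-stability via Pinsker's inequality (the paper's \cref{lem:normal-stability}), and invoke \cref{thm:ftpl} with $D\le 1$ and $G = \beta = 2BH$ from constraint \cref{eq:critic-d}. The only cosmetic difference is that the paper takes the action set to be the convex hull $\mathrm{co}(\{\phi_h(x,a)\})$ up front, whereas you handle stochastic $\pi_h^\st$ by a linearity argument afterward; these are equivalent.
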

We emphasize that the policy $\pi^\st$ in \cref{lem:ftpl-actor} is not required to be a (perturbed) linear policy.
\begin{proof}[Proof of \cref{lem:ftpl-actor}]
  Fix $\pi^\st \in \Pi$ and $x \in \MX$. Note that, by definition of $f_h\^t$,
  \begin{align}
\sum_{t=1}^T f_h\^t(x, \pi_h^\st(x)) = \sum_{t=1}^T \lng w_h\^t, \phi_h(x, \pi_h^\st(x)) \rng\nonumber,
  \end{align}
  and
  \begin{align}
\sum_{t=1}^T f_h\^t(x, \pi\^t(x)) = \sum_{t=1}^T \lng w_h\^t, \phi_h(x, \pi_h\^t(x)) \rng\nonumber.
  \end{align}
Moreover, the definition of $\pi_h\^t$ in \Actor (\cref{alg:actor}) ensures that 
  \begin{align}
    \phi_h(x, \pi_h\^t(x)) =& \E_{\rho_h\^t \sim \MN(0, \eta^2 \cdot I_d)} \left[ \argmax_{a \in \MA} \left\{ \left\lng \phi_h(x,a), \theta_h\^t + \rho_h\^t \right\rng \right\} \right]\nonumber\\
    =& \E_{\rho_h\^t \sim \MN(0, \eta^2  \cdot I_d)} \left[\argmax_{a \in \MA} \left\{  \sum_{s=1}^{t-1} \lng \phi_h(x,a), w_h\^s \rng + \lng \phi_h(x,a), \rho_h\^t \rng \right\}\right]\nonumber,
  \end{align}
  where we have written $\theta_h\^t = \sum_{s=1}^{t-1} w_h\^s$, as in \cref{alg:actor}. 
  In particular, $\phi_h(x, \pi_h\^t(x))$ is exactly the choice of $\ExpFTPL(\Phi, \mu, T, \omega)$ (\cref{alg:e-ftpl}) at round $t$ with action set $\Phi = \cPhi_h(x) = \mathrm{co}(\{ \phi_h(x,a) \ : \ a \in \MA \})$, distribution $\mu= \MN(0, \eta^2 \cdot I_d)$, and $\omega=1$. It follows from \cref{thm:ftpl} that
  \begin{align}
    \sum_{t=1}^T f_h\^t(x, \pi_h^\st(x)) - \sum_{t=1}^T f_h\^t(x, \pi_h\^t(x))  =& \sum_{t=1}^T \lng w_h\^t, \phi_h(x, \pi_h^\st(x)) \rng - \sum_{t=1}^T \lng w_h\^t, \phi_h(x, \pi_h\^t(x)) \rng  \nonumber\\
    \leq  &  \eta^{-1} (2BH)^2 T + {\eta \sqrt{d}}\nonumber,
  \end{align}
 where we have used that $\| \phi_h(x,a) \| \leq 1$ for all $x,a, h$, that $\MN(0, \eta^2 \cdot I_d)$ is $(\eta \sqrt{d}, \eta^{-1})$-stable (\cref{lem:normal-stability}), and that $\| w_h\^t \|_2 \leq \beta = 2BH$, using the constraint \cref{eq:critic-d} in \Critic (and where the value of $\beta$ is set on \cref{line:actor-params} of \cref{alg:actor} per \cref{def:params-offline}).%
\end{proof}

\begin{lemma}
  \label{lem:normal-stability}
  For any $\eta > 0$, $\MN(0, \eta^2 \cdot I_d)$ is $(\eta \sqrt{d}, \eta^{-1})$-stable.
\end{lemma}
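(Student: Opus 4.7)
The plan is to verify the two defining conditions of $(J,L)$-stability for the centered Gaussian $\mu = \mathcal{N}(0, \eta^2\cdot I_d)$, namely:
\begin{align*}
\mathbb{E}_{\rho \sim \mu}[\|\rho\|_2] \leq J = \eta \sqrt{d}, \qquad \int_{\mathbb{R}^d} |\mu(\rho) - \mu(\rho - v)|\, d\rho \leq L \|v\|_2 = \eta^{-1} \|v\|_2.
\end{align*}

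For the mean-norm bound, I would apply Jensen's inequality to the concave function $x \mapsto \sqrt{x}$, giving $\mathbb{E}_{\rho \sim \mu}[\|\rho\|_2] \leq \sqrt{\mathbb{E}_{\rho \sim \mu}[\|\rho\|_2^2]}$. Since each coordinate of $\rho$ is independently $\mathcal{N}(0, \eta^2)$, the second moment equals $d\eta^2$, so the square root is $\eta \sqrt{d}$, establishing the first condition.

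For the translation-stability bound, I would recognize that $\tfrac{1}{2}\int |\mu(\rho) - \mu(\rho - v)|\, d\rho$ is precisely the total variation distance between $P := \mathcal{N}(0, \eta^2 \cdot I_d)$ and $Q_v := \mathcal{N}(v, \eta^2 \cdot I_d)$. Applying Pinsker's inequality and the closed-form KL divergence between two Gaussians with common covariance $\eta^2 \cdot I_d$, which is $\tfrac{1}{2\eta^2}\|v\|_2^2$, yields $\tvd{P}{Q_v} \leq \sqrt{\tfrac{1}{2} \cdot \tfrac{\|v\|_2^2}{2\eta^2}} = \tfrac{\|v\|_2}{2\eta}$. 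Doubling this gives $\int |\mu(\rho) - \mu(\rho-v)|d\rho \leq \eta^{-1}\|v\|_2$, as required.

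There is no real obstacle here: both steps are standard Gaussian facts (sub-additivity via Jensen for the mean, and Pinsker + Gaussian KL for the TV bound). The only minor care needed is to remember the factor of two between TV distance (defined via a supremum over events, equivalently $\tfrac{1}{2}$ of the $L^1$ distance between densities) and the $L^1$ density gap that appears in the definition of $(J,L)$-stability.
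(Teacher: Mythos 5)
Your proposal is correct and matches the paper's proof essentially verbatim: Jensen's inequality for the expected norm, and Pinsker's inequality combined with the closed-form Gaussian KL divergence for the translation-stability condition, with the same careful handling of the factor of two between total variation distance and the $L^1$ density gap. No differences worth noting.
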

\begin{proof}
  We first note that $\E_{Z \sim \MN(0, \eta^2 I_d)}[\| Z \|_2] \leq \eta \sqrt{\E_{Z \sim \MN(0, I_d)}[Z^2]} = \eta \sqrt{d}$.

  Let $\mu_\eta : \BR^d \ra \BR$ denote the probability density function of $\MN(0, \eta^2 I_d)$. To verify the second condition of stability, we compute, for any $v \in \BR^d$, 
  \begin{align}
    \int_{\rho \in \BR^d} |\mu_\eta(\rho) - \mu_\eta(\rho - v)| d\rho =& 2 \tvd{\MN(0, \eta^2 I_d)}{\MN(v, \eta^2 I_d)}\nonumber\\
    \leq &  \sqrt{2 \kld{\MN(0, \eta^2 I_d)}{\MN(v, \eta^2 I_d)}} = \| v \|_2 / \eta\nonumber,
  \end{align}
  where we have used Pinsker's inequality and the formula for KL divergence between Gaussians. 
\end{proof}

Finally, we are ready to prove \cref{thm:pacle-ftpl}.
\begin{proof}[Proof of \cref{thm:pacle-ftpl}]
  Let the parameters $T, \eta, \epapx, \alpha,\beta, \sigma$ be chosen as in \cref{def:params-offline}. Note that these parameter settings ensure that $1/\sigma \leq 1/\sqrt{\epbe}$, which in turn ensures that
  \begin{align}
3BH\zeta \leq 3BH \cdot C_{\ref{cor:ibe-linear}} \epbe d^{3/2} \cdot \left( \sqrt{d \log(d/\epbe^2)} + 1/\sqrt{\epbe}\right) \leq 1\label{eq:bhzeta-small},
  \end{align}
  by our assumption that $\epbe \leq c_0 (BH)^{-2} d^{-2}$ and as long as $c_0$ is sufficiently small. 
  
  Fix an arbitrary policy $\pi^\st \in \Pi$. Recall that $\pi\^t$ denotes the policy chosen by \Actor (\cref{alg:actor}) in step $t \in [T]$, and $f\^t = f^{w\^t}$. Moreover, let us write $M\^t := M^{f\^t, \pi\^t}$. The choices of $\alpha, \beta, \epapx$ in \cref{def:params-offline} ensure that, by \cref{lem:good-event-prob}, we have that, over the draw of $\MD$,  $\Pr(\ME_{\alpha,\beta,\sigma,\epapx}) \geq 1-\delta/2$. We next wish to use \cref{lem:mprime-m-diff} with $\pi = \pi\^t$, for each $t \in [T]$. To do so, we need to check that $\pi\^t \in \Pilinp[\sigma]$: indeed, $\pi_h\^t = \pi_{h, \theta_h\^t, \eta}$, where $\theta_h\^t$ satisfies $\| \theta_h\^t\|_2 \leq t \cdot \max_{s \leq t} \| w_h\^s \|_2\leq T\beta$ (by \cref{cor:qlin}, which is applicable because of \cref{eq:bhzeta-small}). Then $\frac{\eta}{\| \theta_h\^t \|_2} \geq \frac{\eta}{T\beta} =\sigma$, where we have used the definition of $\eta,\sigma$ in \cref{def:params-offline}.
  
  If we let $\MF\^t$ denote the sigma-algebra generated by $\MD, w\^1, \ldots, w\^t$, then \cref{lem:mprime-m-diff} ensures that, for each $t$, $\Pr(\ME_{\pi\^t} | \MF\^{t-1}) \geq 1-\delta/(2T)$. (In particular, we use here that the randomness in the call to \EstFeature in \Critic is chosen independently at each step $t$.) By a union bound, it follows that $\Pr\left( \ME_{\alpha,\sigma,\epapx} \cap \bigcap_{t \in [T]} \ME_{\pi\^t} \right) \geq 1-\delta$. We write $\ME^\st := \ME_{\alpha,\sigma,\epapx} \cap \bigcap_{t \in [T]} \ME_{\pi\^t} $. 

  By \cref{lem:mprime-m-diff}, under the event $\ME^\st$, for each $t \in [T]$, we have
  \begin{align}
    V_1\sups{M, \pi^\st}(x_1) - V_1\sups{M, \pi\^t}(x_1) \leq & V_1\sups{M\^t, \pi^\st}(x_1) - V_1\sups{M\^t, \pi\^t}(x_1) +3\beta H \zeta +  2\alpha \sum_{h=1}^H \| \E\sups{M\, \pi^\st}[\phi_h(x_h, a_h)]\|_{\Sigma_h^{-1}}\label{eq:m-mt-relate}. 
  \end{align}
  Next, using the performance difference lemma applied (\cref{lem:perf-diff}) to the MDP $M\^t$, we have
  \begin{align}
    & \sum_{t=1}^T V_1\sups{M\^t, \pi^\st}(x_1) - V_1\sups{M\^t, \pi\^t}(x_1)\nonumber\\
    =& \sum_{t=1}^T \sum_{h=1}^H \E\sups{M\^t, \pi^\st}\left[ Q_h\sups{M\^t, \pi\^t}(x_h, \pi_h^\st(x_h)) - Q_h\sups{M\^t, \pi\^t}(x_h, \pi_h\^t(x_h)) \right]\nonumber\\
    =& \sum_{t=1}^T \sum_{h=1}^H \E\sups{M\^t, \pi^\st} \left[ f_h\^t(x_h, \pi_h^\st(x_h)) - f_h\^t(x_h, \pi_h\^t(x_h))\right]\nonumber\\
    = &  \sum_{t=1}^T \sum_{h=1}^H \E\sups{M, \pi^\st} \left[ f_h\^t(x_h, \pi_h^\st(x_h)) - f_h\^t(x_h, \pi_h\^t(x_h))\right]\nonumber\\
    =& \sum_{h=1}^H \E\sups{M, \pi^\st} \left[ \sum_{t=1}^T f_h\^t (x_h, \pi_h^\st(x_h)) - f_h\^t(x_h, \pi\^t_h(x_h)) \right]\label{eq:regret-decomposition},
  \end{align}
  where the second equality uses \cref{lem:imag-mdp}, the third equality uses the fact that the dynamics of $M\^t$ are the same as those of $M$, and the final equality rearranges. Next, \cref{lem:ftpl-actor} guarantees that for each possible choice of $x_h$, we have
  \begin{align}
 \sum_{t=1}^T f_h\^t(x_h, \pi_h^\st(x_h)) - \sum_{t=1}^T f_h\^t(x_h, \pi_h\^t(x_h))\leq \eta^{-1} (2BH)^2 T + {\eta \sqrt{d}} \label{eq:indiv-state-reg}. 
  \end{align}
  Combining \cref{eq:regret-decomposition}, \cref{eq:indiv-state-reg}, and \cref{eq:m-mt-relate} yields that, under $\ME^\st$,
  \begin{align}
    \frac 1T \sum_{t=1}^T\left( V_1\sups{M, \pi^\st} -  V_1\sups{M, \pi\^t}(x_1)\right) \leq & 2\alpha \sum_{h=1}^H \| \E\sups{M, \pi^\st}[\phi_h(x_h, a_h)] \|_{\Sigma_h^{-1}} + \eta^{-1} (2BH)^2  + \frac{\eta \sqrt{d}}{T} + 3\beta H \zeta\label{eq:raw-final-bound}.
  \end{align}
  Note that the choices of $\eta, \sigma, \zeta$ in \cref{def:params-offline} gives that $1/\sigma \leq \epbe^{-1/2}$, meaning that
  \begin{align}
\zeta \leq C_{\ref{cor:ibe-linear}} \epbe d^{3/2} \cdot \left( \sqrt{d \log (d/(\epbe \sigma))} + \epbe^{-1/2} \right) \leq C \epbe^{1/2} d^{3/2} \log(1/\epbe)\label{eq:zeta-bound},
  \end{align}
  for some constant $C > 0$; note that we have used that $\epbe \leq d^{-1}$ and thus $\sqrt{d \log d} \leq O( \epbe^{-1/2}\log^{1/2}(1/\epbe))$ in the above display. Combining \cref{eq:raw-final-bound,eq:zeta-bound} gives
  \begin{align}
    & \frac 1T \sum_{t=1}^T\left( V_1\sups{M, \pi^\st} -  V_1\sups{M, \pi\^t}(x_1)\right) \nonumber\\
    \leq & 2\alpha \sum_{h=1}^H \| \E\sups{M, \pi^\st}[\phi_h(x_h, a_h)] \|_{\Sigma_h^{-1}} + \frac{4BH d^{1/4}}{\sqrt{T}} + 2BH\sqrt{d}  \epbe^{1/2} + 6CBH^2d^{3/2} \cdot \epbe^{1/2} \log(1/\epbe)\nonumber.
  \end{align}
  
  Thus, by our choice of $T = \frac{16B^2H^2d^{1/2}}{\epfinal^2}$ (\cref{def:params-offline}), it follows that  the policy $\hat \pi := \frac 1T \sum_{t=1}^T \pi\^t$ satisfies
  \begin{align}
    & V_1\sups{M, \pi^\st}(x_1) - V_1\sups{M, \hat \pi}(x_1) \nonumber\\
    \leq &    2\alpha \sum_{h=1}^H \| \E\sups{M, \pi^\st}[\phi_h(x_h, a_h)] \|_{\Sigma_h^{-1}} + \epfinal + O\left( BH^2d^{3/2} \cdot \epbe^{1/2} \log(1/\epbe)\right)\nonumber\\
    \leq&  O \left(d^{3/2}BH\epbe^{1/2}\log(1/\epbe) \sqrt{n} +  B H d \log^{1/2}(dnBH/(\epfinal\delta)) \right) \cdot  \left( \frac{H}{\sqrt{n}} + \sum_{h=1}^H \| \E\sups{M, \pi^\st}[\phi_h(x_h, a_h)] \|_{\Sigma_h^{-1}}\right) + \epfinal\nonumber,
  \end{align}
  which is equal to the desired bound.

  Finally, we analyze the computational cost of \cref{alg:actor}. The only nontrivial computation to be performed is the call to \Critic in \cref{line:call-critic}, which is made $T$ times. Thus, by \cref{lem:mprime-m-diff}, given an oracle which can solve the convex program \cref{eq:critic-program}, the overall computational cost of \cref{alg:actor} is $\poly(T, d, n, H, \log(1/\delta)/\epapx^2) \leq  \poly(d, H,n, \log(1/\delta), 1/\epfinal)$. The same guarantee holds without existence of such an oracle; the necessary modifications to the proof are described below in \cref{rmk:e2e-comp}. 
\end{proof}

  \begin{remark}\label{rmk:epbe0} Notice that in the case $\epbe = 0$, the final steps of the proof above yield the (slightly stronger) bound
    \begin{align}
      & V_1\sups{M, \pi^\st}(x_1) - V_1\sups{M, \hat \pi}(x_1) \nonumber\\
      \leq &  O \left(\frac{ B H d \log^{1/2}(dnBH/(\epfinal\delta)) }{\sqrt{n}}\right) \cdot   \sum_{h=1}^H \| \E\sups{M, \pi^\st}[\phi_h(x_h, a_h)] \|_{n\Sigma_h^{-1}} + \epfinal\nonumber.
    \end{align}

  \end{remark}

  \begin{remark}[Solving the convex program]
    \label{rmk:e2e-comp}
In this remark, we discuss the minor modifications necessary to the proof of \cref{thm:pacle-ftpl} to establish its guarantee without assumption of an oracle which can solve the program \cref{eq:critic-program}.

Note that the program \cref{eq:critic-program} is a convex program in $O(dH)$ variables consisting of linear equalities and $\ell_2$ norm constraints, for which all coefficients of the variables can be specified with $\log \poly(\alpha, \beta, d, n) \leq \log \poly(n, B, d, H, \log(1/\delta))$ bits. Thus, for any $\ep > 0$, the ellipsoid algorithm returns vectors $w, \xi \in \BR^{dH}$ which satisfy the constraints up to $\ep$ error and $\ep$-approximately minimize the objective \cref{eq:critic-a} in time $\poly(d, H, \log(nB/(\delta \ep)))$. Then it is immediate that \cref{it:vmf} of \cref{lem:mprime-m-diff} gives only the weaker guarantee $V_1^{M^{f,\pi}}(x_1) \leq V_1^{M, \pi}(x_1) + \ep$. Moreover, in the proof of \cref{it:all-piprime} of \cref{lem:mprime-m-diff}, the right-hand side of \cref{eq:vmf-vmpi-establish} has an additional $O(\ep \cdot \alpha H)$ term (as \cref{eq:critic-c} holds up to additive $\ep$), which may be absorbed in the first term (namely, $3\beta H \zeta_\sigma$) by increasing the constants, as long as $\ep$ is sufficiently small. In turn, \cref{lem:mprime-m-diff} is used to establish \cref{eq:m-mt-relate}: thus this equation gains an additional term of $\ep$ on the right-hand side as well as additional constant factor. This additional terms propagate to degrade the bound of \cref{thm:pacle-ftpl} by a constant factor. 
\end{remark}

}

\section{Lower bounds}
\label{sec:lb}
\arxiv{In this section we prove \cref{thm:lb-informal}, restated formally below as \cref{thm:lb-formal}. In \cref{sec:m-family}, we  construct a family of MDPs that forms the basis of the lower bound. Using this family of MDPs, we then prove \cref{thm:lb-informal} in \cref{sec:lb-proof}. }
\rlc{In this section we state \cref{thm:lb-informal} formally below as \cref{thm:lb-formal}, and sketch its proof (which is provided in full in the appendix).}
\begin{theorem}
  \label{thm:lb-formal}
Let $\epbe \in (0,1)$ and $n \in \BN$ be given, and set $d = H = 2$. Then there are state and action spaces $\MX, \MA$ with $|\MA| = 4$ as well as feature mappings $\phi_h : \MX \times \MA \to \BR^d$ ($h \in [H]$), so that the following holds, for any offline RL algorithm $\mathfrak{A}$.  There is some MDP $M$ which has inherent Bellman error with respect to the feature mappings $\phi_h$ bounded by $2\epbe$ and which satisfies \cref{asm:boundedness}, some distribution over datasets $\MD$ satisfying \cref{asm:offline-data-assumption}, and some policy $\pi^\st \in \Pilinpp$ so that: $\Cvg_{\MD, \pi^\st} = O(1)$ with probability 1 over the draw of $\MD$ yet the output policy $\hat \pi$ of $\mathfrak{A}$ satisfies
  \begin{align}
\E\left[V_1^{\pi^\st}(x_1) - V_1^{\hat \pi}(x_1)\right] \geq \Omega \left( \sqrt{\epbe} + \frac{1}{\sqrt{n}} \right)\label{eq:offline-lb-formal},
  \end{align}
  where the expectation is over the draw of $\MD$ and the randomness in $\mathfrak{A}$. 
\end{theorem}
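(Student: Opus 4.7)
My plan is to instantiate the construction outlined in the paper's informal overview: fix $d = H = 2$ and $|\MA| = 4$, and build a family of MDPs over a common feature map indexed by two unknown parameters, a sign $s \in \{\pm 1\}$ for the reward coefficient $\thetar_2 = (s,1)$ at step $2$ and, for each of several ``levels'' $\ell \in \{1, 2, \ldots, L\}$ with $L := \lfloor 1/\sqrt{\epbe}\rfloor$, a binary choice of which of two paired successor states $x_2^{(\ell)}, (x_2^{(\ell)})'$ is reached from $x_1$. The step-$2$ features of these paired states are
\[
\phi_2(x_2^{(\ell)}, a) = (1 - 2a,\; 1 - (1-2a)\ell\epbe), \qquad \phi_2((x_2^{(\ell)})', a) = (1 - 2a,\; 1 + (1-2a)\ell\epbe)
\]
for $a \in \{0,1\}$, with the other two actions at step $1$ used to select the level $\ell$. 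I would verify that each MDP in the family has inherent Bellman error bounded by $2\epbe$ by pairing the two candidate successor states under a common backup $\bellc$, using the fact that $\phi_2(x_2^{(\ell)}, a)$ and $\phi_2((x_2^{(\ell)})', a)$ differ only in the ``unknown'' first coordinate by $2\ell\epbe$, and placing $\bellc\theta$ in the direction of the average.

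\textbf{Dataset and coverage.} To arrange that only the $(0,1)$-direction of $\thetar_2$ is identifiable while maintaining $\Cvg_{\MD, \pi^\st} = O(1)$, I construct $\MD$ so that all step-$2$ feature vectors in $\MD$ are parallel to $(0,1)$; this makes $n\Sigma_2^{-1}$ bounded on $(0,1)$ while leaving the $(1,0)$ direction essentially uncontrolled. The reference policy $\pi^\st$ randomizes uniformly over $\{0,1\}$ at every reachable step-$2$ state, so that $\E^{\pi^\st}[\phi_2(x_2, a_2)] \parallel (0,1)$ and hence $\Cvg_{\MD, \pi^\st} = O(1)$; at the same time, $\lng \E^{\pi^\st}[\phi_2], \thetar_2 \rng = 1$ independent of $s$, which makes $\pi^\st$ a robust competitor.

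\textbf{The core minimax trade-off.} Fix an arbitrary (possibly randomized, history-dependent) output policy $\hat\pi$. At each level $\ell$, summarize $\hat\pi_2$ at the unique reachable step-$2$ state by a scalar $p_\ell \in [0,1]$ (the conditional probability of playing $a = 1$). A short calculation will show that the suboptimality of $\hat\pi$ at level $\ell$ is, up to additive $O(\epbe)$, at least the maximum of two terms: a \emph{sign-dependent} term of order $|1 - 2p_\ell|$ that appears because the adversary can choose $s$ to be anti-aligned with $\hat\pi$'s chosen first-coordinate average, and a \emph{randomization penalty} of order $\ell\epbe \cdot (1 - |1-2p_\ell|)$ that appears because the perturbation from $(0,1)$ in the second coordinate is $\ell\epbe$, so spreading mass across both actions incurs a loss proportional to $\ell\epbe$ against either sign. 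Balancing these two terms at $\ell \asymp 1/\sqrt{\epbe}$ yields a lower bound of $\Omega(\sqrt{\epbe})$ at some level; by Yao's principle (placing a uniform prior on $s$ and on the paired successor identity at each level) the adversary simply picks the worst level.

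\textbf{The $1/\sqrt{n}$ term and the main obstacle.} The additive $\Omega(1/\sqrt{n})$ piece follows from a standard Le~Cam / KL argument superimposed as a separate branch of the root: add a bandit-style subproblem whose two hypotheses differ in mean step-$2$ reward by $\Theta(1/\sqrt{n})$ along the identifiable $(0,1)$ direction, and use a $\chi^2$ bound on the $n$-sample product law to show the algorithm cannot distinguish them with probability $> 1/2 + \Omega(1)$. The delicate step will be verifying that the lower-bound argument is \emph{uniform} over arbitrary $\hat\pi$ (not just perturbed linear ones), while also respecting the coverage constraint: concretely, I need the scalar summary $p_\ell$ to be all that matters, which requires carefully showing that $\MD$ reveals no information about the first coordinate of $\thetar_2$ or about which of the paired successors was reached, so that $\hat\pi_2$ conditional on reaching level $\ell$ is independent of $s$ and of that binary transition choice. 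The rest is bookkeeping to combine the $\sqrt{\epbe}$ and $1/\sqrt{n}$ branches into a single instance.
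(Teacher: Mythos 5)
There is a genuine gap, and it sits at the heart of the argument. Your ``core minimax trade-off'' does not hold for the construction you wrote down. With $\phi_2(x_2^{(\ell)},a) = (1-2a,\,1-(1-2a)\ell\epbe)$, the two actions $a\in\{0,1\}$ have features $(1,1-\ell\epbe)$ and $(-1,1+\ell\epbe)$, whose average is exactly $(0,1)$; the same holds at $(x_2^{(\ell)})'$. So the policy that randomizes uniformly at every step-$2$ state has expected feature exactly $(0,1)$ everywhere and achieves $V_1^{\pi^\st}$ exactly: your ``randomization penalty of order $\ell\epbe(1-|1-2p_\ell|)$'' is identically zero, and no $\Omega(\sqrt{\epbe})$ bound follows. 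Separately, even granting some penalty for randomizing, the sign-dependent term cannot be $\Omega(|1-2p_\ell|)$ \emph{per level}: there is one global sign bit $s$, so the adversary can only realize a loss proportional to the absolute value of the \emph{average} signed first coordinate over levels, not the average of absolute values. A policy that commits deterministically but with level-dependent orientation makes that average zero while incurring no randomization penalty, and so escapes both of your terms. Your IBE verification is also off: the paired states' max-values $\max_{a'}\langle\phi_2(\cdot,a'),\theta\rangle$ differ by up to $2\ell\epbe|\theta_2|$, which is $2\sqrt{\epbe}$ at $\ell=L$ for $\theta=(0,1)$, so they cannot be tied to a common linear backup within a $2\epbe$ budget. (Minor: with $|\MA|=4$ you also cannot deterministically ``select the level'' among $L=1/\sqrt{\epbe}$ levels; the levels must be reached through a single stochastic transition.)

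The missing idea is the paper's chaining/level-shifting mechanism, which is what makes the case analysis exhaustive. In the paper, the two actions at the hard states $\mf t_{2,b}^{\zeta}$ have features that are exact \emph{negatives}, $\cphi(1,(1-2b)\zeta)$ and $\cphi(-1,-(1-2b)\zeta)$, so randomizing forfeits the entire second-coordinate reward, while the comparator collects that reward on a separate branch of safe states $\mf s_2^{\zeta}$ having a sign-free action with feature $\cphi(0,\zeta)$ (the unknown bit $\binit$ determines which branch the algorithm lands on). The adversary's per-level freedom is to replace the level-$\ell$ state by the level-$(\ell-1)$ state of the same branch --- a perturbation of exactly one $\epbe$-step in $\zeta$, hence within the inherent-Bellman-error budget, and invisible in the data because of the deliberate linear dependence $\phi_1(\mf t_1,0)+\phi_1(\mf t_1,1)=2\phi_1(\mf s_1,0)$. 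This converts the total positive variation $\sum_\ell[\rho(\ell)-\rho(\ell-1)]_+$ of the policy's ``alignment'' into a loss; since $\rho(0)$ is pinned at $1$ (the two sub-branches coincide at level $0$) while collecting the reward at high levels forces $\rho(\ell)\approx 2$, one of three attacks (global sign, failure to collect at the top half of levels, or level-shifting) always extracts $\Omega(\sqrt{\epbe})$. Without this third attack your two cases are not exhaustive. Your dataset/coverage setup and the separate $\Omega(1/\sqrt{n})$ two-point branch are fine and match the paper.
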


\rlc{}

\arxiv{\rlc{\section{Proof of \cref{thm:lb-formal}}\label{sec:lb-proof-rlc}}
\rlc{In this section we prove \cref{thm:lb-formal}. First, in \cref{sec:m-family} we introduce the family of MDP instances used to prove the theorem, and in \cref{sec:lb-proof} we analyze the performance of any algorithm on this family.}
\subsection{Construction of the family $\MM_{\epbe}$.}
\label{sec:m-family}
Fix $\epbe > 0$. For simplicity we assume that $L := 1/\sqrt{\epbe}$ is an even integer. Given bits $\brew, \binit \in \{0,1\}$ and $( b_{\ell, e} )_{\ell \in [L], e \in \{0,1\}} \in \{0,1\}^{2L}$, we write $\bb = (\brew, \binit, (b_{\ell, e})_{\ell \in [L], e \in \{0,1\}}) \in \{0,1\}^{2L+2}$ to denote the collection of all these bits. 
We construct a class $\MM_{\epbe} = \{ M^{\bb} \}_{\mathbf{b} \in \{0,1\}^{2L+2} }$  of MDPs $M^{\bb}$, each of which has inherent Bellman error bounded above by $\epbe$ with respect to some fixed feature mappings. All MDPs $M \in \MM_{\epbe}$ have horizon $H=2$ and feature dimension $d=2$. The state and action spaces of each $M \in \MM_{\epbe}$ are given as follows:
  \begin{align}
\MX = \{ \mf s_1, \mf t_1, \mf s_2, \bar{\mf s}_2, \mf q_2\} \cup \{\mf s_2^\zeta, \mf t_{2,0}^\zeta, \mf t_{2,1}^\zeta \}_{\zeta \in [0,1]}\qquad \MA = \{0,1,2,3\}\label{eq:xa-lb-define},
  \end{align}
  where $\zeta$ ranges over $[0,1]$. (We remark that we will only need to use the states $\mf s_2^\zeta, \mf t_{2,0}^\zeta, \mf t_{2,1}^\zeta$ for values of $\zeta$ which are in $\{ 0, \epbe, 2\epbe, \ldots, \sqrt{\epbe} \}$, but to simplify notation we opt to define states corresponding to all $\zeta \geq 0$.)
  \paragraph{Feature vectors.} The feature vectors corresponding to $\MX, \MA$ are defined below. We use the convention that if we specify fewer than $4$ actions at a state, then all remaining actions at the state have equal behavior (i.e., feature vectors and transition) to action 0 at that state. For normalizing constant $\cphi := (\sqrt 2)^{-1}$, we define, for each $h \in [2]$:
  \begin{itemize}
  \item $ \phi_h(\mf s_1, 0) = \cphi\cdot (1,0)$, $\phi_h(\mf s_1, 1) = \cphi\cdot(0,1)$. %
  \item $\phi_h({\mf t}_1, 0) = \cphi\cdot(1,1)$ and $\phi_h({\mf t}_1, 1) = \cphi\cdot(1,-1)$.
  \item $\phi_h(\bar{\mf s}_2, 0) = \cphi\cdot(0,0)$. %
  \item $\phi_h(\mf s_2, 0) = \cphi\cdot(0, 1)$, $\phi_h(\mf s_2, 1) = \cphi\cdot(0, -1)$. 
    \item For each $\zeta \in [0,1]$, $\phi_h(\mf s_2^\zeta, 0) = \cphi\cdot(0, \zeta)$, $\phi_h(\mf s_2^\zeta, 1) = \cphi\cdot(1,0)$, $\phi_h(\mf s_2^\zeta, 2) = \cphi\cdot(0, -\zeta)$, $\phi_h(\mf s_2^\zeta, 3) = \cphi\cdot(-1, 0)$. 
    \item For each $\zeta \in [0,1]$ and $b \in \{0,1\}$, $\phi_h(\mf t_{2,b}^\zeta, 0) = \cphi\cdot(1, (1-2b) \zeta)$ and $\phi_h(\mf t_{2,b}^\zeta, 1) = \cphi\cdot(-1, -(1-2b)\zeta)$. Via slight abuse of notation, we identify $\mf t_{2,0}^0, \mf t_{2,1}^0$, i.e., $\mf t_{2,0}^0 = \mf t_{2,1}^0$.
  \end{itemize}

\paragraph{Transitions and rewards.}  Consider any $M := M^{\bb} \in \MM_{\epbe}$, and write $\bb = (\brew, \binit, \{ b_{\ell, e} \}_{\ell \in [L], e \in \{0,1\}}) \in \{0,1\}^{2L+2}$. We proceed to define the initial state distribution, transitions and rewards of $M$. The initial state distribution has all its mass concentrated on $\mf t_1$. The transitions are defined as follows:
  \begin{itemize}
  \item $(\mf s_1, 1)$ transitions to $\bar{\mf s}_2$ with probability 1.
\item $(\mf s_1, 0)$ and $(\mf t_1, \binit)$ each transition to the distribution that puts mass $1/L$ on each of the states $\mf s_2^{\ell\cdot \epbe}$, for $\ell \in [L]$. %
\item $(\mf t_1, 1-\binit)$ transitions to the distribution which:
  \begin{itemize}
  \item Puts mass $1/(2L)$ on $\mf t_{2, 0}^{(\ell - b_{\ell,0}) \cdot \epbe}$ for each $\ell \in [L]$;
  \item Puts mass $1/(2L)$ on $\mf t_{2, 1}^{(\ell-b_{\ell,1}) \cdot \epbe}$ for each $\ell \in [L]$.
  \end{itemize}
\end{itemize}
The rewards are defined as follows:
\begin{itemize}
\item $r_1(x, a) = 0$ for all $x \in \MX, a \in \MA$, i.e., rewards at step 1 are linear with $\thetar_1 = (0,0)$. 
\item %
  The rewards at step 2 are linear with respect to the coefficient vector $\thetar_2 = (1-2\brew,1/R)$, where $R := 16$. 
\end{itemize}

\paragraph{Verifying low inherent bellman error.} Next, we verify that each MDP $M^\bb \in \MM_{\epbe}$ has low inherent Bellman error and satisfies \cref{asm:boundedness} with respect to the feature mappings $\phi_h$ defined above.
\begin{lemma}[Low inherent Bellman error and boundedness]
  \label{lem:lb-ibe-bnd}
For any $M = M^\bb \in \MM_{\epbe}$, $M$ has inherent Bellman error $2\epbe$ and satisfies \cref{asm:boundedness} with $B = \sqrt{2}$. 
\end{lemma}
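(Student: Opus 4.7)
The plan is to verify both assertions by direct computation. For boundedness, every constructed feature $\phi_h(x,a)$ has the form $\cphi\cdot v$ with $v\in\BR^2$ having at most two nonzero entries drawn from $\{\pm 1,\pm\zeta\}$, $\zeta\in[0,1]$, hence $\|v\|_2\le\sqrt 2$ and $\|\phi_h(x,a)\|_2\le 1$; also $\|\thetar_2\|_2=\sqrt{1+R^{-2}}\le\sqrt 2$, and the rewards $r_2(x,a)=\langle\phi_2(x,a),\thetar_2\rangle$ are easily checked to lie in $[-1,1]$. The constant $B=\sqrt 2$ holds because the two orthogonal norm-$1$ features $\phi_h(\mf t_1,\cdot)=\cphi(1,\pm 1)$ impose $|\langle \cphi(1,\pm 1), w\rangle|\le 1$ on every $w\in\MB_h$, which forces $\|w\|_2\le\sqrt 2$ by orthogonality. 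For the inherent Bellman error at $h=2$, $\phi_3\equiv 0$, so the Bellman target equals $r_2$ exactly, and $\MT_2(\cdot)\equiv\thetar_2$ works.

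For the nontrivial case $h=1$, unfolding the transitions, features, and the $\max$ over $\MA$ shows that the Bellman target at both $(\mf s_1,0)$ and $(\mf t_1,\binit)$ equals
\begin{align*}
V_0(\theta):=\cphi\cdot\frac{1}{L}\sum_{\ell=1}^L\max(|\theta_1|,\ell\epbe|\theta_2|),
\end{align*}
while the target at $(\mf t_1,1-\binit)$ equals
\begin{align*}
V_1(\theta):=\cphi\cdot\frac{1}{2L}\sum_{\ell,e}\bigl|\theta_1+(1-2e)(\ell-b_{\ell,e})\epbe\theta_2\bigr|.
\end{align*}
The key cancellation is the identity $|a+b|+|a-b|=2\max(|a|,|b|)$, applied to pair the $e=0$ and $e=1$ summands of $V_1$: when every $b_{\ell,e}=0$ this identity collapses $V_1$ exactly to $V_0$, and in general each bit $b_{\ell,e}$ perturbs a single summand of $V_1$ by at most $\epbe|\theta_2|$, giving $|V_1(\theta)-V_0(\theta)|\le \cphi\epbe|\theta_2|\le\epbe$.

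I set $\MT_1(\theta):=\bigl(w_1(\theta),\,0\bigr)$, where $w_1(\theta)$ is $(V_0+V_1)/(2\cphi)$ truncated to $[-\sqrt 2,\sqrt 2]$ if necessary so that $\MT_1\theta\in\MB_1$; this truncation costs at most an additional $\epbe/2$ in approximation error, since $|V_0|\le 1$ (using $L\epbe=\sqrt\epbe\le 1$) forces $|(V_0+V_1)/(2\cphi)|\le\sqrt 2(1+\epbe/2)$. Then $\langle\phi_1(\mf t_1,b),\MT_1\theta\rangle=\cphi w_1$ for each $b\in\{0,1\}$, which differs from the respective target $V_b$ by at most $|V_0-V_1|/2+\epbe/2\le\epbe$; the same estimate handles $(\mf s_1,0)$; and at $(\mf s_1,1)$ the target is $0$ (transition to $\bar{\mf s}_2$) while $\langle\phi_1(\mf s_1,1),\MT_1\theta\rangle=\cphi\cdot 0=0$. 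Remaining step-$1$ state-action pairs based at naturally-step-$2$ states are handled by the standard layered-MDP convention restricting the bound to step-appropriate pairs. The main content of the argument is spotting the pairing identity $|a+b|+|a-b|=2\max(|a|,|b|)$, which makes approximate Bellman-closedness possible even though $V_0$ and $V_1$ are individually nonlinear in $\theta$.
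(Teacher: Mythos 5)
Your proof is correct and follows essentially the same route as the paper's: exact verification at $h=2$, the computation of $\MB_h$ as the scaled $\ell_1$-ball (equivalently your orthogonality argument) for $B=\sqrt 2$, and for $h=1$ the key observation that averaging the two $\mf t_{2,b}^{\zeta}$ maxima reproduces the $\mf s_2^{\zeta}$ maximum (your identity $|a+b|+|a-b|=2\max(|a|,|b|)$, used implicitly in the paper), with each bit $b_{\ell,e}$ costing at most $\cphi\epbe|\theta_2|$. The only cosmetic difference is that the paper takes $(\MT_1\theta)_1$ to be $V_0/\cphi$ directly, which lies in $\MB_1$ without truncation and is exact at $(\mf s_1,0)$ and $(\mf t_1,\binit)$, whereas your symmetrized choice $(V_0+V_1)/(2\cphi)$ needs the small truncation step but lands within the same $2\epbe$ budget.
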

\begin{proof}
  It is straightforward to see that
  \begin{align}
 \MB_1 = \MB_2 = \left\{ w \in \BR^2 \ : \  \cphi|w_1| + \cphi|w_2| \leq 1 \right\}\nonumber, %
  \end{align}
  meaning that the second item of \cref{asm:boundedness} is satisfied with $B = 1/\alpha = \sqrt{2}$. Moreover, by choice of $\alpha$ the first item is immediate, and the third item holds since for all $(x,a)$, $|r_2(x,a)| \leq \alpha \cdot \| \thetar_2 \|_1 = \alpha \cdot (1+1/R) \leq 1$. 
  
  We must verify \cref{eq:ibe-def} for $h \in \{1,2\}$. We begin with the case $h=1$. For $\theta \in \BR^2$, define
  \begin{align}
\MT_1 \theta := \left(\frac{1}{L} \sum_{\ell=1}^L \left(\max_{a \in \MA} \lng \theta, \phi_2(\mf s_2^{\ell \cdot \epbe}, a)\rng\right), 0 \right)\nonumber,
  \end{align}
  which belongs to $\MB_1$ since its first coordinate is bounded above in absolute value by $\alpha \cdot \max\{ |\theta_1|, L\epbe |\theta_2| \} \leq \alpha \cdot (|\theta_1| + |\theta_2|) \leq 1$. 
  
  Since $r_1(x,a) = 0$ for all $x,a$, we certainly have that, for each $(x,a) \in (\{\mf s_1\} \times \MA) \cup (\mf t_1, \binit)$, $(\phi_1(x,a))_1 = 1$ and thus
  \begin{align}
\lng \phi_h(x,a), \MT_1 \theta \rng = \E_{x' \sim P_h\sups{M}(x,a)} \left[r_1(x,a) +  \max_{a' \in \MA} \lng \phi_2(x', a'), \theta \rng \right]\label{eq:lbc-lb-1},
  \end{align}
  meaning that \cref{eq:ibe-def} holds for these $(x,a)$. 
Now consider the state-action pair $(x,a) = (\mf t_1, 1-\binit)$. For each $\ell \in [L]$ and $\theta \in \BR^2$, we have
  \begin{align}
    & \left|\max_{a' \in \MA} \lng \phi_2(\mf s_2^{\ell\epbe}, a'), \theta \rng - \frac 12 \max_{a' \in \MA} \lng \phi_2(\mf t_{2,0}^{(\ell - b_{\ell, 0} ) \epbe}, a'), \theta \rng - \frac 12 \max_{a' \in \MA} \lng \phi_2(\mf t_{2,1}^{(\ell - b_{\ell, 1}) \epbe}, a'), \theta \rng \right|\nonumber\\
    \leq & \alpha |\theta_2| \epbe + \left|\max_{a' \in \MA} \lng \phi_2(\mf s_2^{\ell\epbe}, a'), \theta \rng - \frac 12 \max_{a' \in \MA} \lng \phi_2(\mf t_{2,0}^{\ell \epbe}, a'), \theta \rng - \frac 12 \max_{a' \in \MA} \lng \phi_2(\mf t_{2,1}^{\ell \epbe}, a'), \theta \rng \right|=\alpha |\theta_2| \epbe \leq \epbe\nonumber.
  \end{align}
  It follows that
  \begin{align}
\left|\lng \phi_h(\mf t_1, 1-\binit), \MT_1 \theta \rng - \E_{x' \sim P_h\sups{M}(\mf t_1, \binit)} \left[r_1(\mf t_1, \binit) +  \max_{a' \in \MA} \lng \phi_2(x', a'), \theta \rng \right]\right| \leq \epbe\nonumber,
  \end{align}
  verifying \cref{eq:ibe-def} holds for $(x,a) = (\mf t_1, \binit)$. Finally, the validity of \cref{eq:ibe-def} for $(x,a) = (\mf s_1, 1)$ is immediate since $(\MT_1 \theta)_2 = 0$.

  Next we verify \cref{eq:ibe-def} for $h=2$. Since all feature vectors are identically 0 at step $h=H+1 = 3$ (by convention), we take $\MT_2 \theta = \thetar_2$ (which is in $\MB_2$ since $\alpha \cdot (1 + 1/R) \leq 1$), and satisfies $r_2(x,a) = \lng \phi_2(x,a), \thetar_2 \rng$ for all $(x,a)$. 
\end{proof}

\subsection{Proof of \cref{thm:lb-formal}}
\label{sec:lb-proof}
We are now ready to prove \cref{thm:lb-formal}. %
\begin{proof}[Proof of \cref{thm:lb-formal}]
First, note that if $1/\sqrt{n} \geq \sqrt{\epbe}$, then the lower bound is straightforward and well-known. In particular, consider $\MX := \{\mf s_1, \mf s_2\}, \MA := \{ 0,1\}$ with $\phi_1(x, 0) = (1, 0), \phi_1(x, 1) = (0,1)$ for each $x \in \MX$, and $\phi_2(\mf s_1, a) = (1,0), \phi_2(\mf s_2, a) = (0,1)$ for each $a \in \MA$. We let the rewards be linear with respect to some vectors $\thetar_1, \thetar_2 \in \BR^2$ with $\thetar_1 = (0,0)$ some choice of $\thetar_2 \in \{ (0,1), (1,0) \}$. The initial state is $\mf s_1$; $(\mf s_1, 0)$ transitions to $\mf s_1$ with probability $1/2 + (10\sqrt{n})^{-1}$ (and to $\mf s_2$ with the remaining probability), and $(\mf s_1, 1)$ transitions to $\mf s_1$ with probability $1/2 - (10\sqrt{n})^{-1}$ (and to $\mf s_2$ with the remaining probability). Finally, the dataset $\MD$ consists of $n/4$ transitions from each of the tuples $(x,a)  \in \{ (\mf s_1, 0), (\mf s_1, 1), (\mf s_2, 0), (\mf s_2, 0) \}$. It is straightforward to see that the inherent Bellman error is 0 and that \cref{asm:boundedness,asm:offline-data-assumption} are satisfied. Moreover, it follows from well-known arguments \cite[Chapter 15]{lattimore2020bandit} that for any algorithm $\mathfrak{A}$, there is some choice of $\thetar_2$ as above so that the optimal policy $\pi^\st$ (i.e., defined by $\pi_1^\st(\mf s_1) = 0$ if $\thetar_2 = (1,0)$, and $\pi_1^\st(\mf s_1) = 1$ if $\thetar_2 = (0,1)$) and the output policy $\hat \pi$ of $\mathfrak{A}$ satisfy \cref{eq:offline-lb-formal}. 
  
For the remainder of the proof we may therefore assume that $\sqrt{\epbe} > 1/\sqrt{n}$. Moreover, by decreasing $\epbe$ by a constant factor, we may assume that $L = 1/\sqrt{\epbe}$ is an even integer.  We use the state and action spaces defined in \cref{eq:xa-lb-define} and the feature mappings $\phi_h$ defined in \cref{sec:m-family} above. Fix some randomized offline RL algorithm $\mathfrak{A}$. We will choose some MDP $M \in \MM_{\epbe}$ and define a distribution over datasets $\MD$ satisfying \cref{asm:offline-data-assumption} so that \cref{eq:offline-lb-formal} holds for some $\pi^\st$.

  Consider some $\bb = (\brew, \binit, (b_{\ell, e})_{\ell \in [L], e \in \{0,1\}}) \in \{0,1\}^{2L+2}$, to be specified below, and set $M = M^\bb$. By \cref{lem:lb-ibe-bnd}, $M$ has inherent Bellman error $2\epbe$ and satisfies \cref{asm:boundedness} with $B = \sqrt{2}$. 
  The dataset $\MD$ consists of tuples $(h_i, x_i, a_i, r_i, x_i')$, $i \in [n]$, drawn as follows:
\begin{itemize}
\item There are $n/3$ points of the form $(1, \mf s_1,1,0, \bar{\mf s}_2)$.
\item There are $n/3$ points of the form $(1, \mf s_1, 0,0, x_i)$, where $x_i \sim P_1^M(\cdot \mid \mf s_1, 0)$ for $i \in [n/3]$. 
\item There are $n/3$ points of the form $(2, \mf s_2^{L \epbe}, 0, L \epbe \alpha / R, \perp)$. 
\end{itemize}
It is immediate that the distribution of $\MD$ satisfies \cref{asm:offline-data-assumption}.

\paragraph{Controlling the coverage coefficient.} Note that we have
\begin{align}
  \Sigma_1 =& \frac n3 \cdot \phi_1(\mf s_1, 1) \phi_1(\mf s_1, 1)^\t + \frac n3 \cdot \phi_1(\mf s_1, 0) \phi_1(\mf s_1, 0)^\t = \frac{\cphi^2 n}{3} \cdot I_2\nonumber\\
  \Sigma_2 =& \frac n3 \cdot \phi_2(\mf s_2, 0) \phi_2(\mf s_2, 0)^\t =   \frac{n \cdot (\alpha L\epbe)^2}{3} \cdot  \matx{0 & 0 \\ 0 & 1 }\nonumber.
\end{align}

Define $w_1^\st := (1, 1-2\binit)$, $w_2^\st := (0,1)$, and $\pi_h^\st := \pi_{h, w_h^\st, 0}$ for $h \in [2]$, and write $\pi^\st = (\pi_1^\st, \pi_2^\st) \in \Pilinpp$. Note that %
\begin{align}
\E^{M, \pi^\st}[\phi_1(x_1, a_1)] = \alpha \cdot (1,1-2\binit), \qquad \E^{M, \pi^\st}[\phi_2(x_2, a_2)]= \alpha \cdot (0, L\epbe)\nonumber,
\end{align}
so that
\begin{align}
\| \E^{M, \pi^\st}[\phi_1(x_1, a_1)] \|_{n\Sigma_1^{-1}} = \sqrt{6}, \qquad \| \E^{M, \pi^\st}[\phi_2(x_2, a_2)] \|_{n\Sigma_2^{-1}} = \sqrt{3}\nonumber.
\end{align}

The value function of $\pi^\st$ for $M = M^\bb \in \MM_{\epbe}$ does not depend on the choice of $\bb$ and may be computed as follows: first, note that
$
V_2^{M, \pi^\st}(\mf s_{2}^\zeta) = \lng \phi_2(\mf s_{2}^\zeta, 0), \thetar_2 \rng = \alpha\zeta/R ,
$
for each $\zeta \in [0,1]$, 
which implies that $V_1^{M, \pi^\st}(\mf t_1) = Q_1^{M, \pi^\st}(\mf t_1, b_1) = V_2^{M, \pi^\st}(\mf s_{2}) = \frac{\alpha}{R} \cdot \frac{(1 + \cdots + L)\epbe}{L}  = \frac{\alpha}{R} \cdot (L+1)\epbe/2$. %

\paragraph{Controlling the performance of $\mathfrak{A}$.} Let $\hat \pi$ denote the (random) output of the algorithm $\mathfrak{A}$, given the random dataset $\MD$. Note that the distribution $P_1^M(\cdot \mid \mf s_1, 0)$ does not depend on the choice of $\bb$. Thus, the distribution of $\hat \pi$, which we denote by $\MD_0 \in \Delta(\PiM)$, is the same for all possible choices of $\bb$. \cref{lem:bad-bb} below, which is the main technical component of the proof, yields that there is some $\bb$ so that \cref{eq:offline-lb-formal} holds (as we have assumed $1/\sqrt{n} < \sqrt{\epbe}$), which completes the proof of \cref{thm:lb-formal}.
\begin{lemma}
  \label{lem:bad-bb}
Let $\MD_0 \in \Delta(\PiM)$ be an arbitrary distribution over Markov policies. %
  Then there is some choice of $\bb$ so that, for $M := M^{\bb} \in \MM_{\epbe}$, we have
  \begin{align}
\E_{\pi \sim \MD_0} \left[ V_1^{M, \pi}(\mf t_1) \right] \leq V_1^{M, \pi^\st}(\mf t_1) - \Omega \left(  \sqrt{\epbe}\right)\label{eq:d0-deficit}.
  \end{align}
\end{lemma}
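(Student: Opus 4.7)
The plan is to construct an adversarial $\bb \in \{0,1\}^{2L+2}$ tailored to the aggregate statistics of $\MD_0$, exploiting the identification $\mf t_{2,0}^0 = \mf t_{2,1}^0$ to break the $\brew$-cancellation that would otherwise occur at step $2$. For each $\pi \in \PiM$, I parameterize its behavior by $\eta_a(\pi) := \pi_1(a|\mf t_1)$ for $a \in \{0,1\}$ and $\gamma_\zeta^e(\pi) := \pi_2(0|\mf t_{2,e}^\zeta) - \pi_2(1|\mf t_{2,e}^\zeta)$, noting that the merged state forces $\gamma_0^0(\pi) = \gamma_0^1(\pi) =: \gamma_0(\pi)$. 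The value decomposes as $V_1^{M^\bb, \pi}(\mf t_1) = \eta_\binit(\pi) A^\pi(\brew) + \eta_{1-\binit}(\pi) B^\pi(\brew, b)$, where $A^\pi, B^\pi$ are the expected step-$2$ rewards along the $\mf s_2^\bullet$- and $\mf t_2^\bullet$-branches respectively. By pairing any use of ``unsafe'' actions at $\mf s_2^{\ell\epbe}$ with an appropriate adversarial $\brew$ choice, one can reduce to the regime $A^\pi \leq V^{\pi^\st}$ so that the deficit is lower-bounded by $\eta_{1-\binit}(\pi) \cdot (V^{\pi^\st} - B^\pi)$.

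Next I would expand $B^\pi(\brew, b) = \cphi (1-2\brew) \Gamma(\pi, b) + B^\pi_{\mathrm{other}}(b)$, where $\Gamma(\pi, b) := \frac{1}{2L}\sum_{\ell,e}\gamma^e_{(\ell-b_{\ell,e})\epbe}(\pi)$. For $\zeta > 0$, the states $\mf t_{2,0}^\zeta$ and $\mf t_{2,1}^\zeta$ are distinct and their $(1-2\brew)$-coefficients are of opposite signs, so visiting them symmetrically (i.e., $b_{\ell,0} = b_{\ell,1}$) cancels the $\brew$-dependent contribution. The cancellation fails at the merged state, because both $e$-branches point to the single scalar $\gamma_0$. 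Thus, by choosing $b_{1,0}, b_{1,1}$ asymmetrically (e.g., $b_{1,0} = 1, b_{1,1} = 0$), exactly one of the two $\ell = 1$ branches lands at $\mf t_{2,\cdot}^0$, producing an uncancelled $\Theta(1/L) = \Theta(\sqrt{\epbe})$-magnitude term in $\Gamma(\pi, b)$ that depends on $\gamma_0(\pi) \pm \gamma^e_\epbe(\pi)$.

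The adversary's strategy is then: pick $\binit$ so that $\E_\pi[\eta_{1-\binit}(\pi)] \geq 1/2$; pick the asymmetric $(b_{1,0}, b_{1,1})$ above together with a fixed pattern of $b_{\ell,e}$ for $\ell \geq 2$ (so that $B^\pi_{\mathrm{other}}$ stays $O(\cphi \sqrt{\epbe}/R)$); and pick $\brew$ to align the sign of $-\cphi(1-2\brew)$ with the sign of $\E_\pi[\eta_{1-\binit}(\pi)\Gamma(\pi, b)]$. The proof then splits into two cases. In the \emph{non-hedging case}, when $|\E_\pi[\eta_{1-\binit}(\pi)\Gamma(\pi,b)]| = \Omega(\sqrt{\epbe})$, the $\brew$-exploitation directly contributes $-\Omega(\sqrt{\epbe})$ to $\E_\pi[\eta_{1-\binit}(\pi) B^\pi]$, which dominates the $O(\sqrt{\epbe}/R)$ contribution from $B^\pi_{\mathrm{other}}$, producing deficit $\Omega(\sqrt{\epbe})$. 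In the \emph{hedging case}, when this weighted expectation is negligible, the policies in $\MD_0$ must effectively hedge across $\gamma_0, \gamma_\epbe^0, \gamma_\epbe^1$, forcing $|\E_\pi[\eta_{1-\binit}(\pi) B^\pi]| = O(\sqrt{\epbe}/R)$, so the deficit is at least $\E_\pi[\eta_{1-\binit}]\cdot V^{\pi^\st} - O(\sqrt{\epbe}/R) \geq \tfrac12 V^{\pi^\st} - o(\sqrt{\epbe}) = \Omega(\sqrt{\epbe})$ using $V^{\pi^\st} = \Theta(\sqrt{\epbe})$.

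The main obstacle is that $V_1^{M^\bb, \pi}(\mf t_1)$ is bilinear in $\pi$'s parameters, so cross-moments like $\E_\pi[\eta_{1-\binit}(\pi) \Gamma(\pi, b)]$ do not factor into marginals, while $\bb$ must be chosen deterministically from $\MD_0$ (not from each individual realized $\pi$). I would address this by having every adversarial choice ($\binit, b$, and especially the sign of $\brew$) depend on the joint weighted statistics $\E_\pi[\eta_{1-\binit}(\pi)\gamma_0(\pi)]$ and $\E_\pi[\eta_{1-\binit}(\pi)\gamma_\epbe^e(\pi)]$ of $\MD_0$, and verifying by a careful case split that either the non-hedging aggregate is of magnitude $\Omega(\sqrt{\epbe})$ (giving Case 1) or the hedging bound applies uniformly (giving Case 2), regardless of how $\eta$ and $\gamma$ are correlated across the support of $\MD_0$.
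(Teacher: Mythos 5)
Your setup (choosing $\binit$ by a majority vote, letting the shift bits and $\brew$ depend on weighted aggregate statistics of $\MD_0$, and isolating the merged state $\mf t_{2,0}^0=\mf t_{2,1}^0$ as the source of the uncancellable $\brew$-exposure) matches the paper's strategy, and your ``non-hedging'' case is essentially the paper's Case 1. But there is a genuine gap in the ``hedging'' case, and it is not a matter of constants: your two cases are not exhaustive. Consider a policy that hedges only at the states with $\zeta\in\{0,\epbe\}$ (setting $\pi(0\mid\mf t^0_{2,\cdot})=\pi(e\mid\mf t_{2,e}^{\epbe})=1/2$) and commits fully at every $\mf t_{2,e}^{\ell\epbe}$ with $\ell\ge 2$. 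Your asymmetric choice $b_{1,0}=1,b_{1,1}=0$ only reroutes a $\tfrac{1}{2L}$-fraction of the transition mass through the merged state, and this policy makes the resulting $\brew$-dependent term in $\Gamma(\pi,b)$ exactly zero, so it does not land in the non-hedging case. Meanwhile the cost of hedging at $\zeta\le\epbe$ in the second reward coordinate is only $O(\tfrac{1}{2L}\cdot\epbe/R)=O(\epbe^{3/2})$, so $B^{\pi}\ge V_1^{\pi^\st}(\mf t_1)-O(\epbe^{3/2})$ and the deficit is $o(\sqrt{\epbe})$. Your stated bound ``deficit $\ge \tfrac12 V^{\pi^\st}-O(\sqrt{\epbe}/R)$'' cannot rescue this: $V^{\pi^\st}=\Theta(\sqrt{\epbe}/R)$ with $R=16$ a constant, so $O(\sqrt{\epbe}/R)$ is \emph{not} $o(\sqrt{\epbe})$, and the two terms are of the same order with no control on which dominates.

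The missing idea is a telescoping argument along the whole chain $\ell=0,1,\dots,L$, which is what the paper's Cases 2 and 3 supply. Define the commitment level $\rho(\ell)=\E_{\pi\sim\MD}[\pi(0\mid\mf t_{2,0}^{\ell\epbe})+\pi(1\mid\mf t_{2,1}^{\ell\epbe})]$ under the reweighted distribution $\MD$. The merged state forces $\rho(0)=1$; failing to have $\rho(\ell)$ close to $2$ for most large $\ell$ already costs $\Omega(\sqrt{\epbe})$ of the second-coordinate reward relative to $\pi^\st$ (the paper's Case 2); and if $\rho$ does climb from $1$ to nearly $2$, its total positive variation $\sum_\ell[\rho(\ell)-\rho(\ell-1)]_+$ is $\Omega(1)$, which the adversary harvests by setting $b_{\ell,e}=1$ exactly at the levels where $\rho_e$ increases, substituting $\mf t_{2,e}^{(\ell-1)\epbe}$ for $\mf t_{2,e}^{\ell\epbe}$ and collecting a loss proportional to the increment at each such $\ell$ (the paper's Case 3). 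Using only the $\ell=1$ shift bits, as you propose, forgoes exactly this mechanism, and the ``hedge low, commit high'' policy slips through. A secondary, fixable issue: your plan chooses $\brew$ separately to neutralize the $\mf s_2^{\ell\epbe}$-branch and to exploit the $\mf t_2$-branch, but a single bit must do both; the paper resolves this by folding both biases into one quantity $\tfrac1L\sum_\ell(\bar\eta(\ell)+\bar\gamma(\ell))$ and making Case 1 about its combined magnitude.
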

The proof of \cref{lem:bad-bb} is provided below. 
\end{proof}
\begin{proof}[Proof of \cref{lem:bad-bb}]
  Let $\MD_0 \in \Delta(\PiM)$ be given. Choose $\binit \in \{0,1\}$ so that $Z_0 := \E_{\pi \sim \MD_0}[\pi(1-\binit \mid \mf t_1)] \geq 1/2$. 
  For $0 \leq \ell \leq L$, define
  \begin{align}
    \bar\eta(\ell) :=&  \E_{\pi \sim \MD_0} \left[ \pi(1-\binit \mid \mf t_1) \cdot \left( \pi(0 \mid \mf t_{2,0}^{\ell \epbe}) - \pi(1 \mid \mf t_{2,1}^{\ell\epbe}) \right)\right]\nonumber\\
    \bar\gamma(\ell) :=& \E_{\pi \sim \MD_0} \left[ \pi(\binit \mid \mf t_1) \cdot \left( \pi (1 \mid \mf s_2^{\ell\epbe}) - \pi(3 \mid \mf s_2^{\ell\epbe}) \right)\right]\nonumber.
  \end{align}
  Let $\MD \in \Delta(\Pi)$ be defined by
  \begin{align}
\MD(\pi) := \frac{\MD_0(\pi) \cdot \pi(1-\binit \mid \mf t_1)}{\E_{\pi' \sim \MD_0}[\pi'(1-\binit \mid \mf t_1)]} = \frac{\MD_0(\pi) \cdot \pi(1-\binit \mid \mf t_1)}{Z_0}\nonumber.
  \end{align}
  For $0 \leq \ell \leq L$, define
  \begin{align}
    \rho(\ell) := \E_{\pi \sim \MD}\left[\pi(0 \mid \mf t_{2,0}^{\ell \epbe}) + \pi(1 \mid \mf t_{2,1}^{\ell\epbe})\right]. \nonumber %
  \end{align}
Consider any choice of $\brew \in \{0,1\}$ and $b_{\ell,e} \in \{0,1\}$ for each $\ell \in [L], e \in \{0,1\}$, and write $\bb = (\brew, \binit, (b_{\ell, e})_{\ell \in [L], e \in \{0,1\}})$. Then for any policy $\pi$, $\zeta > 0$, and $b \in \{0,1\}$, we have %
  \begin{align}
    V_2^{M^{\bb}, \pi}(\mf t_{2,b}^\zeta) =&  \pi(b \mid \mf t_{2,b}^\zeta) \cdot \lng \phi_2(\mf t_{2,b}^\zeta, b), \thetar_2 \rng + \pi(1-b \mid \mf t_{2,b}^\zeta) \cdot \lng \phi_2(\mf t_{2,b}^\zeta, 1-b), \thetar_2 \rng\nonumber\\
    =&\cphi \cdot  \pi(b \mid \mf t_{2,b}^\zeta) \cdot \lng (1-2b, \zeta), (1-2\brew, 1/R)\rng + \cphi \cdot (1-\pi(b \mid \mf t_{2,b}^\zeta)) \cdot \lng (2b-1, -\zeta), (1-2\brew, 1/R) \rng\nonumber\\
    =& 2\cphi \cdot  \pi(b \mid \mf t_{2,b}^\zeta) \cdot ((1-2b)(1-2\brew) + \zeta/R) - \cphi \cdot ((1-2b)(1-2\brew) + \zeta/R)\label{eq:v2-expand}.
  \end{align}
  Hence
  \begin{align}
    V_2^{M^\bb, \pi}(\mf t_{2,0}^\zeta) + V_2^{M^\bb, \pi}(\mf t_{2,1}^\zeta) =& 2\cphi(1-2\brew) \cdot \left(  \pi(0 \mid \mf t_{2,0}^\zeta) -  \pi (1 \mid \mf t_{2,1}^\zeta) \right) + 2\zeta\cphi R^{-1} \left( \pi(0 \mid \mf t_{2,0}^\zeta) + \pi(1 \mid \mf t_{2,1}^\zeta) - 1\right)\label{eq:v2-add}.
  \end{align}
  Moreover,
  \begin{align}
    V_2^{M^\bb, \pi}(\mf s_2^\zeta) =& \cphi\cdot (1-2\brew) \cdot \left( \pi(1 \mid \mf s_2^\zeta) - \pi(3 \mid \mf s_2^\zeta) \right) + \cphi R^{-1} \zeta \cdot \left( \pi(0 \mid \mf s_2^\zeta) - \pi(2 \mid \mf s_2^\zeta) \right)\nonumber\\
    \leq & \cphi \cdot (1-2\brew) \cdot \left( \pi(1 \mid \mf s_2^\zeta) - \pi(3 \mid \mf s_2^\zeta) \right) + \cphi R^{-1} \zeta\label{eq:v2-s}.
  \end{align}

    \paragraph{Case 1: $\left| \frac 1L \sum_{\ell=1}^L (\bar\eta(\ell) + \bar\gamma(\ell)) \right| > \frac{\sqrt{\epbe}}{10}$.} Recall our choice $\binit$ from above. Set $\brew = 0$ if $\sum_{\ell=1}^\ell (\bar \eta(\ell) + \bar\gamma(\ell)) < 0$, and otherwise $\brew = 1$. Finally set $b_{\ell, e} = 0$ for all $\ell \in [L], e \in \{0,1\}$, and write $\bb = (\brew, \binit, (b_{\ell, e})_{\ell, e})$. 
  Therefore,
  \begin{align}
    & \E_{\pi \sim \MD_0}\left[ V_1^{M^\bb, \pi}(\mf t_1) \right]\nonumber\\
    =& \frac{1}{2L} \sum_{\ell=1}^L \E_{\pi \sim \MD_0}\left[\pi(1-\binit \mid \mf t_1) \cdot \left( V_2^{M^\bb, \pi}(\mf t_{2,0}^{\ell \epbe}) + V_2^{M^\bb, \pi}(\mf t_{2,1}^{\ell \epbe})\right) + 2 \pi(\binit \mid \mf t_1) \cdot V_2^{M^\bb, \pi}(\mf s_2^{\ell\epbe}) \right]\label{eq:v1-decompose-first}\\
    \leq& \frac{1}{2L} \sum_{\ell=1}^L2\cphi(1-2\brew) \cdot  \E_{\pi \sim \MD_0} \left[ \pi(1-\binit \mid \mf t_1) \cdot (\pi(0 \mid \mf t_{2,0}^{\ell\epbe}) - \pi(1 \mid \mf t_{2,1}^{\ell\epbe}) ) \right]\nonumber\\
                                                                 &+\frac{1}{2L} \sum_{\ell=1}^L  2 \cphi R^{-1} \ell\epbe \E_{\pi \sim \MD_0}\left[ \pi(1-\binit \mid \mf t_1) \cdot (\pi(0 \mid \mf t_{2,0}^{\ell\epbe}) + \pi(1 \mid \mf t_{2,1}^{\ell\epbe})-1)\right]\nonumber\\
                                                                 &+ \frac{1}{L} \sum_{\ell=1}^L \left(\cphi (1-2\brew) \cdot \E_{\pi \sim \MD_0} \left[ \pi(\binit \mid \mf t_1) \cdot(\pi(1\mid \mf s_2^{\ell\epbe}) - \pi(3 \mid \mf s_2^{\ell\epbe})) \right] +  \cphi R^{-1} \E_{\pi \sim \MD_0}\left[ \pi(\binit \mid \mf t_1) \cdot \ell\epbe \right] \right)  \nonumber\\
    =& \cphi(1-2\brew) \frac{1}{L} \sum_{\ell=1}^L \left(\bar\eta(\ell) + \bar\gamma(\ell)\right) + \frac{1}{L} \sum_{\ell=1}^L Z_0 \cdot \cphi R^{-1} \ell \epbe \cdot (\rho(\ell) - 1) + \frac 1L \sum_{\ell=1}^L \cphi R^{-1} (1-Z_0) \cdot \ell\epbe\label{eq:v1pi-decompose}\\
    \leq & -\frac{\cphi}{L} \left| \sum_{\ell=1}^L (\bar\eta(\ell) + \bar\gamma(\ell)) \right| + \cphi R^{-1} (L+1)\epbe/2\nonumber\\
    =&  -\frac{\cphi}{L} \left| \sum_{\ell=1}^L (\bar\eta(\ell)  + \bar\gamma(\ell))\right| + V_1^{M^\bb, \pi^\st}(\mf t_1)  \leq -\frac{\cphi \sqrt{\epbe}}{10} + V_1^{M^\bb, \pi^\st}(\mf t_1)\nonumber,
  \end{align}
  where the first inequality uses \cref{eq:v2-add,eq:v2-s}, 
  and the second inequality uses $\rho(\ell) \leq 2$ for each $\ell \in [L]$ as well as our choice of $\brew$. The above chain of inequalities thus verifies \cref{eq:d0-deficit} in this case.

From here on, we assume that Case 1 does not hold.   Since $\mf t_{2,0}^0 = \mf t_{2,1}^0$, we have  $\rho(0) -1= 0$. Therefore, 
 either $\sum_{\ell=L/2}^L (\rho(\ell)-1) \leq \frac{L}{2} \cdot 1/2$ or $\frac 1L \sum_{\ell=1}^L [\rho(\ell) - \rho(\ell-1)]_+ \geq 1/(2L) = \sqrt{\epbe}/2$. %
  Thus, it suffices to consider the (exhaustive) Cases 2 and 3 below:

  \paragraph{Case 2: $\sum_{\ell=L/2}^L (\rho(\ell)-1) \leq \frac{L}{2} \cdot 1/2$.} We make the same choice of $\bb$ as in Case 1. Then using \cref{eq:v1pi-decompose}, we have
  \begin{align}
    \E_{\pi \sim \MD_0} \left[ V_1^{M^\bb, \pi}(\mf t_1)\right] =&  \cphi(1-2\brew) \frac{1}{L} \sum_{\ell=1}^L (\bar\eta(\ell) + \bar\gamma(\ell)) + \frac{\cphi Z_0}{R L} \sum_{\ell=1}^L (\rho(\ell)-1) \cdot \ell\epbe + \frac{\cphi(1-Z_0)}{RL} \sum_{\ell=1}^L \ell\epbe\nonumber\\
    \leq & \frac{\cphi Z_0}{R L} \sum_{\ell=1}^L (\rho(\ell)-1) \cdot \ell\epbe + \cphi R^{-1} (1-Z_0) \cdot (L+1)\epbe/2\nonumber\\
    \leq & \frac{\cphi (L+1)\epbe}{2R}\cdot (1-Z_0) + \frac{\cphi Z_0}{R} \cdot \left( \frac{(L+1)\epbe}{2} - \frac{L}{4} \cdot \frac{L\epbe/2}{L} \right)\nonumber\\
    = &\frac{\cphi(L+1)\epbe}{2R}  - \frac{\cphi Z_0 \cdot L\epbe}{8R}  \nonumber\\
    \leq & V_1^{M^\bb, \pi^\st}(\mf t_1) - \cphi R^{-1} L \epbe/16 =  V_1^{M^\bb, \pi^\st}(\mf t_1) -\cphi R^{-1} \sqrt{\epbe}/16\nonumber,
  \end{align}
  where the second inequality uses our assumption that $\sum_{\ell=L/2}^L (\rho(\ell)-1) \leq L/4$, and the final inequality uses the fact that $Z_0 \geq 1/2$. 

  \paragraph{Case 3: $\sum_{\ell=1}^L [\rho(\ell) - \rho(\ell-1)]_+ \geq \sqrt{\epbe}/2$.} %
  For each $\ell \in [L]$ and $b \in \{0,1\}$, define $\rho_b(\ell) := \E_{\pi \sim \MD} \left[ \pi(b \mid \mf t_{2,b}^{\ell \epbe})\right]$, so that $\rho(\ell) = \rho_0(\ell) + \rho_1(\ell)$. We may choose some $e^\st \in \{0,1\}$ so that $\sum_{\ell=1}^L [\rho_{e^\st}(\ell) - \rho_{e^\st}(\ell-1)]_+ \geq \sqrt{\epbe}/4$. Choose $\brew = {e^\st}$ and define the values $b_{\ell, e}$ as follows:
  \begin{align}
    b_{\ell, e} := \begin{cases}
      1 &: \rho_{e^\st}(\ell) - \rho_{e^\st}(\ell-1) \geq 0,\ e = e^\st \\
      0 &: \mbox{otherwise}. 
    \end{cases}\label{eq:define-ble}
  \end{align}
  Write $\bb = (\binit, \brew, (b_{\ell, e})_{\ell, e})$. 
Using \cref{eq:v2-expand}, for each $\ell \in [L]$, we have
  \begin{align}
    &  V_2^{M^{\bb},\pi}(\mf t_{2,0}^{(\ell - b_{\ell, 0}) \epbe}) +  V_2^{M^{\bb},\pi}(\mf t_{2, 1}^{(\ell-b_{\ell,1}) \epbe}) \nonumber\\
     =&  2 \cphi \pi(1-e^\st \mid \mf t_{2,1-e^\st}^{\ell\epbe}) \cdot ((2e^\st-1)(1-2\brew) + \ell\epbe/R) - \cphi ((2e^\st-1)(1-2\brew) + \ell\epbe/R) \nonumber\\
     &+  2 \cphi \pi(e^\st \mid \mf t_{2,e^\st}^{(\ell-b_{\ell,e^\st})\epbe}) \cdot ((1-2e^\st)(1-2\brew) + (\ell-b_{\ell,e^\st})\epbe/R) - \cphi ((1-2e^\st)(1-2\brew) + (\ell-b_{\ell,e^\st})\epbe/R)\nonumber\\
    =&  2\cphi \pi(1-e^\st \mid \mf t_{2,1-e^\st}^{\ell\epbe}) \cdot ((2e^\st - 1)(1-2\brew) + \ell\epbe/R) - \cphi (2\ell-b_{\ell,e^\st})\epbe/R \nonumber\\
    &-  2\cphi \pi(e^\st \mid \mf t_{2,e^\st}^{(\ell-b_{\ell,e^\st})\epbe}) \cdot ((2e^\st - 1)(1-2\brew) - (\ell-b_{\ell,e^\st})\epbe/R)\nonumber\\
    \leq & 2\cphi(2e^\st - 1)(1-2\brew) \left( \pi(1-e^\st \mid \mf t_{2, 1-e^\st}^{\ell \epbe}) - \pi(e^\st \mid \mf t_{2,e^\st}^{(\ell-b_{\ell,e^\st})\epbe})\right)+ 2\cphi\ell\cdot\epbe/R\nonumber.
  \end{align}
Combining the above display with \cref{eq:v2-add}, we obtain that
  \begin{align}
    & V_2^{M^\bb, \pi}(\mf t_{2,0}^{(\ell - b_{\ell,0})\epbe}) + V_2^{M^\bb, \pi}(\mf t_{2,1}^{(\ell-b_{\ell,1})\epbe}) - \left( V_2^{M^\bb, \pi}(\mf t_{2,0}^{\ell\epbe}) + V_2^{M^\bb, \pi}(\mf t_{2,1}^{\ell\epbe})\right)\nonumber\\
    \leq & \left(2\cphi(2e^\st - 1)(1-2\brew) \left( \pi(1-e^\st \mid \mf t_{2, 1-e^\st}^{\ell \epbe}) - \pi(e^\st \mid \mf t_{2,e^\st}^{(\ell-b_{\ell,e^\st})\epbe})\right)+ 2\cphi R^{-1} \ell\cdot\epbe\right)\nonumber\\
    &\quad  -\left( 2\cphi(1-2\brew) \cdot \left(  \pi(0 \mid \mf t_{2,0}^{\ell\epbe}) -  \pi (1 \mid \mf t_{2,1}^{\ell\epbe}) \right) + 2\cphi R^{-1} \ell\epbe \left( \pi(0 \mid \mf t_{2,0}^{\ell\epbe}) + \pi(1 \mid \mf t_{2,1}^{\ell\epbe}) - 1\right)\right)\nonumber\\
    \leq &2\cphi(1-2e^\st)(1-2\brew) \cdot \pi(e^\st \mid \mf t_{2,e^\st}^{(\ell - b_{\ell,e^\st})\epbe}) - 2\cphi(1-2\brew)(1-2e^\st) \cdot \pi(e^\st \mid \mf t_{2,e^\st}^{\ell\epbe}) + 4\cphi R^{-1} \ell \epbe\nonumber\\
    =& -2\cphi \cdot \left( \pi(e^\st \mid \mf t_{2,e^\st}^{\ell \epbe}) - \pi(e^\st \mid \mf t_{2,e^\st}^{(\ell - b_{\ell, e^\st})\epbe} ) \right) + 4\cphi R^{-1} \ell\epbe\label{eq:bound-diff-v2},
  \end{align}
  where the final equality uses our choice of $\brew = e^\st$. 
  Define $b_{\ell, e}' = 0$ for all $\ell \in [L], e \in \{0,1\}$, and set $\bb' := (\brew, \binit, (b_{\ell,e})'_{\ell, e})$. Then, using \cref{eq:v1-decompose-first} as well as the fact that $V_2^{M^\bb, \pi}(\mf s_2^\zeta) = V_2^{M^{\bb'}, \pi}(\mf s_2^\zeta)$ for all $\pi$ and $\zeta$, 
  \begin{align}
    & \frac{1}{Z_0} \E_{\pi \sim \MD_0} \left[V_1^{M^\bb, \pi}(\mf t_1)   - V_1^{M^{\bb'}, \pi}(\mf t_1) \right]\nonumber\\
    = & \frac{1}{2LZ_0} \sum_{\ell=1}^L \E_{\pi \sim \MD_0} \left[\pi(1-\binit \mid \mf t_1) \cdot \left( V_2^{M^\bb, \pi}(\mf t_{2,0}^{(\ell - b_{\ell,0})\epbe}) + V_2^{M^\bb, \pi}(\mf t_{2,1}^{(\ell-b_{\ell,1})\epbe}) - \left( V_2^{M^\bb, \pi}(\mf t_{2,0}^{\ell\epbe}) + V_2^{M^\bb, \pi}(\mf t_{2,1}^{\ell\epbe})\right)\right)\right]\nonumber\\
    =& \frac{1}{2L} \sum_{\ell=1}^L \E_{\pi \sim \MD} \left[ V_2^{M^\bb, \pi}(\mf t_{2,0}^{(\ell - b_{\ell,0})\epbe}) + V_2^{M^\bb, \pi}(\mf t_{2,1}^{(\ell-b_{\ell,1})\epbe}) - \left( V_2^{M^\bb, \pi}(\mf t_{2,0}^{\ell\epbe}) + V_2^{M^\bb, \pi}(\mf t_{2,1}^{\ell\epbe})\right)\right]\nonumber\\
    \leq & 2\cphi R^{-1} (L+1)\epbe - \cphi \sum_{\ell=1}^L \E_{\pi \sim \MD} \left[  \pi(e^\st \mid \mf t_{2,e^\st}^{\ell \epbe}) - \pi(e^\st \mid \mf t_{2,e^\st}^{(\ell - b_{\ell, e^\st})\epbe} )\right]\nonumber\\
    = & 2\cphi R^{-1} (L+1)\epbe - \cphi \sum_{\ell=1}^L \left(\rho_{e^\st}(\ell) - \rho_{e^\st}(\ell - b_{\ell, e^\st})\right)\nonumber\\
    = & 2\cphi R^{-1} (L+1)\epbe - \cphi \sum_{\ell=1}^L [\rho_{e^\st}(\ell) - \rho_{e^\st}(\ell - b_{\ell,e^\st})]_+\nonumber\\
    \leq & 2\cphi R^{-1} (L+1)\epbe - \cphi \sqrt{\epbe}/2\leq -\cphi\sqrt{\epbe}/4\label{eq:mb-mbprime},
  \end{align}
  where the first inequality uses \cref{eq:bound-diff-v2}, the final equality uses the definition of $b_{\ell, e^\st}$ in \cref{eq:define-ble}, and the final inequality uses the fact that $R = 16$. 
  Also note that, from \cref{eq:v1pi-decompose}, 
  \begin{align}
    \E_{\pi \sim \MD_0} \left[ V_1^{M^{\bb'}, \pi}(\mf t_1) \right] =&  \cphi(1-2\brew) \frac{1}{L} \sum_{\ell=1}^L \left(\bar\eta(\ell) + \bar\gamma(\ell)\right) + \frac{\cphi}{RL} \sum_{\ell=1}^L Z_0 \cdot \ell \epbe \cdot (\rho(\ell) - 1) + \frac{\cphi}{RL} \sum_{\ell=1}^L (1-Z_0) \cdot \ell\epbe\nonumber\\
    \leq & \cphi \cdot \frac{\sqrt{\epbe}}{10} + \frac{\cphi(L+1)\epbe}{2R} = V_1^{M^{\bb}, \pi^\st}(\mf t_1) + \cphi \cdot \sqrt{\epbe}/10\label{eq:mb-mbstar}.
  \end{align}
  Combining \cref{eq:mb-mbprime,eq:mb-mbstar} and using the fact that $Z_0 \geq 1/2$ gives that
  \begin{align}
\E_{\pi \sim \MD_0} \left[ V_1^{M^\bb, \pi}(\mf t_1) \right] - V_1^{M^\bb, \pi^\st}(\mf t_1) \leq \cphi \sqrt{\epbe}/10 - \cphi \sqrt{\epbe}/8 = -\cphi \sqrt{\epbe}/40\nonumber,
  \end{align}
  as desired.
\end{proof}
    
}

\rlc{\bibliography{lbc.bib}}
\rlc{\bibliographystyle{rlc}}

\section*{Acknowledgements}
We thank Dylan Foster and Sham Kakade for bringing this problem to our attention and for helpful discussions.

\appendix

\section{Detailed comparison to related work}
\label{sec:related-work}
In this section, we discuss prior work on offline RL with function approximation and compare existing provable guarantees to our own. 

\paragraph{Actor-critic methods.} 
As mentioned above, the most closely related work is \cite{zanette2021provable}, which proves an analogous upper bound to ours for the special case of linear MDPs. More generally, the results of \cite{zanette2021provable} apply to the broader class of MDPs satisfying \emph{Bellman restricted closedness}, which requires that the Bellman backup of any linear function under \emph{any policy} is (approximately) linear in the features. In contrast, the weaker condition of low inherent Bellman error (\cref{asm:ibe}) requires that the Bellman backup of any linear function under the \emph{single policy} induced by that linear function is approximately linear. Intuitively, the fact that Bellman restricted closedness requires that Bellman backups under all poicies be linear places it much ``closer'' to the assumption of linear (i.e., low-rank) MDPs than to low inherent Bellman error. This intuition is formalized in \citet[Proposition 5.1]{jin2019provably}, which shows that under the mild additional assumption that each state has at least two distinct feature vectors, then Bellman restricted closedness implies that the MDP is a linear MDP. 

Several works have studied actor-critic methods in the setting of general function approximation, beginning with \cite{xie2021bellman}. In particular, \cite{xie2021bellman} considers the setting of a general function class $\MF \subset [0,1]^{\MX \times \MA}$ and policy class $\Pi \subset \Delta(\MA)^\MX$, and assumes that $(\MF, \Pi)$ satisfy approximate realizability and completeness.\footnote{Formally, to satisfy the realizability and completeness assumptions of \cite{xie2021bellman}, it suffices that for all $h \in [H]$, %
  $\sup_{\pi \in \Pi, f_{h+1} \in \MF}\inf_{f_h \in \MF} \| f_h - \MT_h^\pi f_{h+1} \|_\infty \leq \vep$. } Their first result, namely \citet[Theorem 3.1]{xie2021bellman}, establishes a \emph{computationally inefficient} algorithm for offline RL based on the principle of Bellman-consistent pessimism. One may use \cref{cor:ibe-linear} together with an appropriate choice of $\sigma$ to instantiate their result with $\Pi = \Pilinp$ and $\MF = \{ (x,a) \mapsto \lng \phi_h(x,a), w \rng \ : \ h \in [H], w \in \BR^d \}$ to obtain a computationally inefficient version of \cref{thm:pacle-ftpl}.\footnote{The resulting single-policy coverage parameter is somewhat larger than our own, though a subsequent observation by the authors of \cite{xie2021bellman} (see \cite{jiang2023}) leads to a tightening of their analysis which results in a matching coverage parameter.} %
Finally, \cite{xie2021bellman} establish an algorithm, \texttt{PSPI}, which is oracle-efficient given an oracle for $\MF$ which can solve a certain regularized least-squares problem. However, this result requires taking $\Pi$ to be a softmax policy class, for which approximate Bellman completeness does not in general hold under \cref{asm:ibe} (see \cref{lem:lbc-not-brc}). Thus, \texttt{PSPI} cannot be combined with \cref{cor:ibe-linear} to achieve an efficient algorithm under the assumption of low inherent Bellman error. (Moreover, their rate of $O(n^{-1/3})$ \citep[Corollary 5]{xie2021bellman} is suboptimal.) 

Subsequently, \cite{cheng2022adversarially} considered a similar setting, with general function and policy classes $(\MF, \Pi)$ satisfying approximate realizability and completeness. Their algorithm, \texttt{ATAC}, implements the actor using a generic no-regret learning algorithm, and implements the critic by solving a Lagrange relaxation of a least-squares regression problem. When combined with our main structural result, \cref{cor:ibe-linear}, it is possible to use \texttt{ATAC} to obtain an end-to-end computationally efficient learning algorithm for offline RL under \cref{asm:ibe}. In particular, one may take $\Pi$ to be the class of perturbed linear policies, the no-regret learning algorithm to be expected FTPL (i.e., \cref{alg:e-ftpl}), and one may implement the critic, which a priori appears to require minimizing a nonconvex quadratic function, using the approach in \citet[Appendix D]{xie2021bellman} (see also \cite{antos2008learning}). However, due to the Lagrangian term in the critic's optimization problem, the resulting rate (see \citet[Theorem 5]{cheng2022adversarially}) is worse than ours, scaling with $O(n^{-1/3})$. We remark that \cite{cheng2022adversarially} also considers the problem of \emph{robust policy improvement}, which shows that if the data is drawn from a behavior policy, then the algorithm's output performs nearly as well as the behavior policy, with no assumptions on the hyperparameters, no dependence on any concentrability coefficient, and no assumption of Bellman completeness. Recently, \cite{nguyen2023on} has given a refined analysis of \texttt{ATAC} which obtains the optimal $O(n^{-1/2})$ statistical rate, though only in the case when the policy class is the softmax policy class and Bellman completeness holds with respect to this class (which is not the case under \cref{asm:ibe}). 

Finally, \cite{zhu2023importance} introduces an algorithm, \texttt{A-Crab}, which is similar to \texttt{ATAC} but incorporates the idea of \emph{marginalized importance sampling}. %
As we discuss in the following paragraph, this approach cannot be instantiated in the setting of low inherent Bellman error to yield  computationally or statistically efficient guarantees for offline RL in full generality. 

\paragraph{Approaches via marginalized importance weighting.} In addition to \texttt{A-Crab}, many other works, starting with \cite{xie2019towards} have considered the approach of marginalized importance sampling (MIS), which introduces an additional bounded function class $\MW$ consisting of \emph{importance weights}. Elements of $\MW$ may be interpreted as possible values for the density ratio between the state-action visitation distribution of the policy $\pi^\st$ one wishes to compete with and the data distribution. A line of work, including \texttt{CORAL} \citep{rashidinejad2023optimal}, \texttt{PRO-RL} \citep{zhan2022offline},  \texttt{PABC} \citep{chen2022offline}, \texttt{MLB-PO} \citep{jiang2020minimax} and that of \cite{ozdaglar2023revisiting}, makes the assumption that $\MW$ contains the density ratio for $\pi^\st$, amongst other assumptions. Generally speaking, these algorithms implement pessimism for offline RL using primal-dual methods applied to the linear programming formulation of policy optimization. The introduction of $\MW$ allows many of them to establish upper bounds even in the absence of Bellman completeness. 

Several factors prevent such approaches from implying bounds similar to our own for the setting of MDPs satisfying linear Bellman completeness. First, all of the above works assume the existence of an oracle for optimizing over the class $\MW$, which would translate into an intractable nonlinear optimization problem in the setting of linear Bellman completeness. More fundamentally, it can be impossible to satisfy the assumption of realizability with respect to $\MW$: %
 even if we only wish to compete with a single policy $\pi^\st$, there is a class $\MM$ of linear Bellman complete MDPs so that there is no distribution $\mu_h$ for which $\sup_{M \in \MM} \left\| \frac{d_h^{\pi^\st, M}(x,a)}{\mu_h(x,a)} \right\|_\infty$ is bounded.\footnote{For instance, consider a class of MDPs for which an initial state-action pair $(x_1, a_1)$ transitions deterministically to any of infinitely many copies of some state, denoted $x_2^1, x_2^2, \ldots$, each of which is equivalent in the sense that $\phi_2(x_2^i, a) = \phi_2(x_2^j, a)$ for all $a \in \MA$, $i \neq j$.} Since the above results require that $\frac{d_h^{\pi^\st,M}(x,a)}{\mu_h(x,a)}$ belongs to $\MW$ for any $M \in \MM$, a bounded class $\MW$ of importance weights, satisfying realizability for the model class $\MM$, \emph{does not exist}. %

 \cite{uehara2023offline} analyses a primal-dual approach of a slightly different nature, but with the common goal of relaxing Bellman completeness at the cost of assuming realizability of a suitable class of Lagrange multipliers.  Finally, \cite{gabbianelli2023offline} uses a similar primal-dual method applied to the LP formulation of policy optimization as many of the above approaches, but only treats the special case of linear MDPs. Due to this additional structure, \cite{gabbianelli2023offline} does not need to explicitly make any assumptions regarding a class $\MW$. 

\paragraph{Linear MDPs: pessimistic value iteration.} In contrast to the above approaches, which phrase the problem of finding a pessimistic value function as a \emph{global} optimization problem, a line of work, including \texttt{PEVI} \citep{jin2021pessimism}, \texttt{VAPVI} \citep{yin2022nearoptimal}, \texttt{R-LSVI} \citep{zhang2022corruption}, and \texttt{LinPEVI-ADV} \citep{xiong2023nearly}   has considered a \emph{local} approach to implementing pessimism. In particular, these algorithms perform value iteration but subtract ``penalties'' at each state which are inversely proportional to how well the state is visited in the given dataset. Since the penalties do not necessarily have Bellman backups which are linear functions, these approaches do not directly generalize to the setting of linear Bellman complete MDPs.\footnote{Moreover, a necessary truncation step in pessimistic value iteration presents another obstacle to extending this approach to our setting.} The approach of pessimistic value iteration has also been extended to settings with nonlinear function approximation \citep{di2023pessimistic}. 

\paragraph{Additional approaches for offline RL with function approximation.} An older line of work \citep{munos2008finite,chen2019information} has studied offline RL under the stronger assumption of \emph{all policy concentrability}, meaning that the data distribution covers the state-action distribution of \emph{any} policy. These approaches proceed via variants of fitted $Q$-iteration, and therefore require approximate realizability and Bellman completeness in the setting of general function approximation. 
\cite{xie2021batch} show that the assumption of Bellman completeness can be avoided under an even stronger concentrability assumption. \cite{liu2020provably} analyzes pessimistic variants of value and policy iteration with only single-policy concentrability, under somewhat non-standard assumptions regarding completeness with respect to truncated Bellman backups. In the special case of tabular MDPs, \cite{rashidinejad2021bridging,shi2022pessimistic,xie2021policy} have focused on obtaining the optimal polynomial dependence on the various problem parameters, under single-policy concentrability. Finally, several works have considered model-based offline RL \citep{ross2012agnostic,chang2021mitigating,uehara2022pessimistic,bhardwaj2023adversarial}, which construct an estimate of all of the MDP's transitions and rewards (perhaps pessimistically) as opposed to estimating the value functions.

Finally, we mention that in a distinct setting to ours (namely, that of nonlinear dynamical systems), \cite{block2023provable} use the idea of injecting Gaussian noise into the learned policy to establish guarantees (see Definition 5.3 therein). This technique is analogous to our technique of using perturbed linear policies. 

\paragraph{Lower bounds.} \cite{wang2021what,amortila2020variant} show an exponential lower bound for offline RL even when the value function for \emph{any} policy is assumed to be linear in some known features, and when the distribution of the data has good coverage of \emph{all} feature directions. \cite{zanette2021exponential} shows a similar exponential lower bound, but which is stronger in that it holds no matter how the distribution of offline data is chosen. Finally, \cite{foster2021offline} shows a lower bound for offline RL in a nonlinear setting when the stronger assumption of \emph{concentrability} is made. Taken together, these results may be seen as motivating the assumption of Bellman completeness: when only realizability (as well as an appropriate coverage or concentrability notion) is assumed, little is possible.

\rlc{}

\rlc{}

\rlc{}

\section{Useful lemmas}
\label{sec:lemmas}
\subsection{Concentration}

\begin{lemma}[Concentration for self-normalized process; e.g., Theorem D.3 of \cite{jin2019provably}]
  \label{lem:conc-sn-martingale}
  Fix $n \in \BN$ and let $\vep_1, \ldots, \vep_n$ be random variables which are adapted to a filtration $(\MF_i)_{0 \leq i \leq n}$. Suppose that for each $i \in [n]$, $\E[\vep_i | \MF_{i-1}] = 0$ and $\E[e^{\lambda \vep_i} | \MF_{i-1}] \leq e^{\lambda^2 \sigma^2/2}$. Suppose that $\phi_1, \ldots, \phi_n$ is a sequence which is predictable with respect to $(\MF_i)_{0 \leq i \leq n}$, i.e., $\phi_i$ is measurable with respect to $\MF_{i-1}$ for all $i \in [n]$. Suppose that $\Gamma_0 \in \BR^{d \times d}$ is positive definite, and let $\Gamma_i = \Gamma_0 + \sum_{j=1}^i \phi_j \phi_j^\t$. Then for any $\delta > 0$, with probability at least $1-\delta$,
    \begin{align}
\left\| \sum_{i=1}^n \phi_i \vep_i \right\|_{\Gamma_i^{-1}}^2 \le 2 \sigma^2 \log \left( \frac{\det(\Gamma_t)^{1/2} \det (\Gamma_0)^{-1/2} }{\delta} \right)\nonumber. 
  \end{align}
\end{lemma}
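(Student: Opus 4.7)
The plan is to prove this via the \emph{method of mixtures} from Abbasi-Yadkori--P\'al--Szepesv\'ari (2011), which is the standard route for self-normalized concentration of vector-valued martingales with sub-Gaussian increments. The statement is well-known and the proof has a clean three-part structure: construct an exponential supermartingale parameterized by a direction $\lambda \in \BR^d$, mix over $\lambda$ with a Gaussian prior whose covariance is dictated by $\Gamma_0$, then apply Markov's inequality to the mixture.

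First, write $S_i := \sum_{j=1}^i \phi_j \vep_j$ and $V_i := \sum_{j=1}^i \phi_j \phi_j^\t$, so $\Gamma_i = \Gamma_0 + V_i$. For any fixed $\lambda \in \BR^d$, define
\begin{align*}
M_i^\lambda := \exp\left( \lambda^\t S_i - \tfrac{\sigma^2}{2} \lambda^\t V_i \lambda \right).
\end{align*}
Since $\phi_i$ is $\MF_{i-1}$-measurable and $\vep_i$ is $\sigma^2$-sub-Gaussian conditional on $\MF_{i-1}$, one has $\E[\exp(\lambda^\t \phi_i \vep_i) \mid \MF_{i-1}] \leq \exp(\tfrac{\sigma^2}{2}(\lambda^\t \phi_i)^2)$, which rearranges to $\E[M_i^\lambda \mid \MF_{i-1}] \leq M_{i-1}^\lambda$. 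Hence $(M_i^\lambda)$ is a nonnegative supermartingale with $M_0^\lambda = 1$, so $\E[M_n^\lambda] \leq 1$.

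Second, I mix over $\lambda$ against the Gaussian density $h(\lambda) := \frac{\det(\Gamma_0)^{1/2}}{(2\pi \sigma^2)^{d/2}} \exp(-\tfrac{1}{2\sigma^2} \lambda^\t \Gamma_0 \lambda)$. By Fubini (everything is nonnegative) the averaged process $\bar M_n := \int M_n^\lambda\, h(\lambda)\, d\lambda$ satisfies $\E[\bar M_n] \leq 1$. The integral is a Gaussian: completing the square in the exponent $\lambda^\t S_n - \tfrac{\sigma^2}{2} \lambda^\t \Gamma_n \lambda$ (note $\sigma^2 V_n + \sigma^2 \Gamma_0 / \sigma^2$ combines the $h$ factor with $M_n^\lambda$ correctly) and using the standard Gaussian normalization yields
\begin{align*}
\bar M_n = \frac{\det(\Gamma_0)^{1/2}}{\det(\Gamma_n)^{1/2}} \exp\!\left( \tfrac{1}{2\sigma^2} \| S_n \|_{\Gamma_n^{-1}}^2 \right).
\end{align*}
I would double-check the $\sigma^2$ scaling in $h$ so that the quadratic form combines to $\tfrac{1}{2\sigma^2} \lambda^\t \Gamma_n \lambda$ and the off-diagonal $\lambda^\t S_n$ completes to $\tfrac{1}{2\sigma^2} \| S_n \|_{\Gamma_n^{-1}}^2$ after the shift.

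Third, applying Markov's inequality to the nonnegative random variable $\bar M_n$ gives $\Pr(\bar M_n \geq 1/\delta) \leq \delta$. Taking logarithms on the complementary event and rearranging produces
\begin{align*}
\| S_n \|_{\Gamma_n^{-1}}^2 \leq 2\sigma^2 \log\!\left( \frac{\det(\Gamma_n)^{1/2} \det(\Gamma_0)^{-1/2}}{\delta} \right),
\end{align*}
which is the claimed bound (reading $\Gamma_t$ in the statement as $\Gamma_n$). The only subtle step is the Gaussian integration in the second paragraph; everything else is bookkeeping. There is no obstacle specific to this paper's setting, since the result is invoked only as a black box from \cite{jin2019provably}.
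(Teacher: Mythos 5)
The paper does not prove this lemma at all; it is invoked as a black box from \cite{jin2019provably} (which in turn takes it from Abbasi-Yadkori, P\'al, and Szepesv\'ari's self-normalized bound), so there is no in-paper argument to compare against. Your method-of-mixtures proof is the standard and correct route: the supermartingale step, the Gaussian mixture, and the Markov step are all sound, and the final expression for $\bar M_n$ and the resulting bound are exactly right (with $\Gamma_t$ in the statement read as $\Gamma_n$, which is indeed a typo in the statement).

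One concrete slip, which you partially flag yourself: with $h(\lambda) \propto \exp(-\tfrac{1}{2\sigma^2}\lambda^\t \Gamma_0\lambda)$ as written, the combined exponent is $\lambda^\t S_n - \tfrac{\sigma^2}{2}\lambda^\t V_n\lambda - \tfrac{1}{2\sigma^2}\lambda^\t\Gamma_0\lambda$, and the two quadratic forms carry mismatched scalings ($\sigma^2$ versus $\sigma^{-2}$), so they do not combine into a multiple of $\lambda^\t\Gamma_n\lambda$. The correct mixing density is $h(\lambda) = \frac{\sigma^d\det(\Gamma_0)^{1/2}}{(2\pi)^{d/2}}\exp\left(-\tfrac{\sigma^2}{2}\lambda^\t\Gamma_0\lambda\right)$, i.e., a Gaussian with covariance $\sigma^{-2}\Gamma_0^{-1}$; then the quadratic form combines to $\tfrac{\sigma^2}{2}\lambda^\t\Gamma_n\lambda$ (not $\tfrac{1}{2\sigma^2}\lambda^\t\Gamma_n\lambda$ as you wrote), and completing the square at $\lambda^\st = \sigma^{-2}\Gamma_n^{-1}S_n$ yields precisely your displayed formula $\bar M_n = \det(\Gamma_0)^{1/2}\det(\Gamma_n)^{-1/2}\exp\left(\tfrac{1}{2\sigma^2}\|S_n\|_{\Gamma_n^{-1}}^2\right)$. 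With that correction the argument is complete.
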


\subsection{Projection bound}
\begin{lemma}
  \label{lem:projection-bound}
  Consider any sequence of vectors $\phi_1, \ldots, \phi_n \in \BR^d$ and a sequence of real numbers $b_1, \ldots, b_n \in \BR$, so that, for some $\ep > 0$, $|b_i| \leq \ep$  for all $i \in [n]$. Then for any $\lambda \geq 0$,
  \begin{align}
\left\| \sum_{i=1}^n b_i \phi_i \right\|_{\left( \lambda I + \sum_{i=1}^n \phi_i \phi_i^\t \right)^{-1}}^2 \leq n\ep^2\nonumber.
  \end{align}
\end{lemma}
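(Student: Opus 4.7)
\textbf{Proof plan for \cref{lem:projection-bound}.} The plan is to rewrite the left-hand side as a quadratic form in $b = (b_1,\ldots,b_n) \in \BR^n$ and then to exploit that the associated matrix is dominated by $I_n$. Let $\Phi \in \BR^{n \times d}$ denote the matrix with rows $\phi_1^\t, \ldots, \phi_n^\t$, so that $\sum_{i=1}^n b_i \phi_i = \Phi^\t b$ and $\sum_{i=1}^n \phi_i\phi_i^\t = \Phi^\t \Phi$. Writing $M := \lambda I_d + \Phi^\t \Phi$, the left-hand side of the claimed inequality is
\begin{align}
\left\| \Phi^\t b \right\|_{M^{-1}}^2 = b^\t \Phi M^{-1} \Phi^\t b. \nonumber
\end{align}

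Next I would invoke the standard ``push-through'' identity $\Phi(\lambda I_d + \Phi^\t \Phi)^{-1} = (\lambda I_n + \Phi \Phi^\t)^{-1} \Phi$, which may be verified directly by multiplying both sides on the left by $\lambda I_n + \Phi\Phi^\t$ and on the right by $\lambda I_d + \Phi^\t\Phi$. This gives
\begin{align}
\Phi M^{-1} \Phi^\t = (\lambda I_n + \Phi\Phi^\t)^{-1} \Phi \Phi^\t. \nonumber
\end{align}
Since $\Phi\Phi^\t$ is PSD with eigenvalues $\mu_1, \ldots, \mu_n \geq 0$, diagonalizing shows that $(\lambda I_n + \Phi\Phi^\t)^{-1} \Phi \Phi^\t$ has eigenvalues $\mu_i/(\lambda + \mu_i) \leq 1$, so $\Phi M^{-1} \Phi^\t \preceq I_n$.

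Combining these observations yields
\begin{align}
b^\t \Phi M^{-1} \Phi^\t b \leq \| b \|_2^2 = \sum_{i=1}^n b_i^2 \leq n \ep^2, \nonumber
\end{align}
where the last inequality uses the assumption $|b_i| \leq \ep$. The argument is essentially a one-line linear algebra computation; there is no substantial obstacle, the only subtlety being the push-through identity (which, if $\lambda = 0$, requires that $\Phi$ have full column rank so that $M$ is invertible; this can be handled by a limiting argument or by noting the result is trivially true whenever the left-hand side is finite).
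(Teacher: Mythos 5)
Your argument is correct. Note that the paper itself states \cref{lem:projection-bound} in its appendix of useful lemmas \emph{without any proof}, treating it as a standard fact, so there is no in-paper argument to compare against; your write-up in fact supplies the missing derivation. The push-through identity $\Phi(\lambda I_d + \Phi^\t\Phi)^{-1} = (\lambda I_n + \Phi\Phi^\t)^{-1}\Phi$ is applied correctly, the matrix $\Phi M^{-1}\Phi^\t = (\lambda I_n + \Phi\Phi^\t)^{-1}\Phi\Phi^\t$ is indeed symmetric (the two factors commute, being functions of $\Phi\Phi^\t$) with eigenvalues $\mu_i/(\lambda+\mu_i) \le 1$, and the conclusion $b^\t \Phi M^{-1}\Phi^\t b \le \|b\|_2^2 \le n\ep^2$ follows. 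Your handling of the $\lambda = 0$ edge case is also appropriate: $\sum_i b_i\phi_i$ lies in the range of $\Phi^\t\Phi$, so the quantity is well-defined via the pseudoinverse and is the limit as $\lambda \downarrow 0$. For reference, the more common route in the RL literature avoids the push-through identity: writing $v = \sum_i b_i\phi_i$ and $\Lambda = \lambda I + \sum_i \phi_i\phi_i^\t$, one bounds $\|v\|_{\Lambda^{-1}}^2 = \sum_i b_i\,\phi_i^\t\Lambda^{-1}v \le \ep\sqrt{n}\bigl(\sum_i (\phi_i^\t\Lambda^{-1}v)^2\bigr)^{1/2} \le \ep\sqrt{n}\,\|v\|_{\Lambda^{-1}}$ by Cauchy--Schwarz and $\sum_i\phi_i\phi_i^\t \preceq \Lambda$, then divides through; both proofs are equally short and yield the same constant.
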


\subsection{Performance difference lemma}
\begin{lemma}[Performance difference lemma; \cite{kakade2002approximately}]\label{lem:perf-diff}
For any MDP $M$, policies $\pi,\pi'\in\Pi$, it holds that
\[\E\sups{M,\pi}\left[\sum_{h=1}^H r_h(x_h,a_h)\right] - \E\sups{M,\pi'}\left[\sum_{h=1}^H r_h(x_h,a_h)\right] = \sum_{h=1}^H \E\sups{M,\pi'}\left[V\sups{M,\pi}_h(x_h) -  Q\sups{M,\pi}_h(x_h,a_h)\right].\]
\end{lemma}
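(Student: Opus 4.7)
The plan is to start from the right-hand side and telescope, using the Bellman recursion for $Q^{M,\pi}_h$ to convert differences of value functions into a cumulative reward plus a boundary term. Concretely, I will rewrite each summand via
\[
V_h^{M,\pi}(x_h) - Q_h^{M,\pi}(x_h,a_h) = V_h^{M,\pi}(x_h) - r_h(x_h,a_h) - \E_{x_{h+1}\sim P_h(x_h,a_h)}\!\left[V_{h+1}^{M,\pi}(x_{h+1})\right],
\]
which is the standard Bellman identity for the $Q$-value of policy $\pi$ (noted right after the definition of Bellman backup in \cref{sec:prelim}). Summing this over $h \in [H]$ inside the expectation under $\pi'$ produces three pieces: a ``positive'' sum of $V^{M,\pi}_h(x_h)$, the negative total reward under $\pi'$, and a ``negative'' sum of the next-step values $V^{M,\pi}_{h+1}(x_{h+1})$.

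Next I will collapse the first and third pieces by the tower property of conditional expectations (the usual telescoping argument): under $\pi'$, the conditional distribution of $x_{h+1}$ given $(x_h,a_h)$ is $P_h^M(\cdot\mid x_h,a_h)$, so
\[
\E^{M,\pi'}\!\left[\E_{x_{h+1}\sim P_h(x_h,a_h)}\!\left[V_{h+1}^{M,\pi}(x_{h+1})\right]\right] = \E^{M,\pi'}\!\left[V_{h+1}^{M,\pi}(x_{h+1})\right].
\]
Re-indexing the summation then leaves only the boundary terms $\E^{M,\pi'}[V_1^{M,\pi}(x_1)] - \E^{M,\pi'}[V_{H+1}^{M,\pi}(x_{H+1})]$. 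Since $x_1$ is the (deterministic) initial state, the first boundary term equals $V_1^{M,\pi}(x_1) = \E^{M,\pi}[\sum_{h=1}^H r_h(x_h,a_h)]$, and by the convention that value functions vanish at step $H+1$ the second boundary term is zero.

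Putting these together, the right-hand side of the claimed identity equals
\[
V_1^{M,\pi}(x_1) - \E^{M,\pi'}\!\left[\sum_{h=1}^H r_h(x_h,a_h)\right] = \E^{M,\pi}\!\left[\sum_{h=1}^H r_h(x_h,a_h)\right] - \E^{M,\pi'}\!\left[\sum_{h=1}^H r_h(x_h,a_h)\right],
\]
which is exactly the left-hand side. There is no real obstacle here: the argument is a purely algebraic telescoping that relies only on the Bellman equation for $Q^{M,\pi}_h$, the boundary convention $V_{H+1}\equiv 0$, and the tower property under $\pi'$. The only point requiring mild care is to ensure that when we take the expectation over the $\pi'$-trajectory, the inner expectation $\E_{x_{h+1}\sim P_h(x_h,a_h)}[V_{h+1}^{M,\pi}(x_{h+1})]$ is correctly identified with $\E^{M,\pi'}[V_{h+1}^{M,\pi}(x_{h+1})\mid x_h,a_h]$, which follows because the transition kernel is shared by both policies.
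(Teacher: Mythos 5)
Your proof is correct. The paper states this lemma without proof, deferring to the cited reference, and your argument — expanding $Q_h^{M,\pi}(x_h,a_h)$ via the Bellman identity, applying the tower property under the $\pi'$-trajectory (valid because the two policies share the transition kernel), telescoping, and using the conventions $V_{H+1}^{M,\pi}\equiv 0$ and deterministic initial state $x_1$ — is exactly the standard derivation that the citation refers to.
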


\section{Bellman restricted closedness}
\label{sec:brc}

In this section, we show that linear Bellman completeness does not, in general, imply that Bellman restricted closedness holds, even when the policy class is restricted to be softmax policies. We first make the requisite definitions. 
\begin{defn}[Softmax policy class]
  \label{def:softmax}
  Given feature mappings $(\phi_h : \MX \times \MA \to \BR^d)_{h \in [H]}$, the associated \emph{softmax policy class} $\Pisoft$ consists of the set of all policies $\pi = (\pi_1, \ldots, \pi_H)$, for which there is some $\eta > 0$ and $w_1, \ldots, w_H \in \BR^d$ so that for all $h \in [H]$ and $x \in \MX$, 
  \begin{align}
\pi_h(a|x) = \frac{ \exp(\eta \cdot \lng \phi_h(x,a), w_h \rng)}{\sum_{a' \in \MA} \exp(\eta \cdot \lng \phi_h(x,a'), w_h \rng)}\nonumber.
  \end{align}
  If $\pi_h$ satisfies the above display, we write $\pi_h = \pisoft_h[{w_h, \eta}]$. 
\end{defn}

\begin{defn}[Bellman restricted closedness]
  \label{def:brc}
  An MDP $M$ is said to satisfy \emph{Bellman restricted closendess} with respect to $d$-dimensional feature mappings $(\phi_h)_{h \in [H]}$ for a policy class $\Pi'$ if for each $\pi \in \Pi'$, there are mappings $\MT_h^\pi : \BR^d \to \BR^d$ so that the following holds for each $h \in [H]$:
  \begin{align}
\sup_{w \in \BR^d} \sup_{(x,a) \in \MX \times \MA} \left|\lng \phi_h(x,a), \MT_h^\pi w \rng - \E_{x' \sim P_h(x,a)} \left[r_{h}(x, a) + \lng \phi_{h+1}(x', \pi_{h+1}(x')), w \rng \right] \right| = 0\nonumber.
  \end{align}
\end{defn}

\begin{lemma}
  \label{lem:lbc-not-brc}
There is an MDP with $H=2$, $A = 3$ together with feature mappings in $d=1$ dimension which satisfies linear Bellman completeness (i.e., \cref{asm:ibe} with $\epbe = 0$) but not Bellman restricted closedness (\cref{def:brc}) with respect to the softmax policy class $\Pisoft$. 
\end{lemma}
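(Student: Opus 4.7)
The plan is to exhibit an explicit toy MDP with $H=2$, $|\MA|=3$, and scalar features ($d=1$) for which the $\max$-based Bellman backup in \cref{eq:ibe-def} is \emph{exactly} linear in $\phi_1$, while the softmax-weighted backup is not. The design idea is to make the step-$2$ feature set at each successor state symmetric around $0$, so that the operator $\theta\mapsto\max_{a'}\phi_2(x',a')\cdot\theta$ collapses to $c_{x'}|\theta|$ for a state-dependent scale $c_{x'}$, and then arrange the scales $c_{x'}$ to be proportional to the step-$1$ feature of the action reaching $x'$. The softmax average, in contrast, is a transcendental function of $c_{x'}$ that is not homogeneous in $c_{x'}$, which is precisely what we exploit.

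Concretely, I would take a single initial state $\mf s_1$ at step $1$ with three actions $a_0,a_1,a_2$ having scalar features $\phi_1(\mf s_1,a_i)=(i+1)/3$, and three step-$2$ successor states $y_0,y_1,y_2$, with action $a_i$ transitioning deterministically from $\mf s_1$ to $y_i$. At each $y_i$ I set $\phi_2(y_i,a_0)=-(i+1)/3$, $\phi_2(y_i,a_1)=0$, $\phi_2(y_i,a_2)=(i+1)/3$, and take all rewards to be zero. One easily checks that $\MB_1=\MB_2=[-1,1]$ and that \cref{asm:boundedness} holds.

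For linear Bellman completeness I would observe that for any $\theta\in[-1,1]$ the step-$1$ backup at $(\mf s_1,a_i)$ equals $\max_{a'}\phi_2(y_i,a')\cdot\theta=\tfrac{i+1}{3}|\theta|=\phi_1(\mf s_1,a_i)\cdot|\theta|$, so the choice $\MT_1\theta:=|\theta|$ (together with the vacuous $\MT_2\theta:=0$, since $\phi_3\equiv 0$ and $r_2\equiv 0$) satisfies \cref{eq:ibe-def} with $\epbe=0$. To refute $\Pisoft$-Bellman restricted closedness I would fix one softmax policy $\pi_2=\pisoft_2[{w,\eta}]$ with $\eta w\neq 0$ and compute
\[
\phi_2(y_i,\pi_2(y_i)) \;=\; \frac{i+1}{3}\cdot g\!\left(\frac{(i+1)\eta w}{3}\right), \qquad g(s):=\frac{2\sinh s}{2\cosh s+1}.
\]
For the Bellman backup $\theta\mapsto\phi_2(y_i,\pi_2(y_i))\cdot\theta$ to be linear in $\phi_1(\mf s_1,a_i)=(i+1)/3$ would require $g((i+1)\eta w/3)$ to be independent of $i$; but a short calculation gives $g'(s)=(4+2\cosh s)/(2\cosh s+1)^2>0$, so $g$ is strictly increasing on $(0,\infty)$ with $g(0)=0$, whence $g(\eta w/3)$, $g(2\eta w/3)$, $g(\eta w)$ are pairwise distinct for any $\eta w\neq 0$. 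Picking any $\theta\neq 0$ then exhibits a Bellman-backup value that is not of the form $\phi_1(\mf s_1,a_i)\cdot\kappa$ for a single scalar $\kappa$, precluding the existence of any $\MT_1^{\pi_2}\theta$.

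The only real difficulty lies in the design step: one must simultaneously make $\max_{a'}\phi_2(x',a')\cdot\theta$ linear in $\phi_1$ for \emph{both} signs of $\theta$ (which forces both the maximum and the minimum of the step-$2$ features to scale proportionally to $\phi_1(\mf s_1,a_i)$, hence the symmetric $\{-c,0,c\}$ pattern) while simultaneously arranging that the softmax-weighted feature expectation depends non-proportionally on that same scale (this is why $|\MA|=3$ suffices but $|\MA|=2$ would not: with two symmetric actions the softmax also scales proportionally to $c$). Once the construction is in hand, every verification is elementary one-variable calculus.
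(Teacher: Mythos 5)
Your proof is correct and follows the same basic strategy as the paper's: a one-dimensional toy instance in which the greedy ($\max$-based) backup of any linear function is exactly linear, while the softmax-weighted backup is not. The mechanisms differ slightly, though. The paper uses two step-$2$ states with \emph{identical} step-$1$ features and the action-feature multisets $\{1,0,-1\}$ and $\{1,1,-1\}$: the max backup is $|w|$ at both states, but the softmax average is sensitive to the duplicated action, so no single $\MT_1^\pi$ can serve both states. You instead exploit \emph{non-homogeneity}: with symmetric feature sets $\{-c,0,c\}$ the max backup scales linearly in $c$ (matching the step-$1$ feature $(i+1)/3$), whereas the softmax average is $c\,g(\eta c w)$ with $g$ strictly increasing and $g(0)=0$, which is not proportional to $c$. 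Your computation of $g'(s)=(4+2\cosh s)/(2\cosh s+1)^2>0$ is correct, and your remark that $|\MA|=2$ would not suffice (two symmetric actions make the softmax average an odd, scale-proportional function) is a nice observation that the paper does not make explicit. The only loose end is bookkeeping: \cref{asm:ibe} quantifies over \emph{all} state-action pairs at step $1$, so you must also assign step-$1$ features and transitions to the states $y_i$ (e.g., set $\phi_1(y_i,\cdot)=0$ and route them to an all-zero-feature successor); this is trivial and does not affect the argument, and the paper avoids it by using self-loops on a two-state space.
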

\begin{proof}
  Consider the MDP with $H=2, \MA = \{1,2,3\}, d = 1$, $\MX = \{ \mf s_1, \mf s_2 \}$, and
  \begin{align}
    \phi_1(x, a) = 1 \quad \forall x \in \MX, a \in \MA \nonumber\\
    \phi_2(\mf s_1, 1) = 1,\  \phi_2(\mf s_1, 2) = 0,\ \phi_2(\mf s_1, 3) = -1 \nonumber\\
    \phi_2(\mf s_2, 1) = 1,\ \phi_2(\mf s_2, 2) = 1, \phi_2(\mf s_2, 3) = -1\nonumber.
  \end{align}
  All rewards are 0. The transitions are as follows: for any $a \in \MA$, $x \in \MX$, $(x,a)$ transitions to $x$ at step 1. By defining $\MT_1 w = |w|$, we may ensure that linear Bellman completeness holds with respect to the above feature mappings: indeed, for each $x \in \MX$, $\max_{a \in \MA} w \cdot \phi_2(x,a) = |w| = \lng \phi_1(x, a'), \MT_1 w \rng$ for all $a' \in \MX$.

  Now let $\pi_2 := \pisoft_2[1, 1]$ (see \cref{def:softmax}) and $w = 1$. Then
  \begin{align}
    \lng \phi_2(\mf s_1, \pi_2(\mf s_1)), w \rng =& \frac{e - e^{-1}}{e + 1 + e^{-1}}\nonumber\\
    \lng \phi_2(\mf s_2, \pi_2(\mf s_2)), w \rng =& \frac{2e}{2e + e^{-1}}\nonumber.
  \end{align}
  Since $\frac{e-e^{-1}}{e+1+e^{-1}} \neq \frac{2e}{2e + e^{-1}}$, and $\phi_1(\mf s_1, a) = \phi_1(\mf s_2, a)$ for all $a$, Bellman restricted closedness cannot hold (even up to constant approximation error). 
\end{proof}

\arxiv{\bibliographystyle{alpha}}
\arxiv{\bibliography{lbc.bib}}

\newcommand{\etalchar}[1]{$^{#1}$}
\begin{thebibliography}{YDWW22}

\bibitem[AJX20]{amortila2020variant}
Philip Amortila, Nan Jiang, and Tengyang Xie.
\newblock A variant of the wang-foster-kakade lower bound for the discounted
  setting, 2020.

\bibitem[ASM06]{antos2008learning}
Andr\'{a}s Antos, Csaba Szepesv\'{a}ri, and R\'{e}mi Munos.
\newblock Learning near-optimal policies with bellman-residual minimization
  based fitted policy iteration and a single sample path.
\newblock In {\em Proceedings of the 19th Annual Conference on Learning
  Theory}, COLT'06, page 574–588, Berlin, Heidelberg, 2006. Springer-Verlag.

\bibitem[BJP{\etalchar{+}}23]{block2023provable}
Adam Block, Ali Jadbabaie, Daniel Pfrommer, Max Simchowitz, and Russ Tedrake.
\newblock Provable guarantees for generative behavior cloning: Bridging
  low-level stability and high-level behavior.
\newblock In {\em Thirty-seventh Conference on Neural Information Processing
  Systems}, 2023.

\bibitem[BXB{\etalchar{+}}23]{bhardwaj2023adversarial}
Mohak Bhardwaj, Tengyang Xie, Byron Boots, Nan Jiang, and Ching-An Cheng.
\newblock Adversarial model for offline reinforcement learning.
\newblock In {\em Thirty-seventh Conference on Neural Information Processing
  Systems}, 2023.

\bibitem[CJ19]{chen2019information}
Jinglin Chen and Nan Jiang.
\newblock Information-theoretic considerations in batch reinforcement learning.
\newblock In Kamalika Chaudhuri and Ruslan Salakhutdinov, editors, {\em
  Proceedings of the 36th International Conference on Machine Learning},
  volume~97 of {\em Proceedings of Machine Learning Research}, pages
  1042--1051. PMLR, 09--15 Jun 2019.

\bibitem[CJ22]{chen2022offline}
Jinglin Chen and Nan Jiang.
\newblock Offline reinforcement learning under value and density-ratio
  realizability: The power of gaps.
\newblock In {\em The 38th Conference on Uncertainty in Artificial
  Intelligence}, 2022.

\bibitem[CUS{\etalchar{+}}21]{chang2021mitigating}
Jonathan~Daniel Chang, Masatoshi Uehara, Dhruv Sreenivas, Rahul Kidambi, and
  Wen Sun.
\newblock Mitigating covariate shift in imitation learning via offline data
  with partial coverage.
\newblock In A.~Beygelzimer, Y.~Dauphin, P.~Liang, and J.~Wortman Vaughan,
  editors, {\em Advances in Neural Information Processing Systems}, 2021.

\bibitem[CXJA22]{cheng2022adversarially}
Ching-An Cheng, Tengyang Xie, Nan Jiang, and Alekh Agarwal.
\newblock Adversarially trained actor critic for offline reinforcement
  learning.
\newblock In Kamalika Chaudhuri, Stefanie Jegelka, Le~Song, Csaba Szepesvari,
  Gang Niu, and Sivan Sabato, editors, {\em Proceedings of the 39th
  International Conference on Machine Learning}, volume 162 of {\em Proceedings
  of Machine Learning Research}, pages 3852--3878. PMLR, 17--23 Jul 2022.

\bibitem[DKWY20]{du2020is}
Simon~S. Du, Sham~M. Kakade, Ruosong Wang, and Lin~F. Yang.
\newblock Is a good representation sufficient for sample efficient
  reinforcement learning?
\newblock In {\em International Conference on Learning Representations}, 2020.

\bibitem[DZHG23]{di2023pessimistic}
Qiwei Di, Heyang Zhao, Jiafan He, and Quanquan Gu.
\newblock Pessimistic nonlinear least-squares value iteration for offline
  reinforcement learning, 2023.

\bibitem[FKSX21]{foster2021offline}
Dylan~J. Foster, Akshay Krishnamurthy, David Simchi{-}Levi, and Yunzong Xu.
\newblock Offline reinforcement learning: Fundamental barriers for value
  function approximation.
\newblock {\em CoRR}, abs/2111.10919, 2021.

\bibitem[FMP19]{fujimoto2019offpolicy}
Scott Fujimoto, David Meger, and Doina Precup.
\newblock Off-policy deep reinforcement learning without exploration.
\newblock In Kamalika Chaudhuri and Ruslan Salakhutdinov, editors, {\em
  Proceedings of the 36th International Conference on Machine Learning},
  volume~97 of {\em Proceedings of Machine Learning Research}, pages
  2052--2062. PMLR, 09--15 Jun 2019.

\bibitem[GM24]{golowich2024linear}
Noah Golowich and Ankur Moitra.
\newblock Linear bellman completeness suffices for efficient online
  reinforcement learning with few actions.
\newblock In {\em The Thirty-Seventh Annual Conference on Learning Theory}.
  PMLR, 2024.

\bibitem[GNOP23]{gabbianelli2023offline}
Germano Gabbianelli, Gergely Neu, Nneka Okolo, and Matteo Papini.
\newblock Offline primal-dual reinforcement learning for linear {MDP}s.
\newblock In {\em Sixteenth European Workshop on Reinforcement Learning}, 2023.

\bibitem[Haz17]{hazan2016introduction}
Elad Hazan.
\newblock {\em Introduction to Online Convex Optimization}.
\newblock Foundations and Trends in Optimization. Now, Boston, 2017.

\bibitem[JH20]{jiang2020minimax}
Nan Jiang and Jiawei Huang.
\newblock Minimax value interval for off-policy evaluation and policy
  optimization.
\newblock In {\em Proceedings of the 34th International Conference on Neural
  Information Processing Systems}, NIPS'20, Red Hook, NY, USA, 2020. Curran
  Associates Inc.

\bibitem[Jia23]{jiang2023}
Nan Jiang.
\newblock What's the right notion of coverage in linear mdp?
\newblock In {\em X (Twitter)}, page
  https://twitter.com/nanjiang\_cs/status/1666994607073230848, 2023.

\bibitem[JYW21]{jin2021pessimism}
Ying Jin, Zhuoran Yang, and Zhaoran Wang.
\newblock Is pessimism provably efficient for offline rl?
\newblock In Marina Meila and Tong Zhang, editors, {\em Proceedings of the 38th
  International Conference on Machine Learning}, volume 139 of {\em Proceedings
  of Machine Learning Research}, pages 5084--5096. PMLR, 18--24 Jul 2021.

\bibitem[JYWJ20a]{jin2020provably}
Chi Jin, Zhuoran Yang, Zhaoran Wang, and Michael~I Jordan.
\newblock Provably efficient reinforcement learning with linear function
  approximation.
\newblock In {\em Conference on Learning Theory}, pages 2137--2143. PMLR, 2020.

\bibitem[JYWJ20b]{jin2019provably}
Chi Jin, Zhuoran Yang, Zhaoran Wang, and Michael~I Jordan.
\newblock Provably efficient reinforcement learning with linear function
  approximation.
\newblock In Jacob Abernethy and Shivani Agarwal, editors, {\em Proceedings of
  Thirty Third Conference on Learning Theory}, volume 125 of {\em Proceedings
  of Machine Learning Research}, pages 2137--2143. PMLR, 09--12 Jul 2020.

\bibitem[KL02]{kakade2002approximately}
Sham Kakade and John Langford.
\newblock Approximately optimal approximate reinforcement learning.
\newblock In {\em Proceedings of the Nineteenth International Conference on
  Machine Learning}, pages 267--274, 2002.

\bibitem[LM00]{laurent2000adaptive}
B.~Laurent and P.~Massart.
\newblock {Adaptive estimation of a quadratic functional by model selection}.
\newblock {\em The Annals of Statistics}, 28(5):1302 -- 1338, 2000.

\bibitem[LP03]{lagoudakis2003least}
Michail~G. Lagoudakis and Ronald Parr.
\newblock Least-squares policy iteration.
\newblock {\em J. Mach. Learn. Res.}, 4(null):1107–1149, dec 2003.

\bibitem[LS20]{lattimore2020bandit}
Tor Lattimore and Csaba Szepesvári.
\newblock {\em Bandit Algorithms}.
\newblock Cambridge University Press, 2020.

\bibitem[LSAB20]{liu2020provably}
Yao Liu, Adith Swaminathan, Alekh Agarwal, and Emma Brunskill.
\newblock Provably good batch reinforcement learning without great exploration.
\newblock In {\em Proceedings of the 34th International Conference on Neural
  Information Processing Systems}, NIPS'20, Red Hook, NY, USA, 2020. Curran
  Associates Inc.

\bibitem[MS08]{munos2008finite}
R{{\'e}}mi Munos and Csaba Szepesv{{\'a}}ri.
\newblock Finite-time bounds for fitted value iteration.
\newblock {\em Journal of Machine Learning Research}, 9(27):815--857, 2008.

\bibitem[Mun05]{munos2005error}
R\'{e}mi Munos.
\newblock Error bounds for approximate value iteration.
\newblock In {\em Proceedings of the 20th National Conference on Artificial
  Intelligence - Volume 2}, AAAI'05, page 1006–1011. AAAI Press, 2005.

\bibitem[NTA23]{nguyen2023on}
Thanh Nguyen-Tang and Raman Arora.
\newblock On sample-efficient offline reinforcement learning: Data diversity,
  posterior sampling and beyond.
\newblock In {\em Thirty-seventh Conference on Neural Information Processing
  Systems}, 2023.

\bibitem[OPZZ23]{ozdaglar2023revisiting}
Asuman~E. Ozdaglar, Sarath Pattathil, Jiawei Zhang, and Kaiqing Zhang.
\newblock Revisiting the linear-programming framework for offline {RL} with
  general function approximation.
\newblock In Andreas Krause, Emma Brunskill, Kyunghyun Cho, Barbara Engelhardt,
  Sivan Sabato, and Jonathan Scarlett, editors, {\em Proceedings of the 40th
  International Conference on Machine Learning}, volume 202 of {\em Proceedings
  of Machine Learning Research}, pages 26769--26791. PMLR, 23--29 Jul 2023.

\bibitem[RB12]{ross2012agnostic}
St\'{e}phane Ross and J.~Andrew Bagnell.
\newblock Agnostic system identification for model-based reinforcement
  learning.
\newblock In {\em Proceedings of the 29th International Coference on
  International Conference on Machine Learning}, ICML'12, page 1905–1912,
  Madison, WI, USA, 2012. Omnipress.

\bibitem[RZM{\etalchar{+}}21]{rashidinejad2021bridging}
Paria Rashidinejad, Banghua Zhu, Cong Ma, Jiantao Jiao, and Stuart Russell.
\newblock Bridging offline reinforcement learning and imitation learning: A
  tale of pessimism.
\newblock In M.~Ranzato, A.~Beygelzimer, Y.~Dauphin, P.S. Liang, and J.~Wortman
  Vaughan, editors, {\em Advances in Neural Information Processing Systems},
  volume~34, pages 11702--11716. Curran Associates, Inc., 2021.

\bibitem[RZY{\etalchar{+}}23]{rashidinejad2023optimal}
Paria Rashidinejad, Hanlin Zhu, Kunhe Yang, Stuart Russell, and Jiantao Jiao.
\newblock Optimal conservative offline {RL} with general function approximation
  via augmented lagrangian.
\newblock In {\em The Eleventh International Conference on Learning
  Representations}, 2023.

\bibitem[SLW{\etalchar{+}}22]{shi2022pessimistic}
Laixi Shi, Gen Li, Yuting Wei, Yuxin Chen, and Yuejie Chi.
\newblock Pessimistic q-learning for offline reinforcement learning: Towards
  optimal sample complexity.
\newblock In Kamalika Chaudhuri, Stefanie Jegelka, Le~Song, Csaba Szepesvari,
  Gang Niu, and Sivan Sabato, editors, {\em Proceedings of the 39th
  International Conference on Machine Learning}, volume 162 of {\em Proceedings
  of Machine Learning Research}, pages 19967--20025. PMLR, 17--23 Jul 2022.

\bibitem[TvR96]{tsitsiklis1996feature}
John~N. Tsitsiklis and Benjamin van Roy.
\newblock Feature-based methods for large scale dynamic programming.
\newblock {\em Mach. Learn.}, 22(1–3):59–94, jan 1996.

\bibitem[UKLS23]{uehara2023offline}
Masatoshi Uehara, Nathan Kallus, Jason~D. Lee, and Wen Sun.
\newblock Offline minimax soft-q-learning under realizability and partial
  coverage.
\newblock In {\em Thirty-seventh Conference on Neural Information Processing
  Systems}, 2023.

\bibitem[US22]{uehara2022pessimistic}
Masatoshi Uehara and Wen Sun.
\newblock Pessimistic model-based offline reinforcement learning under partial
  coverage.
\newblock In {\em International Conference on Learning Representations}, 2022.

\bibitem[Wai19]{wainwright2019}
Martin~J. Wainwright.
\newblock {\em High-Dimensional Statistics: A Non-Asymptotic Viewpoint}.
\newblock Cambridge Series in Statistical and Probabilistic Mathematics.
  Cambridge University Press, 2019.

\bibitem[WFK21]{wang2021what}
Ruosong Wang, Dean Foster, and Sham~M. Kakade.
\newblock What are the statistical limits of offline {RL} with linear function
  approximation?
\newblock In {\em International Conference on Learning Representations}, 2021.

\bibitem[WL21]{wei2021nonstationary}
Chen-Yu Wei and Haipeng Luo.
\newblock Non-stationary reinforcement learning without prior knowledge: an
  optimal black-box approach.
\newblock In Mikhail Belkin and Samory Kpotufe, editors, {\em Proceedings of
  Thirty Fourth Conference on Learning Theory}, volume 134 of {\em Proceedings
  of Machine Learning Research}, pages 4300--4354. PMLR, 15--19 Aug 2021.

\bibitem[XCJ{\etalchar{+}}21]{xie2021bellman}
Tengyang Xie, Ching-An Cheng, Nan Jiang, Paul Mineiro, and Alekh Agarwal.
\newblock Bellman-consistent pessimism for offline reinforcement learning.
\newblock In M.~Ranzato, A.~Beygelzimer, Y.~Dauphin, P.S. Liang, and J.~Wortman
  Vaughan, editors, {\em Advances in Neural Information Processing Systems},
  volume~34, pages 6683--6694. Curran Associates, Inc., 2021.

\bibitem[XJ21]{xie2021batch}
Tengyang Xie and Nan Jiang.
\newblock Batch value-function approximation with only realizability.
\newblock In Marina Meila and Tong Zhang, editors, {\em Proceedings of the 38th
  International Conference on Machine Learning}, volume 139 of {\em Proceedings
  of Machine Learning Research}, pages 11404--11413. PMLR, 18--24 Jul 2021.

\bibitem[XJW{\etalchar{+}}21]{xie2021policy}
Tengyang Xie, Nan Jiang, Huan Wang, Caiming Xiong, and Yu~Bai.
\newblock Policy finetuning: Bridging sample-efficient offline and online
  reinforcement learning.
\newblock In A.~Beygelzimer, Y.~Dauphin, P.~Liang, and J.~Wortman Vaughan,
  editors, {\em Advances in Neural Information Processing Systems}, 2021.

\bibitem[XMW19]{xie2019towards}
Tengyang Xie, Yifei Ma, and Yu-Xiang Wang.
\newblock {\em Towards Optimal Off-Policy Evaluation for Reinforcement Learning
  with Marginalized Importance Sampling}.
\newblock Curran Associates Inc., Red Hook, NY, USA, 2019.

\bibitem[XZS{\etalchar{+}}23]{xiong2023nearly}
Wei Xiong, Han Zhong, Chengshuai Shi, Cong Shen, Liwei Wang, and Tong Zhang.
\newblock Nearly minimax optimal offline reinforcement learning with linear
  function approximation: Single-agent {MDP} and markov game.
\newblock In {\em The Eleventh International Conference on Learning
  Representations}, 2023.

\bibitem[YDWW22]{yin2022nearoptimal}
Ming Yin, Yaqi Duan, Mengdi Wang, and Yu{-}Xiang Wang.
\newblock Near-optimal offline reinforcement learning with linear
  representation: Leveraging variance information with pessimism.
\newblock In {\em The Tenth International Conference on Learning
  Representations, {ICLR} 2022, Virtual Event, April 25-29, 2022}.
  OpenReview.net, 2022.

\bibitem[YTY{\etalchar{+}}20]{yu2020model}
Tianhe Yu, Garrett Thomas, Lantao Yu, Stefano Ermon, James Zou, Sergey Levine,
  Chelsea Finn, and Tengyu Ma.
\newblock Mopo: Model-based offline policy optimization.
\newblock In {\em Proceedings of the 34th International Conference on Neural
  Information Processing Systems}, NIPS'20, Red Hook, NY, USA, 2020. Curran
  Associates Inc.

\bibitem[Zan21]{zanette2021exponential}
Andrea Zanette.
\newblock Exponential lower bounds for batch reinforcement learning: Batch rl
  can be exponentially harder than online rl.
\newblock In Marina Meila and Tong Zhang, editors, {\em Proceedings of the 38th
  International Conference on Machine Learning}, volume 139 of {\em Proceedings
  of Machine Learning Research}, pages 12287--12297. PMLR, 18--24 Jul 2021.

\bibitem[ZCZS22]{zhang2022corruption}
Xuezhou Zhang, Yiding Chen, Xiaojin Zhu, and Wen Sun.
\newblock Corruption-robust offline reinforcement learning.
\newblock In Gustau Camps-Valls, Francisco J.~R. Ruiz, and Isabel Valera,
  editors, {\em Proceedings of The 25th International Conference on Artificial
  Intelligence and Statistics}, volume 151 of {\em Proceedings of Machine
  Learning Research}, pages 5757--5773. PMLR, 28--30 Mar 2022.

\bibitem[ZHH{\etalchar{+}}22]{zhan2022offline}
Wenhao Zhan, Baihe Huang, Audrey Huang, Nan Jiang, and Jason Lee.
\newblock Offline reinforcement learning with realizability and single-policy
  concentrability.
\newblock In Po-Ling Loh and Maxim Raginsky, editors, {\em Proceedings of
  Thirty Fifth Conference on Learning Theory}, volume 178 of {\em Proceedings
  of Machine Learning Research}, pages 2730--2775. PMLR, 02--05 Jul 2022.

\bibitem[ZLKB20a]{zanette2020learning}
Andrea Zanette, Alessandro Lazaric, Mykel Kochenderfer, and Emma Brunskill.
\newblock Learning near optimal policies with low inherent {B}ellman error.
\newblock In Hal~Daumé III and Aarti Singh, editors, {\em Proceedings of the
  37th International Conference on Machine Learning}, volume 119 of {\em
  Proceedings of Machine Learning Research}, pages 10978--10989. PMLR, 13--18
  Jul 2020.

\bibitem[ZLKB20b]{zanette2020provably}
Andrea Zanette, Alessandro Lazaric, Mykel~J Kochenderfer, and Emma Brunskill.
\newblock Provably efficient reward-agnostic navigation with linear value
  iteration.
\newblock In H.~Larochelle, M.~Ranzato, R.~Hadsell, M.F. Balcan, and H.~Lin,
  editors, {\em Advances in Neural Information Processing Systems}, volume~33,
  pages 11756--11766. Curran Associates, Inc., 2020.

\bibitem[ZRJ23]{zhu2023importance}
Hanlin Zhu, Paria Rashidinejad, and Jiantao Jiao.
\newblock Importance weighted actor-critic for optimal conservative offline
  reinforcement learning.
\newblock In {\em Thirty-seventh Conference on Neural Information Processing
  Systems}, 2023.

\bibitem[ZWB21]{zanette2021provable}
Andrea Zanette, Martin Wainwright, and Emma Brunskill.
\newblock Provable benefits of actor-critic methods for offline reinforcement
  learning.
\newblock In A.~Beygelzimer, Y.~Dauphin, P.~Liang, and J.~Wortman Vaughan,
  editors, {\em Advances in Neural Information Processing Systems}, 2021.

\end{thebibliography}

\end{document}